\newcommand{\softbold}{\bm}
\newcommand{\hardbold}{\pmb}
\newcommand{\vars}{\softbold{V}}
\newcommand{\varsvalue}{\softbold{v}}
\newcommand{\var}{V}
\newcommand{\varvalue}{v}
\newcommand{\parents}{\softbold{U}}
\newcommand{\parentsvalue}{\softbold{u}}
\newcommand{\parentvalue}{u}
\newcommand{\intvVar}{W}
\newcommand{\intvVars}{\bm{W}}
\newcommand{\intvVarvalue}{w}
\newcommand{\features}{\softbold{X}}
\newcommand{\featuresvalue}{\softbold{x}}
\newcommand{\feature}{X}
\newcommand{\featurevalue}{x}
\newcommand{\hiddenvars}{\softbold{H}}
\newcommand{\target}{Y}
\newcommand{\prediction}{\hat{Y}}
\newcommand{\predictionvalue}{\hat{y}}
\newcommand{\edges}{\softbold{E}}
\newcommand{\bn}{\mathcal{N}}
\newcommand{\graph}{\mathcal{G}}
\newcommand{\allParam}{\softbold{\Theta}}
\newcommand{\cpt}{\softbold{\theta}}
\newcommand{\param}{\theta}
\newcommand{\ind}{\lambda}
\newcommand{\allInds}{\softbold{\lambda}}
\newcommand{\ac}{\mathcal{AC}}
\newcommand{\allParamAC}{\softbold{\Phi}}
\newcommand{\paramAC}{\phi}
\newcommand{\intvSet}{\mathcal{I}}
\newcommand{\subcircuit}{\alpha}
\newcommand{\evidence}{\bm{e}}
\newcommand{\ub}[3]{UB(#1, #2, #3)}
\newcommand{\ubn}{UB }
\newcommand{\lbn}{LB }
\newcommand{\boolcircuit}{\Sigma}
\newcommand{\cnfencoding}{\Delta}
\newcommand{\interVars}{\bm{T}}
\newcommand{\interVar}{T}
\newcommand{\prob}{p}
\newcommand{\df}{F}
\newcommand{\context}{C}
\newcommand{\Pa}{\textup{pa}}
\newcommand{\introb}{\texttt{IntRob}}
\newcommand{\struc}{structural }
\newcommand{\Struc}{Structural }
\newcommand{\augbn}{\df}
\newtheorem{definition}{Definition}
\newtheorem{lemma}{Lemma}
\newtheorem{proposition}{Proposition}
\title{Provable Guarantees on the Robustness of Decision Rules to Causal Interventions}
\author{
Benjie Wang\footnote{Equal contribution}\and
Clare Lyle\footnotemark[\value{footnote}]\And
Marta Kwiatkowska\\
\affiliations
University of Oxford\\
\emails
benjie.wang@cs.ox.ac.uk
}
\begin{document}

\maketitle

\begin{abstract}
Robustness of decision rules to shifts in the data-generating process is crucial to the successful deployment of decision-making systems. Such shifts can be viewed as interventions on a causal graph, which capture (possibly hypothetical) changes in the data-generating process, whether due to natural reasons or by the action of an adversary. We consider causal Bayesian networks and formally define the \textit{interventional robustness} problem, a novel \textit{model-based} notion of robustness for decision functions that measures worst-case performance with respect to a set of interventions that denote changes to parameters and/or causal influences. By relying on a tractable representation of Bayesian networks as arithmetic circuits, we provide efficient algorithms for computing guaranteed upper and lower bounds on the interventional robustness probabilities. Experimental results demonstrate that the methods yield useful and interpretable bounds for a range of practical networks, paving the way towards provably causally robust decision-making systems. 
\end{abstract}

\section{Introduction}
\label{sec:intro}
As algorithmic decision-making systems become widely deployed, there has been an increasing focus on their safety and robustness, particularly when they are applied to input points outside of the data distribution they were trained on. 
Much of the work in this area has focused on instance-based robustness properties of classifiers, which guarantee that the prediction does not change in some vicinity of a specific input point \citep{Shih18Formal, NarodytskaEtAl18}. However, there are many types of distribution shift that cannot be characterized by robustness against norm-bounded perturbations to individual inputs. 
Such distribution shifts are often instead characterized by causal interventions on the data-generating process \citep{Quionero-CandelaEtAl09, ZhangEtAl15, LiptonEtAl18}. These interventions give rise to a range of different environments (distributions), which can be the effect of natural shifts (e.g. different country) or actions of other agents (e.g. a hospital changing prescription policy).

To assess the impact of such interventions, we must leverage knowledge about the causal structure of the data-generating distribution. This paper concerns itself with a simple question: given a decision-making system and a posited causal model, is the system robust to a set of plausible interventions to the causal model? 
Defining and verifying such \textit{model-based }notions of robustness requires a formal representation of the decision-making system. For discrete input features and a discrete output class, regardless of how a classifier is learned, its role in decision-making can be unambiguously represented by its \textit{decision function}, mapping features to an output class. 
This observation has spurred a recent trend of applying logic for meta-reasoning about classifier properties, such as monotonicity and instance-based robustness, by \textit{compiling} the classifier into a tractable form \citep{Shih18Formal, NarodytskaEtAl18, AudemardEtAl20}, for example an ordered decision diagram. 
We extend this approach to causal modelling by combining logical representations of the decision rule and causal model, and compiling this joint representation into an arithmetic circuit, a tractable representation of probability distributions.

Our main technical contributions are as follows. First, we motivate and formalize the robustness of a decision rule with respect to interventions on a causal model, which we call the \textit{interventional robustness problem}, and characterize its complexity.
Second, we develop a \textit{joint compilation} technique which allows us to reason about a causal model and decision function simultaneously. Finally, we develop and evaluate algorithms for computing upper and lower bounds on the interventional robustness problem, enabling the verification of robustness of decision-making systems to causal interventions.

\subsubsection{Related Work}

The problem of constructing classifiers which are robust to distribution shifts has received much attention from the machine learning perspective \citep{Quionero-CandelaEtAl09, ZhangEtAl15, LiptonEtAl18}. 
Particularly relevant to our work are \textit{proactive} approaches to learning robust classifiers, which aim to produce classifiers that perform well across a range of environments (rather than a specific one) \citep{Rojas-CarullaEtAl18, SubbaswamyEtAl19}. 

A recent line of work analyses the behaviour of machine learning classifiers using symbolic and logical approaches by compiling these classifiers into suitable logical representations \citep{KatzEtAl17, NarodytskaEtAl18, ShiEtAl20}. Such representations can be used to answer a range of explanation and verification queries \citep{AudemardEtAl20} about the classifier tractably, depending on the properties of the underlying propositional language \citep{Darwiche02Map}. Our work uses this premise to tackle defining and verifying robustness to distribution shift, which involves not only the classifier but also a probablistic causal model such as a causal Bayesian network.

In the Bayesian network literature, \textit{sensitivity analysis} \citep{ChanDarwiche04} is concerned with examining the effect of (typically small) local changes in parameter values on a target probability. We are concerned with providing worst-case guarantees against a set of possible causal interventions, which can involve changing parameters in multiple CPTs, and even altering the graphical structure of the network. This requires new methods that enable scalability to these large, potentially structural intervention sets.
Our causal perspective generalizes and extends the work of \citet{qin2015differential}, considering a richer class of interventions than previous work and using this perspective to prove robustness properties of a decision function.

\section{Background and Notation}

In the rest of this paper, we use $\vars = (\features, \target, \hiddenvars)$ to denote the set of modelled variables, which includes observable features $\features$, the prediction target $\target$, and hidden variables $\hiddenvars$. We use lower case (e.g. $\featuresvalue$) to denote instantiations of variables.

\subsubsection{Decision Functions}

Consider the task of predicting $\target$ given features $\features$. Though many machine learning (ML) techniques exist for this task, once learned, the input-output behaviour of any classifier can be characterized by means of a symbolic decision function $\df$ from $\features$ to $\target$. For many important classes of ML methods, including Bayesian network classifiers, binarized neural networks, and random forests, it is possible to encode the corresponding decision function as a Boolean circuit $\boolcircuit$ \citep{AudemardEtAl20, NarodytskaEtAl18, ShihEtAl19}. Such logical encodings can then be used to reason about the behaviour of the decision function, for instance providing explanations for decisions and verifying properties.

\subsubsection{Causal Bayesian Networks}

In this paper, we are interested in robust performance of decision functions under distribution shift caused by changes in the data-generating process (DGP). In order to reason about this, we first need a \textit{causal model} of the DGP which enables such changes to be represented. We first define Bayesian networks, which are a convenient way to specify a joint distribution over the set of variables $\vars = \{\var_1, \var_2, ..., \var_n\}$:

\begin{definition} [Bayesian Network]
A (discrete) \underline{Bayesian network (BN)} $\bn$ over variables $\bm{V}$ is a pair $(\graph, \allParam)$, where $\graph = (\vars, \edges)$ is a directed acyclic graph (DAG) whose nodes correspond to the random variables $\vars$ and whose edges indicate conditional dependence, and where $\allParam$ denotes the set of conditional probability tables (CPTs) $\cpt_{\var_i|\parents_i}$ with parameters $\param_{\varvalue_i|\parentsvalue_i} = P(\var_i = \varvalue_i |\parents_i = \parentsvalue_i)$ which specify the distribution, where $\parents_i = \Pa_{\graph}(\var_i)$ are the parents of $\var_i$ in $\graph$. We will denote by $\prob_{\bn}$ the probability distribution defined by the BN $\bn$.
\end{definition}

Causal Bayesian networks (CBNs) are defined similarly to Bayesian networks, with the addition of \textit{causal}, or \textit{interventional}, semantics to the joint distribution. Intuitively, an edge $(\var, \var')$ in a causal Bayesian network indicates that $\var$ \textit{causes} $\var'$, and the CPTs correspond to causal mechanisms. An intervention can be defined to be a change to some of these mechanisms, replacing $\allParam$ with $\allParam'$.
A CBN can thus be characterized as representing a set of distributions, each of which is generated by a different intervention.

We now define a representation of a (causal) Bayesian network, called the \textit{network polynomial}, based on the seminal work of \citet{Darwiche00}. 
\begin{definition} [Network Polynomial]
The \underline{network polynomial} of causal BN $\bn$ is defined to be:
\begin{equation}
    l_{\bn}[\allInds, \allParam] = \sum_{\varvalue_1, ..., \varvalue_n} \prod_{i = 1}^{n} \ind_{\varvalue_i} \param_{\varvalue_i|\parentsvalue_i}
\end{equation}
where $\ind_{\varvalue_i}$ denotes an \emph{indicator variable} for each value $\varvalue_i$ in the support of each random variable $\var_i$, and $\param_{\varvalue_i|\parentsvalue_i}$ denotes each element of a CPT in $\allParam$. Each component of the addition $l_{\varsvalue}[\allInds, \allParam] := \prod_{i = 1}^{n} \ind_{\varvalue_i} \param_{\varvalue_i|\parentsvalue_i}$ is called a \underline{term}, and is associated with an instantiation $\varsvalue$.
\end{definition}

\subsubsection{Arithmetic circuits}

Arithmetic circuits (AC) are computational graphs used to encode probability distributions over a set of discrete variables $\bm{V}$, which can tractably answer a broad range of probabilistic queries, depending on certain structural properties (called decomposability, smoothness and determinism). They were first introduced by \citet{Darwiche00} as a means of \textit{compiling} Bayesian networks for the purposes of efficient inference. Subsequently they have been considered as objects of study in their own right, with proposals for directly learning ACs from data \citep{Lowd08Learningac} and extensions relaxing determinism \citep{Poon11Spn}. 

\begin{definition}[Arithmetic Circuit]
    An \underline{arithmetic circuit} $\ac$ over variables $\bm{V}$ and with parameters $\allParamAC$ is a rooted directed acyclic graph (DAG), whose internal nodes are labelled with with $+$ or $\times$ and whose leaf nodes are labelled with indicator variables $\ind_\varvalue$, where $\varvalue$ is the value of some variable $\var \in \vars$, or non-negative parameters $\paramAC$. 
\end{definition}

Crucially, evaluating an arithmetic circuit can be done in time linear in the size (number of edges) of the circuit. When an AC represents a probability distribution, this means that marginals can be computed efficiently.

Like Bayesian networks, arithmetic circuits can be represented as polynomials over indicator and parameter variables, based on subcircuits \citep{ChoiDarwiche17}:

\begin{restatable}[Complete Subcircuit]{definition}{defSubcircuit} 
A \underline{complete subcircuit} $\subcircuit$ of an $\ac$ is obtained by traversing the circuit top-down, choosing one child of every visited $+$-node and all children of every visited $\times$-node. The \underline{term} $term(\subcircuit)$ of $\subcircuit$ is the product of all leaf nodes visited (i.e. all indicator and parameter variables). The set of all complete subcircuits is denoted $\hardbold{\subcircuit}_{\ac}$.
\end{restatable}

\begin{definition} [AC Polynomial]
The \underline{AC polynomial} of arithmetic circuit $\mathcal{AC}$ is defined to be:
$$l_{\ac}[\allInds, \allParamAC] = \sum_{\subcircuit \in \hardbold{\subcircuit}_{\ac}} term(\subcircuit)$$
\end{definition}

\section{The Intervention Robustness Problem}

\begin{figure}

    \includegraphics[width=0.96\linewidth]{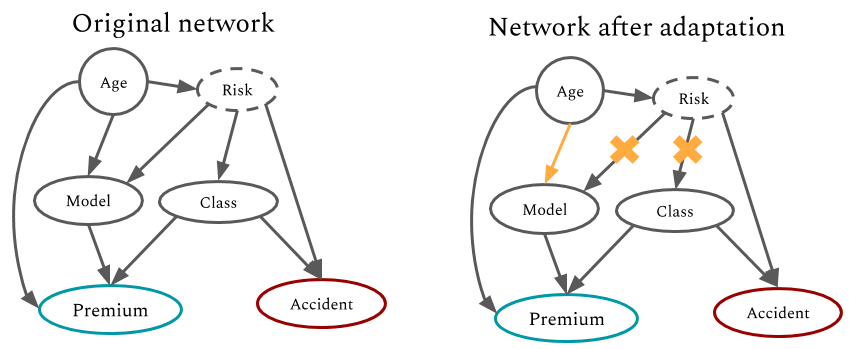}
\caption{An example causal model describing accident risk for a car insurance problem, and illustrating how strategic adaptation to a classifier can be characterized as a change to a causal model describing how the data was generated.}
\label{fig:insurance-intervention}
\end{figure}

Many distribution shifts faced by decision-making systems can be characterized by an \textit{intervention} on the data-generating process.
For example, if an insurance company gives reduced premiums to drivers who have taken a driving class, the relationship between `risk aversion' and `class' in Figure~\ref{fig:insurance-intervention} may change in response as more risk-seeking drivers take driving classes to benefit from reduced premiums. The company therefore seeks to determine whether this policy will be \textit{robust} to changes in the relationship between risk-sensitivity and driving classes before it deploys the policy.

We thus formulate an \textit{intervention robustness} problem which considers the worst-case drop in performance of a classifier in response to changes to a \emph{subset} of the causal mechanisms of the Bayesian network. This is inspired by the principle of \textit{independent causal mechanisms} (ICM) \citep{Peters17Book}, which states that causal mechanisms do not inform or influence each other; that is, even as some mechanisms are changed, other mechanisms tend to remain invariant. In the insurance example, this is reflected in that we would not necessarily expect the mechanism for 'risk aversion' or 'accident' to change, for instance.

While many related notions of robustness exist in the literature, none accurately captures this notion of robustness to causal mechanism changes. Many popular definitions of robustness measure the size of a perturbation necessary to change an input's classification, without taking into account that such perturbations may change the value which the classifier tries to predict \citep{Shih18Formal}. \citet{MillerEtAl20} highlight the connection between causal inference and robustness to distribution shifts caused by `gaming' in the \textit{strategic classification} \citep{HardtEtAl16} regime. However, \citet{MillerEtAl20} does not assume access to a known causal model, and its focus is on identifying classifiers which are robust to gaming, whereas our objective is to verify robustness to a much richer collection of distribution shifts.

\subsection{Intervention classes}

To reason about the effects of changes to a causal model, we need a formal description of these interventions. We consider interventions as \textit{actions} that modify the mechanisms of a causal Bayesian network $\bn = (\graph, \allParam)$, thereby changing its joint distribution. In particular, we consider two types of interventions: the first concerns changes to the parameters of the causal model, while the second concerns changes to the existence of cause-effect relationships themselves.

Typically, we might expect that only mechanisms for a subset of variables $\intvVars \subseteq \vars$ will change. In what follows, given a subset of variables $\intvVars \subseteq \vars$, we will use $\cpt_{\intvVars}^{(\graph)} \subseteq \allParam$ to denote the parameters associated with the CPTs for variables $\intvVar \in \intvVars$, where the parents of $\intvVar$ are given by graph $\graph$. 

\begin{definition}[Parametric Interventions]
A \underline{parametric intervention} on variables $\intvVars$ substitutes a subset of parameters $\cpt_{\intvVars}^{(\graph)}$ for new values $\cpt_{\intvVars}^{(\graph) \prime}$ obtaining a new parameter set $\allParam'$, which yields the BN:
\begin{equation}
    \bn [\cpt_{\intvVars}^{(\graph) \prime}] \coloneqq (\graph, \allParam')
\end{equation}
\end{definition}
Parametric interventions encompass the do-interventions discussed by \citet{qin2015differential}, but allow us to express more complex changes to causal mechanisms than fixing a variable to a set value. We can further consider changes not just to the parameters of the network, but also to its edge structure; such changes to a set of variables $\intvVars$ can be described by a \textit{context function} $\context_{\intvVars}: \intvVars \to \mathcal{P}(\vars)$, which replaces the parents of $\intvVar \in \intvVars$ in $\graph$ with $\context_{\intvVars}(\intvVar)$, producing a new graph $\graph'$.
We refer to such interventions as \textit{\struc} interventions.
In this work we restrict ourselves to context sets which preserve the acyclicity of the DAG.

\begin{definition}[\Struc interventions]
A \underline{\struc intervention} on variables $\intvVars$ modifies the edges $\edges$ of the graph $\graph = (\vars, \edges)$ according to a context function $\context_{\intvVars}$, obtaining a new graph $\graph' = (\vars, \edges')$, and substitutes parameters $\cpt_{\intvVars}^{(\graph)}$ for new values $\cpt_{\intvVars}^{(\graph') \prime}$, obtaining a new parameter set $\allParam'$, which yields the BN:
\begin{equation}
    \bn[\cpt_{\intvVars}^{(\graph') \prime}, \context_{\intvVars}] \coloneqq (\graph', \allParam')
\end{equation}
\end{definition}

We will often be interested in considering all of the possible interventions of a given class on some subset $\intvVars \subseteq \vars$ of the variables in the causal graph. 
Letting $\mathbb{P}_{\graph}(\intvVars)$ denote the set of valid parameter sets for $\intvVars \subseteq \vars$ in graph $\graph$, 
we will write for parametric interventions:
\begin{equation}
    \intvSet_{\bn}[\intvVars] := \{\bn[\cpt_{\intvVars}^{(\graph) \prime}] \; | \; \cpt_{\intvVars}^{(\graph) \prime} \in \mathbb{P}_{\graph}(\intvVars)\}
\end{equation}
and for \struc interventions,
\begin{equation}
    \intvSet_{\bn}[\intvVars, \context_{\intvVars}] := \{\bn[\cpt_{\intvVars}^{(\graph') \prime}, \context_{\intvVars}] \; | \; \cpt_{\intvVars}^{(\graph') \prime}\in \mathbb{P}_{\graph'}(\intvVars)\} .
\end{equation}

\subsection{Problem Definition and Complexity}
\label{sec:introb-definition}
Accurately assessing the robustness of a decision function $\df$ requires an understanding of the causal data-generating process. We propose to model this DGP using a causal Bayesian network $\bn$ on all variables $\vars$, thus enabling causal \textit{model-based} analysis of classifiers.
In order to reason about the causal structure $\bn$ and decision rule $\df$ simultaneously, we add an additional node to the CBN $\bn$.

\begin{definition} [Augmented BN]
For a CBN $\bn$ over variables $\vars$ and a classifier $\df: \features \to \target$, we define the \underline{augmented BN} $\bn_{\augbn}$ based on $\bn$ as follows: $\vars_{\augbn} = \vars \cup \{\prediction\}$ with $\Pa(\prediction) = \features$ and deterministic CPT  $ \theta_{\predictionvalue|\featuresvalue} = \mathbbm{1}[\predictionvalue = F(\featuresvalue)]$.
\end{definition}

This produces a well-defined joint distribution over the variables $\vars$ and $\prediction = \df(\features)$, which allows us to specify performance metrics as probabilities of events $\evidence$. For instance, a classifier's false positive rate can be expressed as the probability $p_{\bn_{\augbn}}(\evidence)$ of event $\evidence = (\prediction = 1) \wedge (\target = 0)$. More importantly, we can consider how these metrics change as the joint distribution changes, due to hypothetical or observed interventions on the causal model. This provides a basis for \textit{model-based} notions of robustness of decision rules.

We use intervention sets to represent all interventions which the modeller considers plausible. The \textit{interventional robustness} problem then concerns the worst-case performance of the decision rule over interventions in that set. 

\begin{definition}
Given CBN $\bn$ and decision rule $\df$, let $\intvSet_{\bn_{\augbn}}$ be an intervention set for the augmented BN $\bn_{\augbn}$, $\evidence$ be an assignment of a subset of the variables in $\vars$, and $\epsilon > 0$. The \underline{interventional robustness problem} is that of computing:
$$\introb(\intvSet_{\bn_{\augbn}}, \evidence) := \max_{\bn' \in \intvSet_{\bn_{\augbn}}} \prob_{\bn'}(\evidence)\;.$$
We also have the corresponding decision problem:
$$\introb(\intvSet_{\bn_{\augbn}}, \evidence, \epsilon) := \max_{\bn' \in \intvSet_{\bn_{\augbn}}} \prob_{\bn'}(\evidence) > \epsilon\; .$$
\end{definition}

We will be particularly interested in problem instances where $\intvSet_{\bn}$ is of the form $\intvSet_{\bn}[\intvVars]$, in which case we can view the problem instance as $\introb((\bn, \intvVars), \evidence)$. We observe that, via a reduction which we defer to the Appendix, the causal semantics of $\introb$ do not increase the computational hardness of the problem beyond that of MAP inference.

\begin{restatable}{theorem}{thmMAP}\label{thm:hardness} Let $\bn=(\graph, \allParam)$ be a causal Bayesian network, with $n$ nodes and maximal in-degree $d$.  Then an instance of MAP can be reduced to an instance of $\introb$ on a BN $\bn'$ of size linear in $|\bn|$, and of treewidth $w' \leq w + 2$. An instance of $\introb$ can be reduced to an instance of MAP on a BN $\bn'$ whose CPT $\allParam'$ has size polynomial in the size of $\allParam$, and with treewidth $w' \leq 2w$.
\end{restatable}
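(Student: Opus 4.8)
The plan is to prove both reductions by exploiting a single structural fact: for any BN $\bn'$, the quantity $\prob_{\bn'}(\evidence)$ is a \emph{multilinear} function of the entries of any one CPT. Viewing it through the network polynomial, each term contains each parameter $\param_{\varvalue|\parentsvalue}$ at most once, so with all other CPTs fixed the map $\cpt_W \mapsto \prob_{\bn'}(\evidence)$ is linear on its feasible set, which is a product of probability simplices (one per parent configuration of $W$). A linear function on a product of simplices attains its maximum at a vertex, i.e. at a \emph{deterministic} CPT. Hence for a parametric intervention set,
$$\introb(\intvSet_{\bn}[\intvVars], \evidence) = \max_{\{f_W\}_{W \in \intvVars}} \prob_{\bn[f]}(\evidence),$$
where each $f_W : \mathrm{dom}(\Pa(W)) \to \mathrm{dom}(W)$ ranges over deterministic mechanisms and $f = \{f_W\}$. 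This recasts the continuous optimisation defining $\introb$ as a discrete optimisation over functional choices — exactly the shape of a MAP problem — and the two directions then amount to encoding assignments as interventions and vice versa.

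For $\text{MAP} \le \introb$, take a MAP instance $\max_{\bm m} \prob_{\bn}(\bm m, \evidence)$ over variables $\bm M$. I would build $\bn'$ by (i) deleting the incoming edges of each $M_i$, making it a root whose CPT is the intervention target, and (ii) attaching a fresh deterministic child $D_i$ with $\Pa(D_i) = \{M_i\}\cup\parents_i$ and $\param_{D_i=1\mid m_i,\parentsvalue_i} = \param^{\bn}_{m_i\mid\parentsvalue_i}$, so that $D_i$ copies the \emph{original} conditional of $M_i$. Clamping $D_i = 1$ as extra evidence reinstates the factor $\prob_{\bn}(m_i\mid\parents_i)$ that a bare parametric intervention would discard. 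By the lemma it suffices to optimise over deterministic root CPTs, and for the choice corresponding to $\bm m$ we get $\prob_{\bn'}(\evidence, \bm D = \bm 1) = \prob_{\bn}(\bm m, \evidence)$; thus maximising over the intervention set on $\intvVars = \bm M$ returns the MAP value. The augmentation node $\prediction$ is made innocuous by taking a trivial classifier. Each gadget adds one node $D_i$ whose moral family $\{D_i, M_i\}\cup\parents_i$ extends an already-present clique by a single vertex, giving $w' \le w + 2$.

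For $\introb \le \text{MAP}$, I would realise the functional optimisation above with discrete MAP variables. For each $W \in \intvVars$ (with parents taken in the post-intervention graph $\graph'$ for \struc interventions) and each parent configuration $\bm u$, introduce a root \emph{selector} $S_{W,\bm u}$ with domain $\mathrm{dom}(W)$ and uniform prior; these are the MAP variables. I couple them so that $W$ acts as the multiplexer $W = S_{W,\Pa(W)}$: make $W$ a free root and, for each $\bm u$, add a deterministic detector $E_{W,\bm u}$ with parents $\{W\}\cup\Pa(W)\cup\{S_{W,\bm u}\}$ enforcing $\Pa(W) = \bm u \Rightarrow W = S_{W,\bm u}$, clamped via $E_{W,\bm u} = 1$. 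For any selector assignment $\bm s$ the clamped network computes exactly $\prob_{\bn[f]}(\evidence)$ for the mechanism $f$ encoded by $\bm s$, while the uniform priors contribute a fixed constant $C$ independent of $\bm s$; hence the MAP value equals $C\cdot\introb$, from which $\introb$ is recovered. The number of selectors and detectors is linear in the size of the intervened CPTs, so $\allParam'$ is polynomial in $|\allParam|$.

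The conceptual crux is the max-at-a-vertex lemma, which is what lets the continuous $\introb$ and the combinatorial MAP speak to each other; the rest is bookkeeping. The main technical obstacle I anticipate is controlling treewidth, especially in the second reduction and under \struc interventions, where $W$'s parent set changes and the gadget must be threaded into the elimination order without inflating bags. The bound to establish is that each detector can be eliminated together with its selector using a bag contained in $\{W\}\cup\Pa(W)$ plus the two gadget nodes, and that these gadget cliques never need to coexist with unrelated structure; combined with the fact that the intervened families already sit inside the width-$w$ decomposition of the combined original and post-intervention graph, this yields $w' \le 2w$. Carefully verifying this elimination order — and checking that acyclicity of $\graph'$ is preserved (selectors are roots, detectors are leaves) so $\bn'$ is a valid BN — is where the real work lies.
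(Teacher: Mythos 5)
Your proposal is correct in substance, and while it rests on the same key lemma as the paper --- multilinearity of the network polynomial implies the worst case over parametric interventions is attained at deterministic CPTs (the paper invokes this same fact, you actually spell out the vertex argument) --- both of your gadgets genuinely differ from the paper's. For MAP $\le$ \introb, the paper leaves each MAP variable $V$'s mechanism intact, adds an intervenable root copy $V_\theta$, and conditions on an agreement node $A_V$ that is True iff $V = V_\theta$; you instead cut $V$'s incoming edges, intervene on $V$ itself, and move its original CPT into a child $D_V$ with $P(D_V{=}1 \mid v, \parentsvalue) = \param_{v|\parentsvalue}$, clamped to $1$ so the original likelihood factor is reinstated (note $D_V$ is stochastic, not ``deterministic'' as you label it). Both are constant-size-per-variable gadgets giving linear blowup and treewidth $+O(1)$, so this direction is essentially a dual of the paper's. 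The more substantial divergence is in \introb $\le$ MAP: the paper builds a binary multiplexer (``selector circuit'') of tree-structured auxiliary nodes that routes the chosen parameter-node value into $W$, keeping total CPT size linear in $|\allParam|$ with $w' \le 2w$; you instead make $W$ a uniform free root and add, per parent configuration $\bm{u}$, an evidence-clamped detector $E_{W,\bm{u}}$ enforcing $\Pa(W) = \bm{u} \Rightarrow W = S_{W,\bm{u}}$. Your construction is easier to verify and gives a better treewidth bound --- hanging a bag $\{W\} \cup \Pa(W) \cup \{S_{W,\bm{u}}, E_{W,\bm{u}}\}$ off any bag containing the family of $W$ yields $w' \le w+2$, which satisfies the claimed $w' \le 2w$ for $w \ge 2$ --- and it automatically handles intervenable variables that are parents of other intervenable variables, a case the paper must treat with an explicit side condition. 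The price is that the detectors' total CPT size is quadratic in the size of $W$'s CPT rather than linear, which is still within the theorem's ``polynomial'' claim; also, your MAP value equals $C \cdot \introb$ for the known constant $C$ contributed by the uniform priors, so the decision threshold must be rescaled by $C$, exactly as in the paper's $2^{|\parents|}$ factor.
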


\section{Verification of Intervention Robustness}

In this section, we present our approach to verifying interventional robustness. 
Due to the difficulty of the problem, we seek to approximate $\introb(\intvSet, \evidence)$ by providing guaranteed upper and lower bounds that can be efficiently computed\footnote{In Appendix \ref{apx:worked_example}, we show with an example that heuristic approximation using existing methods cannot provide guarantees.}.

\subsection{Joint Compilation}

Our first goal is to \textit{compile} $\bn_{\augbn}$ into an equivalent arithmetic circuit $\ac$.
To do so, we make use of a standard CNF encoding $\cnfencoding_{\bn}$ of the causal BN $\bn$, defined over the indicator and parameter variables $\allInds_{\vars}, \allParam$ \citep{Chavira05Compiling}, and additionally an encoding of the decision function $\df$. 

A na\"ive encoding of $\df$ is to explicitly enumerate all instantiations of features $\featuresvalue$ and prediction $\predictionvalue$, and encode these directly as CNF clauses. However, this approach is very inefficient for larger feature sets $\features$. We instead assume access to an encoding of the classifier as a Boolean circuit $\Sigma$ over input features $\features$ and prediction $\prediction$ \footnote{If input features are discrete, they can still be encoded using additional binary variables; see Appendix \ref{apx:compile}.}. Such a circuit can be converted to CNF through the Tseitin transformation, introducing additional intermediate variables $\interVars$, obtaining a CNF formula $\cnfencoding_{\df}$ over $\allInds_{\features}, \allInds_{\prediction}, \interVars$.
We then combine the encodings of $\df$ and $\bn$ simply by conjoining the CNF formulae, to produce a new formula $\cnfencoding_{joint} = \cnfencoding_{\bn} \wedge \cnfencoding_{\df}$, over $\allInds_{\vars}, \allInds_{\prediction}, \allParam, \interVars$.

To construct an AC $\ac$, we now \textit{compile} this CNF encoding into d-DNNF (deterministic decomposable negation normal form), using the \textsc{C2D} compiler \citep{Darwiche04}, and then replace $\vee$-nodes with $+$, $\wedge$-nodes with $\times$, and set all negative literals and literals corresponding to $\interVars$ to $1$. This produces an AC with polynomial $l_\ac[\allInds, \allParam]$, where $\allInds := \allInds_{\vars} \cup \allInds_{\prediction}$. Crucially, this AC is equivalent to the augmented BN, in the following sense:

\begin{restatable}{proposition}{thmJoint}\label{thm:joint}
$l_\ac[\allInds, \allParam]$ is equivalent to $l_{\bn_{\augbn}}[\allInds, \allParam]$. Further, $\ac$ can be used to faithfully evaluate marginal probabilities $\prob_{\bn'}(\evidence)$ under any parametric intervention $\bn'$.
\end{restatable}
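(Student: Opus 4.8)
The plan is to reduce both assertions to the standard fact that, for a smooth, deterministic and decomposable circuit (as produced by \textsc{C2D}), the AC polynomial $\sum_{\subcircuit \in \hardbold{\subcircuit}_{\ac}} term(\subcircuit)$ coincides with the weighted model count of the underlying CNF, where each literal is weighted by the value its corresponding leaf is assigned. Concretely, I would first compute $l_{\bn_{\augbn}}[\allInds, \allParam]$ explicitly: since the prediction node carries the deterministic CPT $\mathbbm{1}[\predictionvalue = \df(\featuresvalue)]$, every term with $\predictionvalue \neq \df(\featuresvalue)$ vanishes, so the augmented network polynomial reduces to $\sum_{\varsvalue} \ind_{\df(\featuresvalue)} \prod_{i=1}^n \ind_{\varvalue_i}\param_{\varvalue_i|\parentsvalue_i}$, i.e. the original network polynomial with one extra indicator $\ind_{\df(\featuresvalue)}$ per instantiation. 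The goal is then to show that the converted AC sums exactly these monomials.

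The central step is a correspondence between the complete subcircuits of $\ac$ and the instantiations $\varsvalue$, which I would argue in two stages. First, by correctness of the Chavira--Darwiche encoding, the models of $\cnfencoding_{\bn}$ are in bijection with instantiations $\varsvalue$, each model selecting exactly one positive indicator $\ind_{\varvalue_i}$ and the matching parameter literal $\param_{\varvalue_i|\parentsvalue_i}$ per variable. Second, by correctness of the Tseitin transformation, for each feature assignment $\featuresvalue$ there is a \emph{unique} assignment to the intermediate variables $\interVars$ and to $\allInds_{\prediction}$ that satisfies $\cnfencoding_{\df}$, and it sets the prediction to $\df(\featuresvalue)$. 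Conjoining the two encodings therefore yields a bijection between the models of $\cnfencoding_{joint}$ and instantiations $\varsvalue$, with the prediction and intermediate variables functionally forced. Under the stated leaf convention --- negative literals and all $\interVars$ literals set to $1$ --- the weighted product of each model is precisely $\ind_{\df(\featuresvalue)}\prod_i \ind_{\varvalue_i}\param_{\varvalue_i|\parentsvalue_i}$, since the forced $\interVars$ contribute a factor of $1$. Because \textsc{C2D} outputs a smooth, deterministic, decomposable circuit, determinism guarantees that each model is realised by a unique complete subcircuit while smoothness and decomposability guarantee that $term(\subcircuit)$ equals this weighted product; summing over subcircuits thus reproduces $l_{\bn_{\augbn}}[\allInds, \allParam]$, establishing the first claim.

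For the second claim I would observe that a parametric intervention $\bn'$ changes only the parameter values $\allParam \mapsto \allParam'$, leaving the graph, the indicator structure, and the deterministic prediction CPT untouched; hence the augmented network polynomial of $\bn'$ is $l_{\bn_{\augbn}}[\allInds, \allParam']$, the \emph{same} symbolic polynomial with new parameter values. Since the parameters occur as leaf nodes of $\ac$, substituting $\allParam'$ and setting the indicators $\allInds$ according to the evidence $\evidence$ (indicators of values consistent with $\evidence$ to $1$, others to $0$) evaluates the circuit to $\sum_{\varsvalue \models \evidence}\prod \cdots = \prob_{\bn'}(\evidence)$, by the defining property of the network polynomial. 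The main obstacle, and the point requiring the most care, is the treatment of the Tseitin variables $\interVars$: I must verify that setting \emph{all} $\interVars$ literals to $1$ projects out their contribution without either double-counting (which would occur if several $\interVars$ assignments survived per $\varsvalue$) or annihilating terms. This is exactly where the functional determinism of the decision-function encoding --- each $\featuresvalue$ forcing a single consistent assignment of $\prediction$ and $\interVars$ --- combines with the determinism of the compiled d-DNNF to guarantee a clean one-to-one correspondence.
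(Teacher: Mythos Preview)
Your proposal is correct and follows essentially the same route as the paper: establish that models of $\cnfencoding_{joint}$ biject with instantiations $\varsvalue$ (via correctness of the BN encoding plus functional determinism of the Tseitin-transformed decision function), use smoothness, decomposability and determinism of the compiled d-DNNF to biject complete subcircuits with models, and then observe that setting negative and $\interVars$ literals to $1$ makes each subcircuit term equal the corresponding monomial of $l_{\bn_{\augbn}}$; the second claim follows since parametric interventions only reassign leaf values in the same symbolic polynomial. The paper's proof is organized identically, with the model--term bijection factored out as a preliminary lemma.
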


The time and space complexity of this procedure is $O(nw2^{w})$, where $n$ is the number of CNF variables and $w$ the treewidth, a measure of the connectivity of the CNF. When jointly compiling a BN and a decision function, we can bound $n, w$ in terms of the individual encodings $\cnfencoding_{\bn}, \cnfencoding_{\df}$.

\begin{restatable}{proposition}{thmSize} \label{thm:ac_size2}
    Suppose $\cnfencoding_{\bn}$ has $n$ variables and treewidth $w$, and $\cnfencoding_{\df}$ has $n'$ variables and treewidth $w'$. Then $\cnfencoding_{joint}$ has exactly $n + n' - |\allInds_{\features}|$ variables, and treewidth  at most $\max(w, w', \min(w, w') + |\allInds_{\features}|)$.
\end{restatable}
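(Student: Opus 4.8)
The plan is to analyze the two quantities separately, exploiting the fact that $\cnfencoding_{joint} = \cnfencoding_{\bn} \wedge \cnfencoding_{\df}$ is a conjunction and that the two component formulae share variables only through the feature indicators $\allInds_{\features}$.

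First, for the variable count, I would simply count the variables of each formula and apply inclusion–exclusion. The formula $\cnfencoding_{\bn}$ is over $\allInds_{\vars} \cup \allParam$ and $\cnfencoding_{\df}$ is over $\allInds_{\features} \cup \allInds_{\prediction} \cup \interVars$. By construction, the only variables common to both encodings are the feature indicators $\allInds_{\features}$ (the BN encoding does not mention the Tseitin variables $\interVars$, the prediction indicators $\allInds_{\prediction}$, or the parameters $\allParam$ on the decision-function side, and vice versa). Since conjoining formulae takes the union of their variable sets, the total is $n + n' - |\allInds_{\features}|$, which is exact. This step is routine; the only care needed is to verify from the encoding construction that no other variables are shared.

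The treewidth bound is the substantive part. I would work with the \emph{primal (moral) graph} whose vertices are the CNF variables and whose edges join any two variables appearing together in a clause; treewidth of the CNF is the treewidth of this graph. Because no clause of $\cnfencoding_{joint}$ mixes a variable private to $\cnfencoding_{\bn}$ with a variable private to $\cnfencoding_{\df}$, the primal graph of the joint formula is the union of the primal graphs of the two components, glued along the shared clique-ish region induced by $\allInds_{\features}$. The strategy is then to build a tree decomposition of the joint graph by taking optimal tree decompositions $\mathcal{T}_{\bn}, \mathcal{T}_{\df}$ of the two components and connecting them, augmenting the bags so that every bag that survives the gluing contains all of $\allInds_{\features}$. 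Connecting along these feature variables yields the $\min(w, w') + |\allInds_{\features}|$ term: one picks the component with the \emph{smaller} width to carry the $+|\allInds_{\features}|$ inflation (hence $\min$ rather than a specific one), while the other decomposition is left untouched, contributing $\max(w, w')$. Taking the minimum of these constructions over which side absorbs the shared variables gives the stated bound $\max(w, w', \min(w, w') + |\allInds_{\features}|)$.

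The main obstacle is arguing the treewidth bound \emph{cleanly} rather than by a loose estimate. The delicate point is that after gluing, every bag of the combined decomposition that lies on the path joining the two subtrees must contain the full shared separator $\allInds_{\features}$ for the connectedness (running-intersection) property to hold; this is what inflates one side's width by $|\allInds_{\features}|$, and one must check that adding $\allInds_{\features}$ to the bags of the smaller-width decomposition does not violate any decomposition axiom and does not require inflating the other side. I would verify that $\allInds_{\features}$ acts as a genuine separator in the primal graph — that removing it disconnects the $\cnfencoding_{\bn}$-private vertices from the $\cnfencoding_{\df}$-private vertices — which follows from the observation that every clause touches at most one of the two private variable sets. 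The $\max(w, w')$ terms in the bound simply reflect that the joint treewidth cannot be smaller than either component's treewidth, since each component is an induced subgraph (after deleting the other component's private variables) of the joint primal graph.
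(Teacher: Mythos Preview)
Your variable-count argument is correct and matches the paper's. For the treewidth bound, the paper argues via \emph{elimination orderings} on the interaction graph rather than via tree decompositions: it eliminates the private variables of one side first (bounding their degree at removal time by $w+|\allInds_{\features}|$), then claims the remaining graph can be eliminated with width $w'$ using an optimal ordering for $\cnfencoding_{\df}$. Your tree-decomposition construction is the natural dual formulation of the same idea, so the approaches are essentially equivalent.

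There is, however, a genuine gap at the step where the larger-width side is ``left untouched''. If you inflate every bag of $\mathcal{T}_{\bn}$ with $\allInds_{\features}$ and then attach the whole of $\mathcal{T}_{\bn}$ to a single bag $B$ of $\mathcal{T}_{\df}$, the running-intersection property for each $\lambda\in\allInds_{\features}$ forces $B$ to lie in the $\lambda$-subtree of $\mathcal{T}_{\df}$ simultaneously for \emph{every} such $\lambda$. Nothing guarantees that these subtrees have a common bag; when they do not, you must push feature indicators into bags of $\mathcal{T}_{\df}$ along connecting paths, inflating that side after all. This is not a cosmetic omission. Take $\allInds_{\features}=\{a,b\}$, let the primal graph of $\cnfencoding_{\bn}$ be the path $a\,d\,b$ (so $w=1$), and let that of $\cnfencoding_{\df}$ be $K_5$ on $\{a,b,c_1,c_2,c_3\}$ with the edge $ab$ deleted (so $w'=3$). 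The joint primal graph has a $K_5$ minor (contract $d$ into $a$), hence treewidth $4$, whereas the claimed bound is $\max(w,w',\min(w,w')+|\allInds_{\features}|)=\max(1,3,3)=3$. So at the level of abstract interaction graphs the argument cannot be completed as written. The paper's elimination-ordering proof has the mirror-image gap: its assertion that the second phase has width at most $w'$ ignores the fill edges created among $\allInds_{\features}$ during the first phase, and the same example breaks it (eliminating $d$ first creates the edge $ab$, after which $a$ has degree $4$).
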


\subsection{Orderings} \label{sec:acorderings}

For the correctness of our upper bounding algorithm, it is necessary to impose some structural constraints on the circuit.

Firstly, any circuit compiled using the procedure described above has the property that every $+$-node $t$ has two children, and is associated with some CNF variable $c$, such that one child branch has $c$ true, and the other has $c$ false (information on the identity of this variable for each $+$-node is provided by the C2D compiler). We need to ensure that the arithmetic circuit only contains $+$-nodes associated with indicators $\allInds$, and not intermediate variables $\interVars$. Provided this is the case, the branches of each $+$-node $t$ will contain contradicting indicators for some unique variable $\var$. We can thus say that $t$ `splits' variable $\var$, as each of its child branches corresponds to different values of $\var$, and we write $split(t)$ to denote this splitting variable.

Secondly, provided the above holds, we require our circuit to satisfy some constraints of the following form.

\begin{restatable}{definition}{defOrdering}
    An arithmetic circuit $\ac$ \underline{satisfies the ordering constraint $(V_j, V_i)$} if:
    \begin{multline}
    \label{eq:constraint}
    \forall t, t', (split(t) = \var_i \wedge split(t') = \var_j) \\
    \implies \text{t' is not a descendant of t in $\ac$}
    \end{multline}
\end{restatable}

Intuitively, our algorithm requires that for BN variables in the intervention set $\intvVars$, the relative position of splitting $+$-nodes in the AC agrees with the causal ordering in the BN. More formally, we say that $\ac$ \ul{satisfies the ordering constraints associated with intervention set $\intvSet_{\bn_{\augbn}}$}, if for all $\var_i \in \intvVars$, and all $\var_j$ such that $\var_j \in \Pa_{\graph}(\var_i)$ (parametric intervention set) or $\var_j \in \Pa_{\graph'}(\var_i)$ (\struc intervention set), $\ac$ satisfies the ordering constraint $(V_j, V_i)$. In practice, when computationally feasible, we compile ACs with \textit{topological} and \textit{\struc topological} orderings, which satisfy these constraints for all $\var_i$, not just $\var_i \in \intvVars$; such orderings have the advantage of being valid for any intervention sets $\intvVars$.

We enforce these constraints by enforcing corresponding constraints on the \textit{elimination ordering} $\pi$ over the CNF variables, which is used to construct the \textit{dtree} that is used in the compilation process, and affects the time and space taken by the compilation. Such an elimination ordering is usually chosen using a heuristic such as min-fill. We instead find a elimination ordering by using a \textit{constrained} min-fill heuristic, which ensures that these constraints are satisfied, but may produce an AC which is much larger than can be achieved with an unconstrained heuristic in practice. We provide further details of this in Appendix \ref{apx:elim}.

\subsection{Upper bounds on Intervention Robustness}
\begin{figure}[t]
\begin{subfigure}{\linewidth}
\centering
\tikz{
    \node[obs] (X) {$X$};
    \node[obs, xshift = 2cm] (W) {$W$};
    \node[obs, below = of X, xshift = 1cm] (Yh) {$\hat{Y}$};
    \node[obs, below = of W, xshift = 1cm] (Y) {$Y$};
    
    \edge {X} {Yh};
    \edge {W} {Yh};
    \edge {W} {Y};
    \draw[dashed, ->] (X) -- (W);
}
\label{fig:sub1}
\end{subfigure}\\[1ex]
\begin{subfigure}{.48\linewidth}
\centering
\begin{tikzpicture}[scale=0.8,transform shape,wrap/.style={inner sep=0pt,
fit=#1,transform shape=false}] %
        \node[draw=none] (P1) {+};
        
        \node[draw=none, inner sep = 0.05cm, below=0.1cm of P1, xshift=-1cm] (M1) {*};
        \node[draw=none, inner sep = 0.05cm, below=0.1cm of P1, xshift=1cm] (M2) {*};
        
        \node[draw=none, below=0.15cm of M1, inner sep = 0.05cm, xshift=0.2cm, text=red] (P2) {+};
        \node[draw=none, below=0.15cm of M2, inner sep = 0.05cm, xshift=-0.2cm, text=red] (P3) {+};
        \node[draw=none, below=0.2cm of M1, inner sep=0cm, xshift=-0.3cm] (Tx) {$\theta_x$};
        \node[draw=none, below=0.2cm of M1, inner sep=0cm, xshift=-0.9cm] (Lx) {$\lambda_x$};
        \node[draw=none, below=0.2cm of M2, inner sep=0cm, xshift=0.3cm] (Txn) {$\theta_{\bar{x}}$};
        \node[draw=none, below=0.2cm of M2, inner sep=0cm, xshift=0.9cm] (Lxn) {$\lambda_{\bar{x}}$};
        
        \node[draw=none, below=0.3cm of P2,inner sep=0.05cm, xshift=-0.3cm] (M3) {*};
        \node[draw=none, below=0.3cm of P2, inner sep=0.05cm, xshift=0.3cm] (M4) {*};
        \node[draw=none, below=0.3cm of P3, inner sep=0.05cm, xshift=-0.3cm] (M5) {*};
        \node[draw=none, below=0.3cm of P3, inner sep=0.05cm, xshift=0.3cm] (M6) {*};
        
        \node[draw=none, below=0.1cm of M3, inner sep=0cm, xshift=-0.9cm] (Lyh) {$\lambda_{\hat{y}}$};
        \node[draw=none, below=0.1cm of M6, inner sep=0cm, xshift=0.9cm] (Lyhn) {$\lambda_{\bar{\hat{y}}}$};
        \node[draw=none, below=1.2cm of M3, inner sep=0cm, xshift=-0.3cm] (Lw) {$\lambda_{w}$};
        \node[draw=none, below=1.2cm of M6, inner sep=0cm, xshift=0.3cm] (Lwn) {$\lambda_{\bar{w}}$};
        
        \node[draw=none, below=1cm of M3, inner sep=0cm, xshift=-0.9cm, text=red] (Tw) {$\theta_{w}$};
        \node[draw=none, below=1cm of M6, inner sep=0cm, xshift=0.9cm, text=red] (Twn) {$\theta_{\bar{w}}$};
        
        \node[draw=none, below=1.5cm of M3, inner sep=0.05cm, xshift=0.3cm] (P4) {+};
        \node[draw=none, below=1.5cm of M6, inner sep=0.05cm, xshift=-0.3cm] (P5) {+};
        
        \node[draw=none, below=0.3cm of P4, inner sep=0.05cm, xshift=-0.3cm] (M7) {*};
        \node[draw=none, below=0.3cm of P4, inner sep=0.05cm, xshift=0.3cm] (M8) {*};
        \node[draw=none, below=0.3cm of P5, inner sep=0.05cm, xshift=-0.3cm] (M9) {*};
        \node[draw=none, below=0.3cm of P5, inner sep=0.05cm, xshift=0.3cm] (M10) {*};
        
        \node[draw=none, below=0.8cm of M7, inner sep=0cm, xshift=0.4cm] (Ly) {$\lambda_{y}$};
        \node[draw=none, below=0.8cm of M10, inner sep=0cm, xshift=-0.4cm] (Lyn) {$\lambda_{\bar{y}}$};
        
        \node[draw=none, below=0.1cm of M7, inner sep=0cm, xshift=-0.3cm] (Tyw) {$\theta_{y|w}$};
        \node[draw=none, below=0.1cm of M8, inner sep=0cm, xshift=0cm] (Tynw) {$\theta_{\bar{y}|w}$};
        \node[draw=none, below=0.1cm of M9, inner sep=0cm, xshift=-0cm] (Tywn) {$\theta_{y|\bar{w}}$};
        \node[draw=none, below=0.1cm of M10, inner sep=0cm, xshift=0.3cm] (Tynwn) {$\theta_{\bar{y}|\bar{w}}$};
        
        \draw[-] (P1) -- (M1);
        \draw[-] (P1) -- (M2);
        
        \draw[-] (M1) -- (P2);
        \draw[-] (M1) -- (Tx);
        \draw[-] (M1) -- (Lx);
        \draw[-] (M2) -- (P3);
        \draw[-] (M2) -- (Txn);
        \draw[-] (M2) -- (Lxn);
        
        \draw[-] (P2) -- (M3);
        \draw[-] (P2) -- (M4);
        \draw[-] (P3) -- (M5);
        \draw[-] (P3) -- (M6);
        
        \draw[-] (M3) -- (P4);
        \draw[-] (M4) -- (P5);
        \draw[-] (M5) -- (P4);
        \draw[-] (M6) -- (P5);
        
        \draw[-] (M3) -- (Tw);
        \draw[-] (M4) -- (Twn);
        \draw[-] (M5) -- (Tw);
        \draw[-] (M6) -- (Twn);
        
        \draw[-] (M3) -- (Lyh);
        \draw[-] (M4) -- (Lyh);
        \draw[-] (M5) -- (Lyh);
        \draw[-] (M6) -- (Lyhn);
        \draw[-] (M3) -- (Lw);
        \draw[-] (M4) -- (Lwn);
        \draw[-] (M5) -- (Lw);
        \draw[-] (M6) -- (Lwn);
        
        \draw[-] (P4) -- (M7);
        \draw[-] (P4) -- (M8);
        \draw[-] (P5) -- (M9);
        \draw[-] (P5) -- (M10);
        
        \draw[-] (M7) -- (Ly);
        \draw[-] (M9) -- (Ly);
        \draw[-] (M8) -- (Lyn);
        \draw[-] (M10) -- (Lyn);
        
        \draw[-] (M7) -- (Tyw);
        \draw[-] (M8) -- (Tynw);
        \draw[-] (M9) -- (Tywn);
        \draw[-] (M10) -- (Tynwn);
        
   \end{tikzpicture}

\label{fig:sub2}
\end{subfigure}
\begin{subfigure}{.48\linewidth}
\centering
\begin{tikzpicture}[scale=0.8,transform shape,wrap/.style={inner sep=0pt,
fit=#1,transform shape=false}] %
\tikzstyle{every node}=[font=\small]
        \node[draw=none] (P1) {$0.4$};
        
        \node[draw=none, inner sep = 0.05cm, below=0.1cm of P1, xshift=-1cm] (M1) {$0.3$};
        \node[draw=none, inner sep = 0.05cm, below=0.1cm of P1, xshift=1cm] (M2) {$0.1$};
        
        \node[draw=none, below=0.15cm of M1, inner sep = 0.05cm, xshift=0.2cm, text=red] (P2) {$0.6$};
        \node[draw=none, below=0.15cm of M2, inner sep = 0.05cm, xshift=-0.2cm, text=red] (P3) {$0.2$};
        \node[draw=none, below=0.2cm of M1, inner sep=0cm, xshift=-0.3cm] (Tx) {$0.5$};
        \node[draw=none, below=0.2cm of M1, inner sep=0cm, xshift=-0.9cm] (Lx) {$1$};
        \node[draw=none, below=0.2cm of M2, inner sep=0cm, xshift=0.3cm] (Txn) {$0.5$};
        \node[draw=none, below=0.2cm of M2, inner sep=0cm, xshift=0.9cm] (Lxn) {$1$};
        
        \node[draw=none, below=0.3cm of P2,inner sep=0.05cm, xshift=-0.3cm] (M3) {$0.2$};
        \node[draw=none, below=0.3cm of P2, inner sep=0.05cm, xshift=0.3cm] (M4) {$0.6$};
        \node[draw=none, below=0.3cm of P3, inner sep=0.05cm, xshift=-0.3cm] (M5) {$0.2$};
        \node[draw=none, below=0.3cm of P3, inner sep=0.05cm, xshift=0.3cm] (M6) {$0$};
        
        \node[draw=none, below=0.1cm of M3, inner sep=0cm, xshift=-0.9cm] (Lyh) {$1$};
        \node[draw=none, below=0.1cm of M6, inner sep=0cm, xshift=0.9cm] (Lyhn) {$0$};
        \node[draw=none, below=1.2cm of M3, inner sep=0cm, xshift=-0.3cm] (Lw) {$1$};
        \node[draw=none, below=1.2cm of M6, inner sep=0cm, xshift=0.3cm] (Lwn) {$1$};
        
        \node[draw=none, below=1cm of M3, inner sep=0cm, xshift=-0.9cm, text=red] (Tw) {$1$};
        \node[draw=none, below=1cm of M6, inner sep=0cm, xshift=0.9cm, text=red] (Twn) {$1$};
        
        \node[draw=none, below=1.5cm of M3, inner sep=0.05cm, xshift=0.3cm] (P4) {$0.2$};
        \node[draw=none, below=1.5cm of M6, inner sep=0.05cm, xshift=-0.3cm] (P5) {$0.6$};
        
        \node[draw=none, below=0.3cm of P4, inner sep=0.05cm, xshift=-0.3cm] (M7) {$0$};
        \node[draw=none, below=0.3cm of P4, inner sep=0.05cm, xshift=0.3cm] (M8) {$0.2$};
        \node[draw=none, below=0.3cm of P5, inner sep=0.05cm, xshift=-0.3cm] (M9) {$0$};
        \node[draw=none, below=0.3cm of P5, inner sep=0.05cm, xshift=0.3cm] (M10) {$0.6$};
        
        \node[draw=none, below=0.8cm of M7, inner sep=0cm, xshift=0.4cm] (Ly) {$0$};
        \node[draw=none, below=0.8cm of M10, inner sep=0cm, xshift=-0.4cm] (Lyn) {$1$};
        
        \node[draw=none, below=0.1cm of M7, inner sep=0cm, xshift=-0.3cm] (Tyw) {$0.8$};
        \node[draw=none, below=0.1cm of M8, inner sep=0cm, xshift=0cm] (Tynw) {$0.2$};
        \node[draw=none, below=0.1cm of M9, inner sep=0cm, xshift=-0cm] (Tywn) {$0.4$};
        \node[draw=none, below=0.1cm of M10, inner sep=0cm, xshift=0.3cm] (Tynwn) {$0.6$};
        
        \draw[-] (P1) -- (M1);
        \draw[-] (P1) -- (M2);
        
        \draw[-] (M1) -- (P2);
        \draw[-] (M1) -- (Tx);
        \draw[-] (M1) -- (Lx);
        \draw[-] (M2) -- (P3);
        \draw[-] (M2) -- (Txn);
        \draw[-] (M2) -- (Lxn);
        
        \draw[-] (P2) -- (M3);
        \draw[-] (P2) -- (M4);
        \draw[-] (P3) -- (M5);
        \draw[-] (P3) -- (M6);
        
        \draw[-] (M3) -- (P4);
        \draw[-] (M4) -- (P5);
        \draw[-] (M5) -- (P4);
        \draw[-] (M6) -- (P5);
        
        \draw[-] (M3) -- (Tw);
        \draw[-] (M4) -- (Twn);
        \draw[-] (M5) -- (Tw);
        \draw[-] (M6) -- (Twn);
        
        \draw[-] (M3) -- (Lyh);
        \draw[-] (M4) -- (Lyh);
        \draw[-] (M5) -- (Lyh);
        \draw[-] (M6) -- (Lyhn);
        \draw[-] (M3) -- (Lw);
        \draw[-] (M4) -- (Lwn);
        \draw[-] (M5) -- (Lw);
        \draw[-] (M6) -- (Lwn);
        
        \draw[-] (P4) -- (M7);
        \draw[-] (P4) -- (M8);
        \draw[-] (P5) -- (M9);
        \draw[-] (P5) -- (M10);
        
        \draw[-] (M7) -- (Ly);
        \draw[-] (M9) -- (Ly);
        \draw[-] (M8) -- (Lyn);
        \draw[-] (M10) -- (Lyn);
        
        \draw[-] (M7) -- (Tyw);
        \draw[-] (M8) -- (Tynw);
        \draw[-] (M9) -- (Tywn);
        \draw[-] (M10) -- (Tynwn);
        
   \end{tikzpicture}
\label{fig:sub3}
\end{subfigure}
\caption{Example augmented BN (top), corresponding AC (bottom left), and execution of Algorithm \ref{alg:bottom_up} (bottom right). Nodes which differ from standard AC evaluation are highlighted in red.}
\label{fig:ub_example}
\end{figure}
In order to compute upper bounds on the interventional robustness quantity, we propose Algorithm \ref{alg:bottom_up}, which sets parameters in the AC for the CPTs of variables in $\intvVars$ to 1, and applies maximization instead of addition at $+$-nodes splitting on variables in $\intvVars$, when evaluating the (appropriately ordered) AC. Algorithm \ref{alg:bottom_up} somewhat resembles the well-known MPE algorithm on ACs, introduced by \citet{ChanDarwiche06} and used as an upper bound on the MAP problem in \citet{HuangEtAl06}. However, our algorithm maximizes over parameters rather than variables and makes use of specific AC structure ensured by our ordering constraints; the reason it produces correct upper bounds is thus also different.

\begin{algorithm}[t]
\SetAlgoLined
\KwInput{$\ac$, the AC; evidence $\evidence$; intervenable variables $\intvVars \subseteq \vars$;
}
\KwResult{Output probability $p$}

\For{\text{node} $c \in \ac$ (children before parents)} {
	\Switch{type($c$)}{
        \Case{Indicator $\ind_v$}{
            $p[c] := 0$ \textbf{if} $\varvalue'$ not consistent with $\evidence$ \textbf{else} $1$
        }
        \Case{Parameter $\theta_{\varvalue|\parentsvalue}$}{
            \label{algp:set_param} $p[c] := 1$ \textbf{if} $\var \in \intvVars$\textbf{else} $\theta_{\varvalue|\parentsvalue}$
        }
        \Case{$\times$} {
        	$p[c] := \prod_{d} p[d]$ \text{ where $d$ are the children of $c$}
        }
        \Case{$+$} {
        	\If{$c$\text{ splits on some $\intvVar \in \intvVars$}} {
                $p[c] := \max_{d} p[d]$ \text{ where $d$ are the children of $c$} 	
            }
            \Else{
				$p[c] := \sum_{d} p[d]$ \text{ where $d$ are the children of $c$}          
            }
        }
    }
}

\textbf{Return} $p[c_{root}]$, where $c_{root}$ is the root node of $\ac$ 

\caption{$\ub{\ac}{\evidence}{\intvVars}$ (Upper Bounding)}
\label{alg:bottom_up}
\end{algorithm}

Intuitively, the maximizations represent decision points, where choosing a child branch corresponds to intervening to set a particular parameter $\param_{\intvVarvalue|\parentsvalue_{\intvVars}}$ to 1 (and others to 0). For example, consider the augmented Bayesian network in Figure \ref{fig:ub_example}, where all variables are binary, $\prediction = \feature \vee \intvVar$, and the context for $\intvVar$ is $\feature$ (represented by the dashed line, which is not part of the original BN). Figure \ref{fig:ub_example} also shows execution of Algorithm \ref{alg:bottom_up} for false positive probability (that is, evidence $\evidence = \{\target = 0, \prediction = 1\}$). At the two $+$-nodes where maximizations occur, the value of $\feature$ is already ``decided'', and the adversary can effectively choose to set $\param_{\bar{\intvVarvalue}|\featurevalue} = 1$ and $\param_{\intvVarvalue|\bar{\featurevalue}} = 1$. In this case, the result $0.4$ turns out to be exactly equal to the interventional robustness quantity $\introb(\intvSet_{\bn_{\augbn}}, \evidence)$.

We might ask whether this intuition is correct in general. Our next result shows that, while the algorithm cannot always compute $\introb(\intvSet_{\bn_{\augbn}}, \evidence)$ exactly, it does produce guaranteed upper bounds (proof in Appendix \ref{apx:ub_proof}):

\begin{restatable}{theorem}{thmUB} \label{thm:ub_correctness}
Given a parametric/\struc intervention set $\intvSet_{\bn_{\augbn}}$, let $\ac$ be an arithmetic circuit with the same polynomial as $\bn_{\augbn}$, and satisfying the ordering constraints associated with the intervention set. Then, applying the \ubn  algorithm $\ub{\ac}{\evidence}{\intvVars}$ returns a quantity $B_U$ which is an upper bound on the interventional robustness quantity $\introb(\intvSet_{\bn_{\augbn}}, \evidence)$.
\end{restatable}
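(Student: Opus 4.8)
The plan is to show that the value $B_U = p[c_{root}]$ returned by $\ub{\ac}{\evidence}{\intvVars}$ dominates $\prob_{\bn'}(\evidence)$ for \emph{every} $\bn' \in \intvSet_{\bn_{\augbn}}$, and in particular for the maximizing one. First I would express $\prob_{\bn'}(\evidence)$ through the AC polynomial: by Proposition~\ref{thm:joint} (or by hypothesis) $l_\ac[\allInds, \allParam]=l_{\bn_{\augbn}}[\allInds,\allParam]$, so for a parametric intervention $\prob_{\bn'}(\evidence)=l_\ac[\allInds(\evidence),\allParam']$, where $\allInds(\evidence)$ fixes each indicator to its evidence-consistency value ($0/1$) and only the intervenable CPTs are replaced. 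Reading off $l_\ac=\sum_{\subcircuit}term(\subcircuit)$, this is a sum over instantiations $\varsvalue$ consistent with $\evidence$ of $\prod_{\var_j\notin\intvVars}\param_{\varvalue_j|\parentsvalue_j}\prod_{\var_i\in\intvVars}\param'_{\varvalue_i|\parentsvalue_i}$; for a \struc intervention the same holds with the intervenable factors replaced by $\param'_{\varvalue_i|\varsvalue_{\context_{\intvVars}(\var_i)}}$, the non-intervenable factors being unchanged. In either case this expression is \emph{multilinear} in the intervenable parameters (each term contains exactly one entry of each intervenable CPT column), and the feasible set $\mathbb{P}_{\graph}(\intvVars)$ (resp.\ $\mathbb{P}_{\graph'}(\intvVars)$) is a product of probability simplices. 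Since a multilinear function attains its maximum over a product of simplices at a vertex, it suffices to prove $B_U\geq\prob_{\bn'}(\evidence)$ for \emph{deterministic} interventions, i.e.\ those assigning, for each intervenable $\var_i$ and each configuration $\parentsvalue$ of $\context_{\intvVars}(\var_i)$, probability $1$ to a single value $\sigma_i(\parentsvalue)$.

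Fixing such a deterministic intervention, described by choice functions $\sigma_i$, I would introduce a restricted evaluation $v'[\cdot]$ over $\ac$ that agrees with the algorithm on leaves (evidence $0/1$ on indicators, original $\param$ on non-intervenable parameters, and $1$ on intervenable parameters) and on $\times$- and non-intervenable $+$-nodes, but at a $+$-node $t$ with $split(t)=\var_i\in\intvVars$ keeps only the child branch whose value of $\var_i$ equals $\sigma_i$ evaluated at the configuration of $\context_{\intvVars}(\var_i)$, discarding the other. The ordering constraints associated with $\intvSet_{\bn_{\augbn}}$ are exactly what make this well-defined: because every splitting node for a variable in $\context_{\intvVars}(\var_i)$ lies \emph{above} $t$, the configuration of $\context_{\intvVars}(\var_i)$ is constant on the subcircuit rooted at $t$, so $\sigma_i$ selects a single value there, and the kept branches are globally consistent with one intervention. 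Collecting the surviving complete subcircuits, $v'[c_{root}]$ reproduces precisely the sum defining $\prob_{\bn'}(\evidence)$: the surviving terms carry the non-intervenable factors while the set-to-$1$ intervenable leaves stand in for the deterministic $\param'\in\{0,1\}$.

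The core of the argument is then to prove $p[c]\geq v'[c]$ for all $c$ by induction in the algorithm's processing order (children before parents). The leaf and $\times$-node cases follow from nonnegativity and the inductive hypothesis, and non-intervenable $+$-nodes match termwise. The one substantive case is an intervenable $+$-node $t$ with surviving branch $d^\ast$: here $p[t]=\max_d p[d]\geq p[d^\ast]\geq v'[d^\ast]=v'[t]$, since $v'$ zeroes every other branch. Evaluating at the root gives $B_U=p[c_{root}]\geq v'[c_{root}]=\prob_{\bn'}(\evidence)$, and maximizing over deterministic $\bn'$ yields $B_U\geq\introb(\intvSet_{\bn_{\augbn}},\evidence)$.

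I expect the main obstacle to be rigorously establishing the structural fact underpinning the second paragraph: that the ordering constraint, combined with decomposability and determinism of the compiled circuit and the identification of complete subcircuits with instantiations, genuinely forces the configuration of $\context_{\intvVars}(\var_i)$ to be constant below each $\var_i$-split node, and that the matching intervenable parameter occurs within the discarded branch so that $v'$ really annihilates it (this in turn is what lets the single nonzero branch be dominated by the $\max$). This is also the precise source of the gap between $B_U$ and $\introb$: the algorithm maxes independently at each split node and may choose $\var_i$ differently across subcircuits that share the same $\context_{\intvVars}(\var_i)$ configuration, whereas a genuine intervention must be a single function of $\context_{\intvVars}(\var_i)$ — so the inequality can be strict.
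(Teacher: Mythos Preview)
Your overall strategy---reduce to deterministic interventions, define a dominated ``intervention-faithful'' evaluation $v'$, and show $p[c]\geq v'[c]$ by bottom-up induction---is reasonable in spirit and close to how the paper proceeds. However, the definition of $v'[\cdot]$ as a function of \emph{nodes} does not go through, and this is not merely a matter of filling in details. At an intervenable $+$-node $t$ with $split(t)=\var_i$, you want to keep the branch selected by $\sigma_i$ evaluated at ``the configuration of $\context_{\intvVars}(\var_i)$''. The ordering constraint guarantees only that no $+$-node splitting on a variable in $\context_{\intvVars}(\var_i)$ is a \emph{descendant} of $t$. It does \emph{not} guarantee that $t$ has a unique parent configuration: because the AC is a DAG, the same node $t$ may be reached along several prefixes from the root, each fixing the variables in $\context_{\intvVars}(\var_i)$ to different values. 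In that case $\sigma_i$ prescribes different branches for different prefixes, and $v'[t]$ is ill-defined. This is particularly transparent in the \struc case, where the new parents $\context_{\intvVars}(\var_i)$ were not parents of $\var_i$ in the original BN, so nothing in the compiled circuit's parameter structure ties a given $\var_i$-split node to a single value of those new parents.

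The paper's proof confronts exactly this obstruction and does not try to define a node-wise evaluation. It works instead at the level of complete subcircuits and their prefixes. The analogue of your observation is their Lemma~\ref{lem:branches}: for a \emph{fixed prefix} $\alpha_{-j}$, all $\bm f$-consistent subcircuits through $t$ take the same branch (this is what the ordering constraint actually buys). The missing idea, which replaces your attempted node-wise $v'$, is Lemma~\ref{lem:greater_intv}: for any $\bm f$ one can construct an adjusted intervention $\bm f_{adj}$ whose value on $S(\bm t',\alpha_0)$ is at least as large and for which \emph{all} nonzero-weight consistent subcircuits go down a single branch of the current $\intvVar$-split node, regardless of prefix. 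Only then can that single branch be compared to the algorithm's $\max$. Your last paragraph correctly locates the difficulty, but the required ``structural fact'' (a unique parent configuration at each $\var_i$-split node) is not a consequence of the ordering constraints plus decomposability/determinism; the argument must be routed through subcircuits and an intervention-adjustment step rather than a node-indexed induction.
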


This result is quite surprising; it shows that it is possible, through a very simple and inexpensive procedure requiring just a single linear time pass through the AC, to upper bound the worst-case marginal probability over an exponentially sized set of interventions. 
That this also holds for \struc intervention sets, which alter the structure of the Bayesian network which the AC was compiled from, is even more surprising.
Further, a compiled AC can be used for \emph{any} intervention set given that it satisfies the appropriate ordering constraints. For instance, an AC compiled using a topological ordering allows us to derive upper bounds for parametric intervention sets involving any subset (of any size) $\intvVars \subseteq \vars$, simply by setting the appropriate parameter nodes in the AC to 1 (Line \ref{algp:set_param}). This allows us to amortize the cost of evaluating robustness against multiple intervention sets.

\subsection{Lower bounds via best-response dynamics}

In addition to an upper bound on $\introb(\intvSet_{\bn_{\df}}, \evidence)$, we can also straightforwardly lower bound this quantity using any witness. In this section, we will assume the setting of parametric interventions of the form $\intvSet_{\bn}[\intvVars]$, or of \struc interventions where the search is over a single context function.

\begin{algorithm}

\SetAlgoLined
\KwInput{$\bn=(\graph, \allParam)$, a Bayesian network; evidence $\evidence$ whose probability will be maximized; intervenable variables $\intvVars \subseteq \vars$. 
}
\KwResult{Output probability $p(e) \leq \max_{\bn' \in {\intvSet}_{\bn}[\intvVars]} P_{\bn'}(\evidence)$}
\Begin{
$v \gets 0$;\\
\While{$p_{\bn[\allParam_{\intvVars}]}(\evidence) > v$}{
	    $v \gets p_{\bn[\allParam_{\intvVars}]}(\evidence)$; \\
\For{CPT $\cpt^{(\graph)}_{W|u} \in \allParam_{\intvVars}$ } {
    $\cpt^{(\graph)}_{W|\parentsvalue} \gets \text{arg max}_{\cpt'_{W|\parentsvalue}} p_{
\mathcal{N}[ \cpt'_{W|\parentsvalue}]}(\evidence)$;\\
   }
}
}
\caption{Lower Bounding}
\label{alg:brdynamics}
\end{algorithm}

We obtain such an approach by formalizing the problem of finding an intervention which maximizes $\prob_{\bn'}(\evidence)$ as a multiplayer game, where each instantiation  $\parentsvalue_{\intvVar}$ of parents $\Pa_{\graph'}(\intvVar)$ for each $\intvVar \in \intvVars$ specifies a player, and where all players share a utility function given by $\prob_{\bn[\allParam]}(\evidence)$. Each player's strategy set consists of the set of deterministic conditional distributions $\cpt_{W|\parentsvalue}$ (we note w.l.o.g. that, by the multilinearity of the network polynomial, the optimal value of $P_{\bn[\allParam']}(e)$ is obtained by at least one deterministic interventional distribution). A Nash equilibrium in this game then corresponds to an interventional distribution for which no change in a single parameter can increase $\prob_{\bn'}(\evidence)$. Algorithm~\ref{alg:brdynamics} follows best-response dynamics in this game. 
We provide an analysis of the time complexity and convergence of this approach in the proof of the following proposition.

\begin{restatable}{proposition}{thmLB}\label{thm:lb-correctness}
Algorithm \ref{alg:brdynamics} converges to a locally optimal parametric intervention in finite time. Further, if the algorithm is stopped before termination, the current value $v$ will be a lower bound on $\max_{\bn' \in \intvSet[\intvVars]}P_{\bn'}(\evidence)$.
\end{restatable}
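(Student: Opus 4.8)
I would view Algorithm~\ref{alg:brdynamics} as best-response dynamics in a finite common-payoff (team) game, for which the shared objective is an exact potential function, and then read off both claims. First I would formalize the game described in the text: the players are indexed by pairs $(\intvVar, \parentsvalue)$ with $\intvVar \in \intvVars$ and $\parentsvalue$ an instantiation of $\Pa_{\graph}(\intvVar)$ (or of $\Pa_{\graph'}(\intvVar)$ in the single-context \struc case), a player's pure strategy is a choice of conditional row $\cpt_{\intvVar \mid \parentsvalue}$, and all players share the utility $u(\allParam_{\intvVars}) := \prob_{\bn[\allParam_{\intvVars}]}(\evidence)$. The key enabling step is to reduce the a priori continuous strategy sets (the simplices in $\mathbb{P}_{\graph}(\intvVars)$) to a finite set. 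By the definition of the network polynomial, $\prob_{\bn[\allParam_{\intvVars}]}(\evidence)$ is multilinear in $\allParam$, so for a single fixed row (holding all other parameters fixed) it is the linear function $\sum_{\intvVarvalue} \param_{\intvVarvalue\mid\parentsvalue}\, c_{\intvVarvalue}$ with nonnegative coefficients over the simplex $\{\param_{\intvVarvalue\mid\parentsvalue} \ge 0,\ \sum_{\intvVarvalue} \param_{\intvVarvalue\mid\parentsvalue} = 1\}$. A linear function on a simplex is maximized at a vertex, i.e.\ at a deterministic row, so each best-response update can be taken deterministic and its argmax computed by evaluating the AC once per value $\intvVarvalue \in \mathrm{dom}(\intvVar)$. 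Hence after the first full sweep every player's row is deterministic, and the reachable state space is the finite lattice of deterministic interventions.

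\textbf{Termination and local optimality.} Because all players share $u$, every best-response step weakly increases $u$ (the current row is always a candidate in the argmax), so $u$ is non-decreasing along the entire run. The outer loop records $v = u$ at the start of a sweep and continues only if a full sweep strictly increased $u$; since $u$ takes finitely many values on the finite set of deterministic profiles, it cannot strictly increase infinitely often, so only finitely many sweeps improve and the loop halts, giving finite-time termination. To identify the halting profile as a local optimum, I would argue that on the terminating sweep $u$ stays constant throughout (it is non-decreasing and ends equal to its starting value), so for each player in turn its current row already attains the per-player maximum; adopting the natural convention that a player keeps its row when it is already optimal, no player moves during this sweep, the final profile is unchanged, and every single-row deviation leaves $u$ unchanged. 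This is exactly a pure Nash equilibrium of the team game, i.e.\ a parametric intervention for which no change to a single CPT row can increase $\prob_{\bn'}(\evidence)$ — the claimed local optimum. Handling these ties cleanly, so that a constant-utility sweep genuinely certifies a local optimum rather than merely masking an equal-payoff cycle, is the main obstacle and the point I would be most careful about.

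\textbf{Anytime lower bound and complexity.} For the second claim, at every point in the execution the maintained parameters $\allParam_{\intvVars}$ form a valid parameter set in $\mathbb{P}_{\graph}(\intvVars)$, so $\bn[\allParam_{\intvVars}] \in \intvSet_{\bn}[\intvVars]$ is a genuine member of the intervention set; since $v = \prob_{\bn[\allParam_{\intvVars}]}(\evidence)$ for this specific member and the maximum over the set dominates any member, $v \le \max_{\bn' \in \intvSet_{\bn}[\intvVars]} \prob_{\bn'}(\evidence)$. Stopping the algorithm at any time therefore returns a valid and (by monotonicity) non-decreasing lower bound. I would close by recording the per-iteration cost: each sweep performs one best response per player, each best response costing $|\mathrm{dom}(\intvVar)|$ linear-time passes over the AC, so sweeps run in polynomial time, and combined with the finite bound on the number of improving sweeps this yields the finite-time guarantee. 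The \struc case with a single context function is identical after replacing $\graph$ by the fixed $\graph'$, since only the parameters then vary.
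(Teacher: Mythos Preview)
Your proposal is correct and follows essentially the same approach as the paper: both frame Algorithm~\ref{alg:brdynamics} as best-response dynamics in a common-payoff (hence exact potential) game whose players are the CPT rows $(\intvVar,\parentsvalue)$, reduce to finite deterministic strategy sets via multilinearity of the network polynomial, and obtain the anytime lower bound from the witness argument. The only stylistic difference is that you unpack the convergence argument directly (monotone potential on a finite profile set, with explicit attention to ties), whereas the paper cites the standard result that best-response dynamics converge to a Nash equilibrium in potential games; your extra care about tie-handling and the per-sweep cost is a welcome refinement but not a different route.
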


\section{Case Study: Insurance}
In this case study, we look at an extended version of the car insurance example, using the Bayesian network model shown in Figure \ref{fig:insurancebn} \citep{Binder97Insurance}.

\begin{figure}[htb]
    \centering
    \includegraphics[scale=0.45]{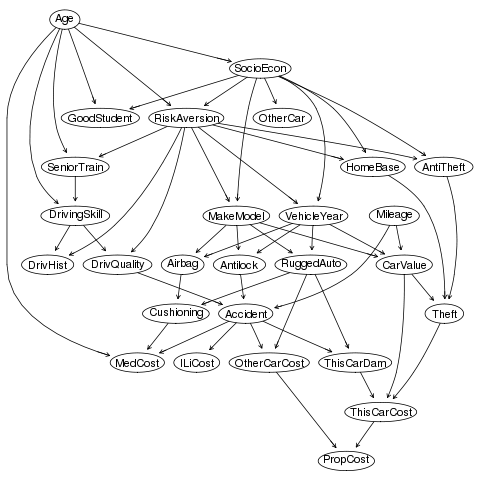}
    \caption{INSURANCE Bayesian network}
    \label{fig:insurancebn}
\end{figure}

Suppose an insurance company wishes to use a classifier to predict \texttt{MedCost} (the medical cost of an insurance claim), and has access to an insurant's \texttt{Age}, \texttt{DrivHist}, and \texttt{MakeModel} (categorical variables with 3-5 values). \texttt{MedCost} is either \textit{BelowThousand} (0) or \textit{AboveThousand} (1). They fit a Na\"ive Bayes classifier to historical data, obtaining a decision function $F$. This is then used as part of their decision-making policy determining what premiums to offer to customers. 

The company is particularly concerned about false negatives, as this could result in the company losing a lot of money in payouts. Based on the original Bayesian network model (Figure \ref{fig:insurancebn}) and their new classifier, this should occur $2.5\%$ of the time. However, in reality, insurants may attempt to game the classifier to predict \textit{BelowThousand} (so that they get lower premiums), while actually being likely to have a high medical cost. In our framework, we model this using structural interventions, assuming that insurants can causally intervene on some of \texttt{DrivHist} (perhaps hide some accident history), \texttt{MakeModel} (choose a different type of car than they would normally choose), and \texttt{Cushioning} (upgrade/downgrade the degree of protection inside the car). The company would thus like to understand how \emph{robust} their classifier is to these adaptations.

We will consider a number of structural intervention sets $\intvSet_{\bn_{\df}}$, given by intervenable variables $\intvVars$, which may be any subset of $\{\texttt{DrivHist, MakeModel, Cushioning}\}$. Under each of these intervention sets, we seek to obtain guaranteed upper bounds on these two quantities:
\begin{itemize}
    \item \textbf{FN}: The probability of a false negative $p(F = 0, \texttt{MedCost} = 1)$, i.e. predicted low medical cost, but high actual medical cost.
    \item \textbf{P}: The probability of a positive $p(\texttt{MedCost} = 1)$, i.e. high actual medical cost.
\end{itemize}

\begin{table}[]
\centering
\begin{tabular}{lll}
\toprule
\textbf{Intervenable Variables $\intvVars$} & \textbf{FN} & \textbf{P} \\ \midrule
Empty Set                                     & 2.5\%       & 7.2\%      \\
\{DrivHist\}                                  & 7.2\%       & 7.2\%      \\ 
\{MakeModel\}                                 & 5.7\%       & 10.0\%     \\ 
\{Cushioning\}                                & 6.1\%       & 12.9\%     \\ 
\{DrivHist, MakeModel\}                       & 10.0\%      & 10.0\%     \\ 
\{DrivHist, Cushioning\}                      & 12.9\%      & 12.9\%     \\ 
\{MakeModel, Cushioning\}                     & 13.0\%      & 13.9\%     \\ 
\{DrivHist, MakeModel, Cushioning\}           & 13.9\%      & 13.9\%     \\ \bottomrule
\end{tabular}

\caption{Guaranteed upper bounds on FN and P, under different structural intervention sets}
\label{tab:insurance_fnp}
\end{table}

The results are shown in Table \ref{tab:insurance_fnp}. The insurance company can use these bounds to assess risk, and improve their classifier's robustness if they deem the false negative rate under intervention unacceptable. 

The bounds can also provide further insight. We notice that whenever \texttt{DrivHist} is intervenable, the percentage of false negatives is the same as positives, i.e. the classifier always predicts wrong when \texttt{MedCost} is $1$. This turns out to be because the Na\"ive Bayes classifier always predicts $0$ whenever \texttt{DrivHist} is \textit{None}, regardless of the other input variables. Thus, an insurant who can change their \texttt{DrivHist} can always fool the classifier to predict $0$. In addition, the percentage of positives doesn't increase from the original BN: this can be seen from the causal graph, where \texttt{DrivHist} has no causal influence on \texttt{MedCost}. 

On the other hand, \texttt{Cushioning} significantly increases the positive rate. Notice that, in the graph, intervening on \texttt{Cushioning} will not have any influence on the inputs to the classifier; thus, the increase in FN to $6.1\%$ is not due to fooling the classifier, but rather making high medical expenses generally more likely, by downgrading the quality of cushioning. In this way, the intervention is "taking advantage" of the classifier not having full information about cushioning.

\section{Evaluations}

\subsubsection{Compilation Performance}
In Table \ref{tab:ac_size_preview} we show the performance of our joint compilation approach on a number of benchmark Bayesian networks, where we jointly compile the network and a decision rule (see Appendices \ref{apx:additional-evals} and \ref{apx:exp_details} for further details). We observe that the sizes of the compiled ACs are significantly smaller than the worst-case bounds would suggest (exponential in treewidth). Further, when we enforce a topological ordering, the size of the compilation increases, but not by more than $\sim 100$. Our results provide evidence that our methods can scale to fairly large networks and classifiers, including networks compiled with topological and \struc topological orderings.

\begin{table}[htb] 
\begin{tabular}{lllrrr}
\toprule
\textbf{Net} & \textbf{CSize} & \textbf{Ord} & \textbf{TW} & \textbf{AC size} & \makecell{\textbf{Time} \\ \textbf{(s)} }  \\ \midrule
insurance         &  3 (41)                            & N       &    24      & 167121          & 0.5                         \\ 
         & 3 (41)                            & T       &    31     & 794267         & 4  \\
         & 3 (41)                            & S       &    33     & 1270075         & 8  \\ \midrule
win95pts         &  16 (799)                            & N        &    51    & 1210072          & 3                         \\ 
         & 16 (799)                            & T        &    58     & 52266950          & 77                         \\ \midrule
hepar2         & 12 (946)                            & N        &    53    & 8096874          & 49                         \\ 
        & 12 (946)                            & T        &    51     & 123108407          & 73                         \\
        & 12 (946)                            & S        &    51     & 123164181          & 75                         \\
\bottomrule
\end{tabular}
\caption{AC sizes and times (s) for the joint compilations used in the \ubn and \lbn algorithms. Shown are the number of input features $d$ and the sizes of the Boolean circuits representing the classifier, ordering constraints (none, topological, or \struc topological), treewidth of the combined CNF encoding, and AC size and compilation time. Further evaluations provided in Appendix~\ref{apx:additional-size}.}
\label{tab:ac_size_preview}
\end{table}

\begin{table}[htb]
\centering
\begin{tabular}{lllll}
\toprule
\textbf{Network} & \textbf{IntSet} &  \textbf{LBound} & \textbf{UBound} & $\bm{\Delta}$\\ \midrule
insurance        & P1                           &       0.1181            & 0.1276      & \textbf{0.0095}   \\ 
       & P2                            &       0.3275            & 0.3433    &     \textbf{0.0158}  \\ 
        & S1                          &       0.1181            & 0.1297       &  \textbf{0.0116} \\ \midrule
win95pts         & P1                            &       0.2111            & 0.2111    &    \textbf{0.0000}    \\ 
        & P2                          &       0.2163            & 0.2191       &   \textbf{0.0028}  \\ \midrule
hepar2         & P1                & 0.09445     & 0.09445 & \textbf{0.0000}        \\ 
        & P2                 & 0.09585     & 0.09585 & \textbf{0.0000}          \\
        &  S1              &  0.1029    & 0.1029 & \textbf{0.0000}       \\

\bottomrule
\end{tabular}
\caption{Analysis of the tightness of bounds (on probability of false negatives) produced by Algorithms \ref{alg:bottom_up} and \ref{alg:brdynamics}. For each network, we have different intervention sets (P/S indicates the intervention set is parametric/structural respectively). 
Lower and upper bounds, along with the difference, are shown for each intervention set. Further evaluations provided in Appendix~\ref{apx:tightness}.
}

\label{tab:tightness}
\end{table}

\subsubsection{Lower and Upper Bound tightness}
\label{sec:tightness}

In Table~\ref{tab:tightness} we analyse the quality of our upper and lower bounds on interventional robustness. We compute bounds on false negative probability under different intervention sets. Overall, we find small or nonexistent gaps between the lower and upper bounds across all networks and intervention sets evaluated, suggesting that in many settings of interest it is possible to obtain tight guarantees using our algorithms. 

Further, both bounding algorithms are very fast to execute, taking no more than a few seconds for each run. This is remarkable given the sizes of the intervention sets. For instance, for the insurance network, the parametric intervention set P2 covers 6 variables ($|\intvVars| = 6$), 248 parameters, and $\sim 10^{36}$ different interventions, making brute-force search clearly infeasible. 
For worst-case (interventional robustness) analysis, the sensitivity analysis method of \citet{ChanDarwiche04} requires $\sim 10^7$ passes through the AC in this case. On the other hand, our upper bounding algorithm requires an ordered AC (which is $\sim 5$ times larger in this case), but requires just a \emph{single} pass through the AC, making it $\sim 10^6$ faster. Further, our algorithm is uniquely able to provide guarantees for structural intervention sets.

\section{Conclusions}
In this work, we have motivated and formalized the interventional robustness problem, developed a compilation technique to produce efficient joint representations for classifiers and DGPs, and provided tractable upper and lower bounding algorithms which we have shown empirically to be tight on a range of networks and intervention sets. The techniques presented here provide ample opportunity for further work, such as extending the upper and lower bounding technique to networks where the modeller has uncertainty over the parameters, and developing learning algorithms for arithmetic circuits which permit reasoning about causal structure. 

\paragraph{Acknowledgements}
This project was funded by the ERC under the European Union’s Horizon 2020 research and innovation programme (FUN2MODEL, grant agreement No. 834115).

\bibliographystyle{named}
\bibliography{causalRobust}

\appendix

\section{Proof of Theorem \ref{thm:hardness}}

We now prove Theorem \ref{thm:hardness}. To do so, we will formalize the intervention maximization problem, whose decision form is IntRob, as follows.

$\introb$ is the Intervention Robustness problem. Note that while the definition given in Section~\ref{sec:introb-definition} is agnostic to the form of the interventions, in order to concretely bound the size of the input to the problem, we restrict ourselves to Bayesian networks defined on discrete random variables and intervention sets of the form $\intvSet_{\bn}(\intvVars)$. Thus, the problem takes as input a Bayesian network $\bn$, a set of intervenable nodes $\intvVars$, and some evidence $\evidence$. The goal is to find a parametric intervention on the variables in $\intvVars$ $\allParam'$ such that $P_{\bn[\allParam']}(\evidence)$ is maximized. The decision version of this problem is intervention robustness.

\textbf{MAP}: is Maximum a Posteriori inference, a well-studied problem which takes as input network $\bn$, variables $\intvVars$, and evidence $\evidence$, and whose objective is to find an instantiation $\intvVars = w$ such that $P_{\bn}(w, \evidence)$ is maximal.

\thmMAP*
Without loss of generality, in the proof of this statement we will assume all variables are binary-valued; it is straightforward to then extend the results here to arbitrary discrete random variables supported on a finite set. 
\begin{lemma}\label{lem:map2introb}
MAP is reducible to \introb.
\end{lemma}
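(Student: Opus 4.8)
The plan is to exhibit a polynomial-time mapping that turns any MAP instance $(\bn, \intvVars, \evidence)$ into an \introb{} instance whose optimal value coincides with the MAP value $\max_{\intvVarvalue} \prob_{\bn}(\intvVars = \intvVarvalue, \evidence)$. The conceptual difficulty is that a parametric intervention on a node $\intvVar$ replaces its entire CPT $\cpt_{\intvVar \mid \parentsvalue}$, and the optimal deterministic such intervention may set $\intvVar$ to a \emph{different} value for each parent configuration $\parentsvalue$; this is strictly more freedom than MAP, which commits to a single value of $\intvVar$. I will remove this extra freedom by making the intervenable nodes \emph{parentless}, so that a deterministic intervention is forced to pick a single value, while still retaining the original CPT factor $\param_{\intvVarvalue \mid \parentsvalue}$ in the objective (which MAP also keeps).

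For the construction, assume all variables are binary (as noted above). I build $\bn'$ from $\bn$ by leaving every original node and CPT untouched and, for each $\intvVar_i \in \intvVars$, adding two fresh binary nodes: a parentless \emph{selector} $S_i$ (whose CPT will be the intervenable one) and a \emph{clamp} $Z_i$ with $\Pa(Z_i) = \{\intvVar_i, S_i\}$ and deterministic table $\param_{Z_i = 1 \mid \intvVarvalue_i, s_i} = \mathbbm{1}[\intvVarvalue_i = s_i]$. I then take the intervention set $\intvVars' = \{S_i : \intvVar_i \in \intvVars\}$ and evidence $\evidence' = \evidence \cup \{Z_i = 1 : \intvVar_i \in \intvVars\}$. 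Each pair $\{S_i, Z_i\}$ forms a triangle hanging off $\intvVar_i$ in the moralized graph, so $\bn'$ has $|\vars| + 2|\intvVars|$ nodes with constant-size new CPTs (hence size linear in $|\bn|$), and any tree decomposition of the moralized $\bn$ extends to one of $\bn'$ of width at most $w + 2$.

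For correctness, I would first argue that a \emph{deterministic} intervention on the roots $\{S_i\}$ is equivalent to choosing an assignment $\intvVarvalue$ to $\intvVars$: fixing $S_i$ to value $s_i$ with probability one, the clamp evidence $Z_i = 1$ zeroes out every term with $\intvVarvalue_i \neq s_i$, so the surviving terms are exactly those with $\intvVars = \intvVarvalue$, each still weighted by the untouched factor $\param_{\intvVarvalue_i \mid \parentsvalue_i}$. Summing out the remaining variables then gives $\prob_{\bn'}(\evidence') = \prob_{\bn}(\intvVars = \intvVarvalue, \evidence)$, so the maximum over deterministic interventions equals the MAP value. To lift this to all parametric interventions I would invoke the multilinearity of the network polynomial (the same observation used in Proposition~\ref{thm:lb-correctness}): $\prob_{\bn'}(\evidence')$ is affine in each root CPT, which ranges over a probability simplex, so its maximum over the product of these simplices is attained at a vertex, i.e.\ a deterministic intervention. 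Hence $\introb((\bn', \intvVars'), \evidence') = \max_{\intvVarvalue} \prob_{\bn}(\intvVars = \intvVarvalue, \evidence)$, completing the reduction.

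The main obstacle is reconciling the richer ``function-of-parents'' freedom of parametric interventions with MAP's single-value choice; the selector/clamp gadget together with the deterministic-optimum (multilinearity) argument is precisely what neutralizes this mismatch, and verifying that the clamp exactly reproduces the MAP factor while leaving all other CPTs intact is the step that requires the most care.
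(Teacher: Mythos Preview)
Your proposal is correct and is essentially identical to the paper's proof: your selector $S_i$ is the paper's $V_\theta$, your clamp $Z_i$ is the paper's $A_V$, and the evidence, intervention set, size, and treewidth bounds all match. The one place you are slightly more careful is in explicitly invoking multilinearity to argue that the optimum over parametric interventions on the parentless selectors is attained at a deterministic vertex, which the paper leaves implicit.
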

\begin{proof}
Let $\bn, \intvVars, \evidence$ be an instantiation of the MAP problem. We can convert this into the $\introb$ problem by adding the following sets of nodes to $\bn$ in order to produce a new network $\bn'$. 
\begin{enumerate}
\item For each $V \in \intvVars$, add a node $V_\theta$ with the same support as $V$, and which has no incoming arrows.
\item Additionally, for each $V \in \intvVars$ add a node $A_V$ with parents $V$ and $V_\theta$ with support True/False, which is True with probability 1 if $V = V_\theta$ and False otherwise. Let $\mathbf{A}_{\intvVars}$ denote the set of all such nodes.
\end{enumerate}
\begin{figure}
    \centering
      \tikz{ %
  	
        \node[obs] (V) {V} ; 
        \node[obs, left=of  V] (V1) {$V_\theta$} ; %
        \node[obs, below left=of V](A){$A_V$}; %
        \node[obs, above right=of V] (U) {Parents};
        \node[obs, below right=of V](C){Children};

		\edge {V} {A};
		\edge{V1}{A};
		\edge{U}{V};
		\edge{V}{C};
		
      }
    \caption{Visualization of the construction of $\bn'$ for the proof of Lemma~\ref{lem:map2introb}.}
    \label{fig:map2introb}
\end{figure}
\begin{figure}
    \centering
      \tikz{ %
  	
        \node[obs] (V) {V} ; 
        \node[obs, above left=of  V] (V1) {$V_{\parents=u_1}$} ; %
        \node[obs, above =of  V] (V2) {$V_{\parents=u_2}$} ; %
        \node[obs, above right=of V] (U) {$\parents(\var)$};
        
		\edge{U}{V};
		\edge{V2}{V};
		\edge{V1}{V};
		
      }
    \caption{Visualization of the intuition behind the construction of $\bn'$ for the proof of Lemma~\ref{lem:introb2map}.}
    \label{fig:intuition}
\end{figure}

We now show that in this new network $\bn'$, \introb($\bn'$, $\intvVars_\theta$, $\{\mathbf{A_V}=$ True, $\evidence\}$) is equal to MAP($\bn$, $\intvVars$, $\evidence$). 

We first observe that for a single $V \in \intvVar$, we have 
\begin{align*}
    P_{\bn}(V=v) &= P_{\bn[\cpt_{V_\theta}=\mathbbm{1}\{V_\theta=v\}]}(V = V_\theta)) \\
    &=P_{\bn[\cpt_{V_\theta}=\mathbbm{1}\{V_\theta=v\}}(A_V=\text{True}))\; .
\end{align*}
Because $V_\theta$ and $A_V$ are independent of the rest of the graph given $V$, we then straightforwardly obtain that for additional evidence $\evidence$, the same equality holds for the joint evidence $(V=v, e)$:
\begin{equation*}
    P_{\bn[\cpt_{V_\theta}=\mathbbm{1}\{V_\theta=v\}}(A_V=\text{True}, \evidence) = P_{\bn}(V=v, \evidence) \; .
\end{equation*} 
Finally, this equality can be iterated to incorporate all nodes $V \in \intvSet$, and so for any instantiation $\mathbf{w}=(\mathbf{w}^1, \dots \mathbf{w}^n)$ of $\intvVars = \{V^1, \dots, V^n\}$ with corresponding parameters $\allParam'_{\mathbf{w}} = \{\mathbbm{1}[V^i_\theta = \mathbf{w}^i] | V^i \in \intvVars \}$  we obtain
\begin{equation*}
     P_{\bn[\allParam_{\mathbf{w}}']}(\mathbf{A}_{\intvVars}=\text{True}, \evidence) = P_{\bn}(\intvVars=\mathbf{w}, \evidence) \; .
\end{equation*}
So the parametric interventions which maximize $P(\mathbf{A_{\intvVars}}=T, \evidence)$ are equivalent to the variable values $\bm{w}$ which maximize $P(\intvVars=\bm{w}, \evidence)$. 

It is straightforward to show that the size of the resulting BN $\bn'$ satisfies the theorem statement. We have increased the number of nodes by $2|\intvVars| \leq 2 | \vars |$, and added CPTs of size $|\text{supp}(V)|^2$, which for binary variables will be fixed at 4, so $|\allParam '| \leq |\allParam | + 4 |\vars|$. Finally, we have not increased the treewidth of the network by more than a constant increment of 2 because we have added a fully connected component with 2 additional nodes to each variable $V \in \intvSet$ with no other edges into the graph.
\end{proof}

\begin{lemma}
\label{lem:introb2map}
$\introb$ is reducible to MAP.
\end{lemma}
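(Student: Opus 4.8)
The plan is to exploit the \textbf{multilinearity} of the network polynomial, which the paper already invokes to guarantee that the supremum defining $\introb$ is attained by a \emph{deterministic} parametric intervention. Such an intervention is exactly a \emph{policy} that assigns, to each intervenable variable $\intvVar \in \intvVars$ and each instantiation $\parentsvalue$ of its parents $\Pa_{\graph}(\intvVar)$, a fixed value of $\intvVar$. The whole reduction then rests on a single idea: encode the choice of policy as the instantiation of a collection of fresh MAP variables, arranged so that maximizing $\prob_{\bn'}$ over their instantiations reproduces the maximization over policies that defines $\introb$.

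Concretely, I would build $\bn'$ from $\bn$ along the lines suggested by Figure~\ref{fig:intuition}. For each intervenable $\intvVar$ and each parent configuration $\parentsvalue$ I add a \emph{selector} root node with the same support as $\intvVar$ and a uniform prior, and I replace the mechanism of $\intvVar$ by a \emph{deterministic} one that copies the value of the selector matching the current parent configuration. The MAP instance is then $(\bn', \intvVars_{\mathrm{sel}}, \evidence)$, where $\intvVars_{\mathrm{sel}}$ is the set of all selectors and the evidence is unchanged. Correctness is then short: since each selector is a root and $\intvVar$ is a deterministic function of its parents and selectors realizing precisely the policy $\pi_s$ encoded by a selector instantiation $s$, one obtains $\prob_{\bn'}(\intvVars_{\mathrm{sel}} = s, \evidence) = c\cdot \prob_{\bn[\pi_s]}(\evidence)$, where $c$ is the product of the uniform selector priors and is the same for every $s$. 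Maximizing over $s$ and dividing by $c$ recovers $\introb$ exactly, so the decision version reduces by rescaling the threshold $\epsilon$.

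The subtlety — and the reason the literal reading of Figure~\ref{fig:intuition} is only ``intuition'' — is that making $\intvVar$ depend on \emph{all} of its selectors at once produces a CPT of size exponential in the number of parent configurations, and moralizing that family creates a clique containing every selector, which would blow up the treewidth far beyond $2w$. I would therefore realize the ``copy the matching selector'' operation through a bounded-in-degree \emph{multiplexer gadget}, e.g. a chain of intermediate nodes that consumes one parent (or one parent configuration) at a time, so that $\bn'$ has only polynomially many auxiliary CPTs, each of bounded arity; this gives $|\allParam'|$ polynomial in $|\allParam|$ and leaves the correctness argument above unchanged, since the deterministic gadget nodes sum out to $1$.

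The main obstacle is the treewidth bound $w' \le 2w$. Here I would argue directly on the moral graph of $\bn'$: starting from a tree decomposition of $\bn$ of width $w$, each intervenable family $\{\intvVar\}\cup\Pa_{\graph}(\intvVar)$ already lies in a single bag, and since $|\Pa_{\graph}(\intvVar)| \le w$ the gadget can be laid out so that every new bag consists of this parent set (at most $w$ nodes) together with at most $w$ gadget nodes recording the running selection. Verifying that these bags glue into the original decomposition while preserving the running-intersection property, and that the width never exceeds $2w$, is the delicate part; the polynomial-size and correctness claims are comparatively routine once the gadget is fixed.
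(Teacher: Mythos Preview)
Your proposal is correct and follows essentially the same approach as the paper: introduce uniform-prior selector roots $V_{\parentsvalue}$ for each intervenable $\intvVar$ and parent configuration $\parentsvalue$, make $\intvVar$ a deterministic copy of the matching selector, and then avoid the exponential CPT blowup via a bounded-arity multiplexer gadget. The paper instantiates your ``multiplexer'' concretely as a binary-tree \emph{selector circuit} that consumes one parent $U_i$ per level (so each auxiliary node has in-degree~$3$), and derives the treewidth bound by observing that this circuit alone has treewidth at most $2|\parents|+1$ via a DFS-style decomposition, then using $|\parents|\le w$; your bag-construction sketch is aiming at the same bound by a slightly different bookkeeping, and you correctly flag this gluing step as the delicate part.
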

\begin{proof}
\medskip

For the opposite direction, we show that we can use MAP to solve \introb (with parametric interventions). Let $\bn, \intvVars, \evidence$ be inputs into \introb. We initially construct $\bn'$ as a copy of $\bn$. We will proceed by converting the parameters $\cpt_{V|u}$ into variables $V_u$ in $\bn'$, where observing the value $V_u=v$ in $\bn'$ is equivalent, up to a constant factor, to setting $\cpt_{V|u} := \mathbbm{1}[V=v]$ in the original network $\bn$, while avoiding an exponential blowup in the size of this new Bayesian network.

\medskip
\textbf{Intuition:} For each CPT component $\cpt_{V|u}$ in $\bn$ such that $V \in \intvVars$, introduce a new ``parameter-node" $V_{u}$ to $\bn'$ with support equal to that of $V$ and uniformly distributed (i.p. $V_u$ has no parents), and add an edge into $V$. Set $P(V=v' |u, V_u) = \mathbbm{1}[ V_{u} = v']$. In other words, if the value of $U = u$, then $V$ gets the value of the node $V_u$ deterministically. This construction is visualized in Figure~\ref{fig:intuition}.

Naively, this will blow up the CPT for $V$ by a factor exponential in the size of the parent set, and may increase the treewidth of the network by this factor as well. We therefore proceed in the following construction to minimize the impact on the size of the representation of $\bn'$. This construction will still add in the worst case $|\allParam|$ variables to the network, but it will only increase the size of $| \allParam|$ by a linear factor and will only increase the treewidth by at most a factor of 2. 

\medskip 

\textbf{Selector circuits:} in a more efficient (and more involved) construction, we add a number of auxiliary variables that act as a filter based on the values of the parents $U$ to pass down the correct value to $V$. We visualize an example of this construction in Figure~\ref{fig:selector}. We enumerate the parents of $V$ as $U_1, \dots, U_n$, and add auxiliary variables $\{V_{\parentsvalue} | \parentsvalue \in \text{supp}(\parents)\}$ to the graph as described in the intuition above. We then add auxiliary variables $S_{ij}$, with $i$ ranging from 1 to $n$ and $j$ ranging from $1$ to $2^{|U| -i}$ for each level $i$. To each variable $V_u$ we assign it the binary string $u_n \dots u_1$. For each prefix $u \in \{0,1\}^{n-1}$ we then draw arrows from $U_1$, $V_{u0}$ and $V_{u1}$ to $S_{1, u}$and define the conditional $P( S_{1, u} | V_{u0}, V_{u1}, U_1=u_1) = \mathbbm{1}[S_{1,u} = V_{u, u_1}]$. We inductively define at layer $i$ for a prefix $u \in \{0,1\}^{i-1}$ the random variable $S_{i,u}$, with parent variables $S_{i-1, u0}, S_{i-1, u1}$, and $U_{i}$, and conditional distribution $P(S_{i, u} | S_{i-1, u0}, S_{i-1, u1}, U_i=u_i) = \mathbbm{1}[S_{i,u} = V_{u, u_i}]$. The value of $S_{n, \emptyset}$ will therefore be deterministically the value of $V_u$ for $U = u$, and so we can simply set the CPT of $V$ to depend uniquely and deterministically on $S_{n, \emptyset}$.

This procedure will not add more nodes than there are parameters in the CPT to the graph (i.e. the increase in the number of nodes $n$ is bounded by $\exp(w)$) and will not increase the treewidth by more than a linear factor, as the selector circuit has treewidth at most $2|\parents| + 1$ (this is easily observed by forming a tree decomposition via a depth-first-search procedure). The maximal number of nodes this construction can add to the table is therefore $2|\allParam|$, assuming that all $n$ nodes are set to be intervened on, and we can increase the treewidth by a factor of at most 2, independent of the number of nodes we modify. We therefore obtain that for $\bn'$: $|\vars| \leq |\vars| + 2 | \allParam |$, $| \allParam | \leq 16 | \allParam |$, and $w' \leq 2w$.

The end result of this construction is that we have changed the distribution of the variable $V$ so that it now depends on $2^{|\parents|}$ additional random variables $V_{\parentsvalue_1}, \dots, V_{\parentsvalue_k}$, and deterministically takes the value of $V_{u_i}$ whenever its parents satisfy $U = u_i$. 
\medskip 
\begin{figure}
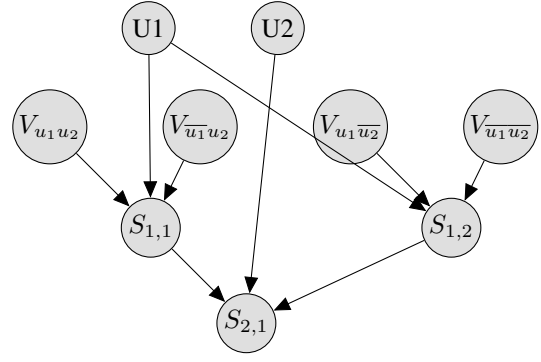

\centering
\tikz{

	\node[obs] (U1) {U1};
	\node[obs, right=of U1] (U2){U2};
	\node[obs, below left=of U1] (V11) {$V_{u_1u_2}$};
	\node[obs, right=of V11] (V12){$V_{\overline{u_1}u_2}$};
	\node[obs, right=of V12] (V21) {$V_{u_1\overline{u_2}}$};
	\node[obs, right=of V21] (V22){$V_{\overline{u_1}\overline{u_2}}$};
	\node[obs, below right=of V11](A1){$S_{1,1}$};
	\node[obs, below right=of V21](A2){$S_{1,2}$};
	\node[obs, below right=of A1](A3){$S_{2,1}$};

	\edge{U1}{A1};
	\edge{V11}{A1};
	\edge{V12}{A1};
	\edge{U1}{A2};
	\edge{V21}{A2};
	\edge{V22}{A2};
	\edge{A1}{A3};
	\edge{A2}{A3};
	\edge{U2}{A3};
	
}
\caption{Example `selector' circuit. At each level $i$, pair up solutions from paths that differ only on variables $u_1, \dots u_i$ (will be uniquely 2 for binary valued variables), then use the value of $u_i$ to `select' the correct value. I.e. $P(S_{1,1} = V_{u1, u2}|U_1) = 1$ if $U_1 = u_1$ else $P(A_1 = V_{\bar{u_1}u_2}) = 1$ if $U_1 = \bar{u_1}$.  After $|U|$ levels, the value of $A_n$ will be the value of $V_{u}$, and so will be used to substitute for $V$.}
\label{fig:selector}
\end{figure}

We now claim that in the new network $\bn'$, the conditional distribution on $V$ induced by observing $V_{\parentsvalue } = v_{\parentsvalue}$ is equal up to a constant to doing a parametric intervention on $\theta_{v|u}$ in $\bn$ which deterministically sets $v$ to $v_{\parentsvalue}$ conditioned on $\parents = \parentsvalue$. To see this, let $\mathbf{v_U}$ be an instantiation of the variables $\mathbf{V_U}$.
\begin{align*}
	P_{\bn'}(V=v|\parentsvalue,  \mathbf{v_U}) &= P_{\bn'} [V = v| V_u = v_u, \parents= \parentsvalue] \\
	&= \mathbbm{1}[v = v_u] \\
	&= P_{\bn[\theta_{v|u}=1]}[V = v|\parents = \parentsvalue] 
\end{align*}

We also observe that for any ancestor $U$ of $V$, $P_{\bn}(U) = P_{\bn'}(U)$, since we only changed the conditional distribution of a descendent of $\parents$. Further, for any descendent $D$ of $V$, we obtain $P_{B}(D | \parents = \parentsvalue, V = v) = P_{\bn'}(D|V=v, \parents=\parentsvalue)$. This is again because we did not change any conditional distribution for a descendent of $V$. 

Because $V_U$ is uniformly distributed in $\bn'$, we can therefore straightforwardly derive that for a single variable, $\introb$ can be reduced to MAP on $\bn'$. We decompose the evidence variables $\evidence$ into $\evidence_C$ and $\evidence_A$ (for descendents, ancestors of $V$ respectively). 
\begin{align}
&\max_{\cpt_{V|u}} P_{\bn[\cpt_{V|u}]}(\evidence) = \max_{\cpt_{V|u}} P_{\bn}(\evidence_A) P_{\bn[\cpt_{V|u}]}(\evidence_C ) \\
= & P_{\bn}(\evidence_A) \max_{v_u}  \sum_{u } P(\parentsvalue|\evidence_A) P(\evidence_C|V=v_u, \parentsvalue) \\
= &P_{\bn'}(\evidence_A) \max_{\mathbf{V_U}} 2^{|U|} P_{\bn'}(\mathbf{V_U} = \mathbf{v_U})  \\ &\sum_{u} P_{\bn'}(U = u|\evidence_A) P(\evidence_C|\evidence_A, u, V=v_{\parentsvalue}) \\
=&  2^{|\parents|} \max_{\mathbf{v_U}} P_{\bn'}(\evidence, \mathbf{V_U} = \mathbf{v_U} )
\end{align}

We note that because $2^{|\parents|}$ is a constant, it does not affect the maximization problem and so the two maximization problems will be maximized by equivalent parameter settings. Once the result is established for single variables, this observation is easily extended to the entire set $\intvVars$ by performing an analogous summation over all instantiations of variables which are parents of an element of $\intvVars$. The only trick is that, when summing over values $u$ of the parents of $\intvVars$, one must be careful in dealing with elements $v \in \intvVars$ which are also parents of intervenable variables. In such cases, we define $P_{\bn'}(\textbf{w}|\parentsvalue)$ to be zero if the elements at the intersection of $\intvVars$ and $\parents$ disagree on their assigned values.

\begin{align}
&\max_{\cpt_{\intvVars}} P_{\bn[\cpt_{\intvVars}]}(\evidence) = \max_{\cpt_{\intvVars}} P_{\bn}(\evidence_A) P_{\bn[\cpt_{\intvVars}]}(\evidence_C ) \\
= & P_{\bn}(\evidence_A) \max_{\mathbf{w}_{\parents}}  \sum_{\parentsvalue } P(\parentsvalue|\evidence_A) P(\evidence_C|\intvVars=\mathbf{w}_u, \parentsvalue) \\
= &P_{\bn'}(\evidence_A) \max_{\mathbf{w_U}} 2^{|\parents(\intvVars)|} P_{\bn'}(\mathbf{W_U} = \mathbf{w_U})  \\ &\sum_{u} P_{\bn'}(\parents = \parentsvalue|\evidence_A) P(\evidence_C|\evidence_A, \parentsvalue, V=v_{\parentsvalue}) \\
=&  2^{|\parents|} \max_{\mathbf{v_U}} P_{\bn'}(\evidence, \mathbf{V_U} = \mathbf{v_U} )
\end{align}
As noted previously, it is sufficient to consider parametric interventions that are deterministic to solve the $\introb$ problem (by the multilinearity of the network polynomial, the optimal value of $P_{\bn[\allParam']}(e)$ is obtained by at least one deterministic interventional distribution). Therefore, searching over the set of distributions induced by deterministic parametric interventions is equivalent to searching over conditional distributions induced by observing $\mathbf{V_U}$, and so this set of conditional distributions is sufficient for solving \introb.

\end{proof}

\section{Compilation Details and Proof of Proposition \ref{thm:joint}} \label{apx:compile}

Our goal with compilation is to produce an arithmetic circuit with AC polynomial equivalent to the network polynomial of $\bn_{\augbn}$. To do this, we first generate logical encodings of the decision function $\df$ and original Bayesian network $\bn$.

Any decision function $\df$ given as a Boolean circuit $\boolcircuit$ can be expressed as a CNF encoding $\cnfencoding_{\df}(\allInds_{\features}, \allInds_{\prediction}, \interVars)$ on indicator variables and intermediate variables. For discrete features $\features$ and discrete prediction $\prediction$, we assume that the Boolean circuit takes as input Boolean indicator variables $\ind_{\feature_i = \featuresvalue_i}, \ind_{\prediction = \predictionvalue}$, which outputs true for an instantiation $\featuresvalue, \predictionvalue$ iff $\predictionvalue = \df(\featuresvalue)$.  For instance, if $\feature_i$ is binary, we have separate variables $\ind_{\featurevalue_i}, \ind_{\bar{\featurevalue_i}}$ which take the place of $\feature_i$ and $\neg \feature_i$ respectively in the circuit.
With the inputs expressed as indicators, we then introduce an additional binary variable for every internal node in the Boolean circuit (forming "intermediate variables" $\interVars$), and use the Tseitin transformation to convert to CNF. The resulting CNF encoding $\cnfencoding(\allInds_{\features}, \allInds_{\prediction}, \interVars)$ has the property that, for a given instantiation $\featuresvalue$ of the features, the CNF encoding has exactly one model (satisfying assignment) with corresponding feature indicators $\allInds_{\features}$, and that model has $\ind_{\prediction = \df(\featuresvalue)}$ true and all other prediction indicators false. 

We can encode the Bayesian network $\bn$ using the following types of CNF clauses, following \citep{Darwiche02Belief}:
\begin{itemize}
    \item \textit{Indicator Clauses}: For variable $\var \in \vars$ with domain $\varvalue_1, ... \varvalue_k$, we include the following clauses:
    \begin{itemize}
        \item $\ind_{\varvalue_1} \vee ... \vee \ind_{\varvalue_k}$
        \item $\neg \ind_{\varvalue_i} \vee \neg \ind_{\varvalue_j}$ for $i \neq j$
    \end{itemize}
    \item \textit{Parameter Clauses}: For each parameter $\param_{\varvalue|\parentsvalue}$, we include the following clause:
    \begin{itemize}
    \item $\ind_{\varvalue} \wedge \ind_{\parentvalue_1} \wedge ... \wedge \ind_{\parentvalue_d} \implies \param_{\varvalue|\parentsvalue}$
    \item $\ind_{\varvalue} \wedge \ind_{\parentvalue_1} \wedge ... \wedge \ind_{\parentvalue_d} \impliedby \param_{\varvalue|\parentsvalue}$
    \end{itemize}
\end{itemize}

The models (satisfying assignments) of the resulting encoding $\cnfencoding_{\bn}(\allInds_{\vars}, \allParam)$ each correspond to an instantiation $\varsvalue$ of the variables $\vars$, with parameter variables set to true or false depending on whether they apply to that instantiation. Thus, they directly correspond to the terms of the network polynomial $l_{\bn}[\allInds_{\vars}, \allParam]$.

We now consider the joint encoding $\cnfencoding_{joint} = \cnfencoding_{\bn} \wedge \cnfencoding_{\df}$, on variables $\allInds := \allInds_{\vars} \cup \allInds_{\prediction}, \allParam, \interVars$:

\begin{restatable}{lemma}{propJoint} \label{prop:jointmodels}
The models of the joint encoding $\cnfencoding_{joint} = \cnfencoding_{\bn} \wedge \cnfencoding_{\df}$ correspond to the terms of the network polynomial $l_{\bn_{\augbn}}[\allInds, \allParam]$ (ignoring intermediate variables $\interVars$).
\end{restatable}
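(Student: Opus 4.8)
The plan is to exhibit an explicit bijection between the models of $\cnfencoding_{joint}$, projected onto the indicator and parameter variables $\allInds \cup \allParam$ (i.e.\ ignoring the intermediate variables $\interVars$), and the terms of $l_{\bn_{\augbn}}[\allInds, \allParam]$. The first thing I would observe is that, because the CPT of $\prediction$ in $\bn_{\augbn}$ is the deterministic table $\theta_{\predictionvalue|\featuresvalue} = \mathbbm{1}[\predictionvalue = \df(\featuresvalue)]$, a term of $l_{\bn_{\augbn}}$ indexed by an instantiation $(\varsvalue, \predictionvalue)$ of $\vars_{\augbn}$ is nonvanishing exactly when $\predictionvalue = \df(\featuresvalue)$, in which case the factor $\theta_{\predictionvalue|\featuresvalue}$ equals $1$ and the term reduces to $\big(\prod_{i} \ind_{\varvalue_i} \param_{\varvalue_i|\parentsvalue_i}\big)\,\ind_{\predictionvalue}$. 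Hence it suffices to biject models with the instantiations $(\varsvalue, \df(\featuresvalue))$ as $\varsvalue$ ranges over all instantiations of $\vars$. I would rely on the two facts established just above: the models of $\cnfencoding_{\bn}$ are in bijection with instantiations $\varsvalue$ of $\vars$ (with $\allParam$ set to select the applicable parameters), and for each feature instantiation $\featuresvalue$ the formula $\cnfencoding_{\df}$ has a unique model on $\allInds_{\features}, \allInds_{\prediction}, \interVars$, whose prediction indicators single out $\ind_{\prediction = \df(\featuresvalue)}$.

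For the forward direction, take a model $M$ of $\cnfencoding_{joint} = \cnfencoding_{\bn} \wedge \cnfencoding_{\df}$. Since the two conjuncts share only the feature indicators $\allInds_{\features}$, the restriction of $M$ to the variables of $\cnfencoding_{\bn}$ is itself a model of $\cnfencoding_{\bn}$ and thus fixes an instantiation $\varsvalue$ (in particular a feature instantiation $\featuresvalue$) together with the corresponding parameter assignment. The restriction of $M$ to the variables of $\cnfencoding_{\df}$ is a model of $\cnfencoding_{\df}$ whose feature indicators already coincide with $\featuresvalue$; by the uniqueness property of $\cnfencoding_{\df}$ this forces $\ind_{\prediction = \df(\featuresvalue)}$ to be true (all other prediction indicators false) and determines the intermediate variables. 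Reading off the product of the true indicators and the selected parameter variables of $M$ yields precisely the surviving term indexed by $(\varsvalue, \df(\featuresvalue))$.

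For the reverse direction, given such a term I would construct a candidate assignment by taking the unique $\cnfencoding_{\bn}$-model associated with $\varsvalue$ and the unique $\cnfencoding_{\df}$-model associated with $\featuresvalue$. The only thing to check is consistency on the shared variables: both assignments set $\allInds_{\features}$ according to $\featuresvalue$, so they agree, their union is well-defined, and it satisfies both conjuncts, i.e.\ is a model of $\cnfencoding_{joint}$. Finally I would verify that the two maps are mutually inverse; the forward-then-reverse composition is immediate, and the reverse-then-forward composition uses the Tseitin uniqueness of the $\interVars$ assignment given $\featuresvalue$ to conclude that projecting out $\interVars$ loses no information, so the correspondence is a genuine bijection. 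The main obstacle is bookkeeping rather than depth: one must argue carefully that the shared variables are exactly $\allInds_{\features}$ (so the two models glue cleanly), that the deterministic prediction CPT is precisely what makes the non-$\df$-consistent terms vanish and hence have no model, and that ignoring $\interVars$ is legitimate exactly because they are uniquely determined.
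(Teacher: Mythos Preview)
Your proposal is correct and follows essentially the same approach as the paper: both arguments reduce the network polynomial via the deterministic CPT for $\prediction$, then use that models of $\cnfencoding_{\bn}$ biject with instantiations $\varsvalue$ and that $\cnfencoding_{\df}$ has a unique model per feature instantiation, glued along the shared feature indicators. Your version is simply more explicit, spelling out both directions of the bijection and the Tseitin-uniqueness argument for $\interVars$, whereas the paper states the correspondence in one paragraph without separately verifying injectivity and surjectivity.
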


\begin{proof}
Since the mechanism for $\prediction$ in $\bn_{\augbn}$ is deterministic, we can forgo parameters for this mechanism, and write the network polynomial as:

$$l_{\bn_{\augbn}}[\allInds, \allParam] = \sum_{\varvalue_1, ..., \varvalue_n} \lambda_{\prediction =\df(\featuresvalue)} \prod_{i = 1}^{n} \ind_{\varvalue_i} \param_{\varvalue_i|\parentsvalue_i}$$

If we conjoin the encodings $\cnfencoding_{\df}$ and $\cnfencoding_{\bn}$, then since each model of $\cnfencoding_{\bn}$ corresponds to an instantiation $\varsvalue$ (including features $\featuresvalue$), the models of the joint encoding are precisely the models of $\cnfencoding_{\bn}$, with the prediction indicator $\ind_{\prediction = \df(\featuresvalue)}$ true (and all other prediction indicators false), and the intermediate variables $\interVars$ taking some values. Thus, each model of the joint encoding directly corresponds to some valid instantiation $\featuresvalue, \predictionvalue$, i.e. a term in the network polynomial.
\end{proof}

Given $\cnfencoding_{joint}$, the next step is to compile the CNF into an equivalent sd-DNNF, a rooted DAG with literals (a variable or its negation) as leaves, and conjunctions/disjunctions as internal nodes, satisfying the following properties:
\begin{itemize}
    \item \underline{Decomposability:} From every $\wedge$-node, no two branches can share a variable.
    \item \underline{Smoothness:} From every $\vee$-node, every branch must contain the same variables. 
    \item \underline{Determinism:} From every $\vee$-node, every two branches must contradict each other.
\end{itemize}

We convert the sd-DNNF to an AC by replacing conjunctions with $*$, disjunctions with $+$, and replacing all negative literals and literals corresponding to intermediate variables $\interVars$ with the value $1$. 

\thmJoint*

\begin{proof}
The logic can be expressed as a disjunction of the complete subcircuits of the d-DNNF (complete subcircuits are generated by traversing the circuit, choosing all children of every $\wedge$-node and one child of every $\vee$-node). By decomposability and smoothness, each complete subcircuit is a conjunction which specifies a value (and only one value) for each CNF variable $\allInds, \allParam, \interVars$. By Proposition \ref{prop:jointmodels}, each complete subcircuit must thus correspond to a term of the network polynomial. Further, no other complete subcircuit can correspond to that term; otherwise determinism is violated. That is, there is a one-to-one correspondence between the complete subcircuits of the d-DNNF, and the terms of $l_{\bn{\augbn}}[\allInds, \allParam]$.

Each complete subcircuit thus corresponds to an instantiation $\varsvalue$, and takes the form

\begin{align*}
&\; \;\; \left(\bigwedge_{i = 1}^{n} \ind_{\varvalue_i} \wedge \param_{\varvalue_i|\parentsvalue_i} \bigwedge_{v_i' \neq v_i} \neg \ind_{\varvalue_i'} \wedge \neg \param_{\varvalue_i|\parentsvalue_i}\right) \\
&\wedge \left(\ind_{\prediction = \df(\featuresvalue)} \bigwedge_{\predictionvalue' \neq \df(\featuresvalue)} \neg \ind_{\prediction  = \predictionvalue'}\right) \\
&\wedge \left(\bigwedge_{\interVar \in \interVars_{\varsvalue}} \interVar \bigwedge_{\interVar \in \interVars \setminus \interVars_{\varsvalue} }\neg \interVar \right) \\
\end{align*}

where $\interVars_{\varsvalue}$ are the intermediate variables which are true for instantiation $\varsvalue$. If we now exchange conjunctions for $\times$, disjunctions for $+$, and set negative literals and $\interVars$-literals to 1, we get that the terms of the complete subcircuits of the AC take the form:

$$\ind_{\prediction = \df(\featuresvalue)} \prod_{i = 1}^{n} \ind_{\varvalue_i} \param_{\var_i|\parentsvalue_i} $$

Thus $l_{\ac}[\allInds, \allParam]$ is equivalent to $l_{\bn_{\augbn}}[\allInds, \allParam]$. For the second part, for any fixed value of the parameters $\allParam$, the network/AC polynomials will be equivalent multi-linear functions of the indicators, and thus computation of marginals on the respective polynomials will yield the same result. 
\end{proof}

\subsection{Elimination Ordering} \label{apx:elim}
The C2D compiler makes use of a \textit{dtree} in the compilation process. One method of generating a dtree is using an elimination ordering $\pi$: that is, an ordering of the CNF variables $\allInds, \allParam, \interVars$. We now specify the constraints on this ordering that need to be imposed, in order to compile an $\ac$ to be used in the upper bounding algorithm (Algorithm \ref{alg:bottom_up}). Each constraint takes the form $c < c'$, which indicates that CNF variable $c$ must come before $c'$ in the ordering.

Firstly, for all $\ind \in \allInds$, $\interVar \in \interVars$, we impose constraints $\interVar < \ind$. This ensures that the AC contains only $+$-nodes associated with indicator variables $\allInds$, and not intermediate variables $\interVars$.

Secondly, for each constraint $(V_j, V_i)$ corresponding to the intervention set, we impose constraints $\ind_{\varvalue_i} < \ind_{\varvalue_j}$ for every value $\varvalue_i$ of $\var_i$ and every value $\varvalue_j$ of $\var_j$. This ensures that the AC satisfies the constraint $(V_j, V_i)$; that is, $+$-nodes splitting on $V_j$ cannot appear after $+$-nodes splitting on $V_i$.

Given these constraints, we find an elimination ordering $\pi$ using the constrained min-fill heuristic. The variables in the ordering are chosen one at a time. At each point, the min-fill heuristic associates a cost with each CNF variable. We pick the lowest-cost variable which would not violate any ordering constraint.

\subsection{Encoding Options}

Several improvements to the encoding in \citep{Darwiche02Belief} have been proposed, which we address in the context of our joint compilations of causal BNs:
\begin{itemize}
    \item \textbf{Dropping parameter clauses:} \citep{Chavira05Compiling} observed that it is possible to drop the second type of parameter clause (reverse implication), which we employ in our experiments. This introduces additional spurious models for the encoding (since the truth values of some parameter variables will be unspecified), which were removed in \citep{Chavira05Compiling} by performing a \textit{minimization} operation on the resulting sd-DNNF. We instead take advantage of the dtree-based compilation approach in the C2D compiler, which does not introduce negative parameter literals into the d-DNNF (since they are not present in the CNF encoding), and forgo smoothing the circuit. This results in a d-DNNF which does not contain negative parameter literals, while satisfying decomposability/smoothness/determinism with respect to the Bayesian network variables. 
    \item \textbf{Determinism and Parameter Equality:} Determinism refers to parameters $\param_{\varvalue|\parentsvalue}$ taking the value $0$ in the BN, and parameter equality refers to two such parameters in the same CPT taking the same value. Both of these can be encoded using simple modifications to the encoding clauses \citep{Chavira05Compiling}, potentially resulting in much smaller compiled ACs by removing excess parameter variables. The problem with using such encodings is that there no longer exists a unique parameter variable corresponding to each parameter in the BN. As such, we cannot compute the effect of interventions on the network which affect those parameters.
    
    However, it is in theory possible to encode determinism and parameter equality only for CPTs which we know in advance we will never want to intervene on, while avoiding other CPTs. For simplicity, we avoid encoding determinism or parameter equality entirely in our experiments.
\end{itemize}

\section{Proof of Proposition \ref{thm:ac_size2}}

\thmSize*

Denote the set of $n$ CNF variables in $\cnfencoding_{\bn}$ as $\bm{B}_{\bn}$, and the set of $n'$ CNF variables in $\cnfencoding_{F}$ as $\bm{B}_{F}$. The only CNF variables shared between $\cnfencoding_{\bn}$ and $\cnfencoding_{F}$ are the indicators $\lambda_{\featurevalue}$ for BN variables $\feature \in \features$.
Thus the joint encoding $\cnfencoding_{joint} =  \cnfencoding_{bn} \wedge \cnfencoding_{\df}$ has $n + n' - |\allInds_{\features}|$ variables.

The treewidth of a CNF formula is defined to be the treewidth of its interaction graph $\graph_{\cnfencoding}$:
\begin{definition}
    The \underline{interaction graph} $\graph_{\cnfencoding}$ of a CNF formula $\cnfencoding$ is a graph with CNF variables as nodes, and edges between each variables appearing in the same CNF clause.
\end{definition}
Let $\pi$ be an ordering of the nodes in $\graph_{\cnfencoding}$, called an \textit{elimination order}. The treewidth of $\pi$ with respect to $\graph_{\cnfencoding}$ is defined by the following procedure. Remove the CNF nodes according to $\pi$, and connect all pairs of nodes connected to a removed node immediately after removing it. Then the treewidth is the maximum number of edges a variable has immediately before it is removed.
The treewidth of $\graph_{\cnfencoding}$ is then defined to be the minimum treewidth among all elimination orders $\pi$. 

Let $\pi_{\bn}^*$ be the optimal such ordering on $\graph_{\cnfencoding_{\bn}}$, which has treewidth $w$. Let $\pi_{\bn - \features}^*$ be that ordering with all CNF variables in $\allInds_{\features}$ removed. Then consider removing nodes from the interaction graph  $\graph_{\cnfencoding_{joint}}$ in that order. At the time of removal of any node $b$ in $\pi_{\bn - \features}^*$, it will not be connected to any node in $\bm{B}_{\bn} \setminus \allInds_{\features}$ in $\graph_{\cnfencoding_{joint}}$ that it was not connected to at the time of its removal in $\pi_{\bn}^*$ in $\graph_{\cnfencoding_{\bn}}$. This is since the interspersed removals of nodes in $\allInds_{\features}$ in the $\pi_{\bn}^*$ ordering can only have added more edges between nodes in $\bm{B}_{\bn} \setminus \allInds_{\features}$. However, in general, $b$ may be connected to any or all of the nodes in $\allInds_{\features}$. Thus the maximal degree of any removed node (at the time of its removal) is at most $w + |\allInds_{\features}|$.

After removing nodes in $\bm{B}_{\bn} \setminus \allInds_{\features}$, only nodes in $\bm{B}_{\df}$ remain. Let $\pi_{\df}^*$ be the optimal ordering on $\graph_{\cnfencoding_{\df}}$ with treewidth $w'$. Then we can remove those nodes according to that ordering, such that the maximal degree of any removed node (at the time of its removal) is at most $w'$.

The treewidth of the ordering $\pi = (\pi_{\bn - \features}^*, \pi_{\df}^*)$ with respect to $\graph_{\cnfencoding_{joint}}$ is at most $\max(w + |\allInds_{\features}|, w')$, and thus the treewidth of $\cnfencoding_{joint}$ is at most this quantity. If we consider an ordering which follows a similar procedure to above, except that we remove nodes $\bm{B}_{\df} \setminus \allInds_{\features}$ first, we get another bound $\max(w, w'+ |\allInds_{\features}|)$. Thus the treewidth of $\cnfencoding_{joint}$ is at most $\min(\max(w + |\allInds_{\features}|, w'), \max(w, w'+ |\allInds_{\features}|)) = \max(w, w', \min(w, w') + |\allInds_{\features}|)$.

\section{Proof of Theorem \ref{thm:ub_correctness}} \label{apx:ub_proof}
\thmUB*
\newcommand{\coeff}{w}
\newcommand{\bucons}{cs_{UB}}
\newcommand{\buval}{val_{UB}}
\newcommand{\fcons}[1]{cs_{#1}}
\newcommand{\fval}[1]{val_{#1}}
\newcommand{\prefval}[1]{val_{pref, #1}}
\newcommand{\upgrade}{upgrade}
In what follows, the intervention set (including intervenable variables $\intvVars$) and evidence $\bm{e}$ is fixed throughout, and thus is dropped from notation for brevity.

\subsection{Definitions}

First, we review the effect of an intervention on the network polynomial of $\bn_{\augbn}$. For $\intvVar_i \in \intvVars$, let $\parents_i$ denote the parents of $\intvVar$ ($\Pa_{\graph}(\intvVar)$/$\Pa_{\graph'}(\intvVar)$ for parametric/\struc intervention sets respectively). Then, any intervention in the set $\intvSet_{\bn_{\augbn}}$ can be specified as a list of functions $\bm{f} = (f_1, ..., f_{|\intvVars|})$, where $f_i$ maps from $\parents_i$ to $\intvVar_i$, and we write $\bn'_{\bm{f}}$ to represent the intervened network. This corresponds to setting, for every instantiation $\parentsvalue_i$ of $\parents_i$:
\begin{itemize}
    \item $\param_{f(\parentsvalue_i)|\parentsvalue_i} := 1$;
    \item $\param_{\intvVarvalue_i|\parentsvalue_i)} := 0$ for any $\intvVarvalue_i \neq f(\parentsvalue_i)$.
\end{itemize}

The network polynomial $l_{\bn'_{\bm{f}}}[\allInds, \allParam]$ of the intervened network $\bn'$ is then given by applying these changes to each term of $l_{\bn_{\augbn}}[\allInds, \allParam]$. This can be achieved by setting all parameters $\param_{f(\parentsvalue_i)|\parentsvalue_i}$ for intervenable variables $\intvVar_i \in \intvVars$ to $1$, then filtering out those which are incompatible with $\bm{f}$. The following definitions of \textit{weight} and \textit{consistency} capture these notions, given evidence $\evidence$:

\begin{definition}
The \underline{$\bm{f}$-consistency} $\fcons{\bm{f}}(\alpha)$ of a term $\alpha$ of $l_{\bn_{\augbn}}[\allInds, \allParam]$ is defined as:
\begin{equation*}
\fcons{\bm{f}}(\alpha) = 
\begin{cases}
		1  & \text{each parameter in $\alpha$ of the form $\theta_{\intvVar|\parents}$} \\
		   & \text{for $\intvVar \in \intvVars$ is assigned $1$ by $\bm{f}$} \\
		0  & \text{otherwise} 
\end{cases}
\end{equation*}
We say that $\alpha$ is \underline{$\bm{f}$-consistent} if $\fcons{\bm{f}}(\alpha) = 1$,  and \underline{$\bm{f}$-inconsistent} otherwise.
\end{definition}

\begin{definition}
The \underline{weight} $\coeff(\alpha)$ of a term $\alpha$ of $l_{\bn_{\augbn}}[\allInds, \allParam]$ is obtained by evaluating $\alpha$ after:
\begin{itemize}
    \item Assigning $1$ to indicators if they are consistent with $\evidence$, and $0$ otherwise;
    \item Setting all parameters $\param_{f(\parentsvalue_i)|\parentsvalue_i}$ to 1 (other parameters are unchanged)
\end{itemize}  
\end{definition}

\begin{definition}
The \underline{$\bm{f}$-value} $\fval{\bm{f}}(\alpha)$ of a term $\alpha$ of $l_{\bn_{\augbn}}[\allInds, \allParam]$ is defined as:
\begin{equation*}
\fval{\bm{f}}(\alpha) = \coeff(\alpha) \times \fcons{\bm{f}}(\alpha)
\end{equation*}

For any set of terms $S$, we also define:
\begin{equation*}
\fval{\bm{f}}(S) = \sum_{\alpha \in S} \fval{\bm{f}}(\alpha)
\end{equation*}
\end{definition}

\begin{proposition}
Let $S()$ be the set of all terms in $l_{\bn_{\augbn}}[\allInds, \allParam]$. Then:
\begin{equation*}
    p_{\bn'_{\bm{f}}}(\evidence) = \fval{\bm{f}}(S())
\end{equation*}
\end{proposition}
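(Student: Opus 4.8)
The plan is to reduce the statement to the fundamental evaluation property of network polynomials and then verify a term-by-term identity. Recall the result of \citet{Darwiche00} that, for any Bayesian network, instantiating the indicators according to evidence $\evidence$ (that is, $\ind_{\varvalue} = 1$ when $\varvalue$ agrees with $\evidence$ and $0$ otherwise) and evaluating the network polynomial yields exactly the marginal probability of $\evidence$. Applying this to the intervened network $\bn'_{\bm{f}}$, whose parameter set $\allParam'$ is obtained from $\allParam$ by the substitutions $\param_{f_i(\parentsvalue_i)|\parentsvalue_i} := 1$ and $\param_{\intvVarvalue_i|\parentsvalue_i} := 0$ for $\intvVarvalue_i \neq f_i(\parentsvalue_i)$, I would write $p_{\bn'_{\bm{f}}}(\evidence) = l_{\bn'_{\bm{f}}}[\allInds_{\evidence}, \allParam']$, where $\allInds_{\evidence}$ denotes the evidence instantiation of the indicators.

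Next I would expand both polynomials as sums indexed by the same set of instantiations $\varsvalue$. Since an intervention changes only the mechanisms of variables in $\intvVars$ and leaves every other mechanism untouched, the terms of $l_{\bn'_{\bm{f}}}$ are in one-to-one correspondence with the terms $\alpha \in S()$ of $l_{\bn_{\augbn}}$: the term for $\varsvalue$ differs only in that each intervened factor $\param_{\intvVarvalue_i|\parentsvalue_i}$ is replaced by the deterministic value $\mathbbm{1}[\intvVarvalue_i = f_i(\parentsvalue_i)]$, where the parent instantiation $\parentsvalue_i$ is read off from $\varsvalue$. The core of the proof is then to show, for each instantiation, that evaluating this intervened term under $\allInds_{\evidence}$ equals $\fval{\bm{f}}(\alpha) = \coeff(\alpha)\,\fcons{\bm{f}}(\alpha)$.

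I would establish this identity by splitting on $\bm{f}$-consistency. If $\alpha$ is $\bm{f}$-inconsistent, some intervened factor has $\intvVarvalue_i \neq f_i(\parentsvalue_i)$, so its deterministic replacement is $0$ and the intervened term evaluates to $0$; simultaneously $\fcons{\bm{f}}(\alpha) = 0$, so $\fval{\bm{f}}(\alpha) = 0$, matching. If $\alpha$ is $\bm{f}$-consistent, every intervened factor has $\intvVarvalue_i = f_i(\parentsvalue_i)$, so each deterministic replacement equals $1$; the evaluated intervened term is then the product of the evidence-filtered indicators with the unchanged non-intervened parameters, which is precisely the weight $\coeff(\alpha)$ (the weight definition sets exactly the selected parameters to $1$ and leaves the others alone), while $\fcons{\bm{f}}(\alpha) = 1$. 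Hence $\fval{\bm{f}}(\alpha)$ equals the evaluated intervened term in both cases. Summing over all $\alpha \in S()$ and using the correspondence gives $\fval{\bm{f}}(S()) = l_{\bn'_{\bm{f}}}[\allInds_{\evidence}, \allParam'] = p_{\bn'_{\bm{f}}}(\evidence)$.

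The step I expect to be most delicate is the term correspondence under \emph{structural} interventions: there the parameter appearing in $\alpha$ is indexed by the original parents of $\intvVar_i$, whereas $f_i$ and the consistency condition are defined over the new parents $\Pa_{\graph'}(\intvVar_i)$. The key observation that makes this go through is that a full instantiation $\varsvalue$ fixes the values of both parent sets simultaneously, so both the factor contributed by $\alpha$ and the selection $f_i(\parentsvalue_i)$ are well-defined functions of $\varsvalue$; one then checks that on $\bm{f}$-consistent terms the selected factor is exactly the one promoted to $1$ by the weight, and that no non-intervened mechanism is touched by the re-parenting, which is precisely the invariance guaranteed by restricting interventions to $\intvVars$.
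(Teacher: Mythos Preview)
Your proposal is correct and follows the natural unpacking-of-definitions route. The paper itself does not give a detailed proof of this proposition: it simply remarks that ``this proposition simply captures how marginal probabilities for intervened Bayesian networks are calculated,'' treating the statement as immediate from the definitions of weight, $\bm{f}$-consistency, and $\bm{f}$-value. Your term-by-term argument is thus more thorough than what the paper provides, and your explicit handling of the structural-intervention case (reading both parent instantiations off the same full instantiation $\varsvalue$) addresses a subtlety the paper leaves implicit.
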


This proposition simply captures how marginal probabilities for intervened Bayesian networks are calculated. 

Now we will define similar concepts for the arithmetic circuit $\ac$. First, recall the definition of a \textit{complete subcircuit} of an AC:
\defSubcircuit*

A \textit{partial subcircuit} is a subset of the nodes (and edges from those nodes) chosen by any complete subcircuit. The \textit{term} of a partial subcircuit $\alpha$ is also the product of all leaf nodes in $\alpha$. Often it will be useful to partition a complete subcircuit into a \textit{prefix} $\alpha_P$ and \textit{suffix} $\alpha_S$: that is, two partial subcircuits which partition the nodes of a complete subcircuit, such that all nodes in the prefix are non-descendants of those in the suffix.

Since $\ac$ has the same polynomial as $\bn_{\augbn}$, each complete subcircuit corresponds to a term of the network polynomial, and we can define very similar notions to those above for the network polynomial, but based on the UB algorithm. 

First, the UB algorithm (Algorithm \ref{alg:bottom_up}) effectively discards some subcircuits by maximizing at some $+$-nodes:

\begin{definition}
The \underline{UB-consistency} $\bucons(\alpha)$ of a complete or partial subcircuit $\alpha$ is defined as:
\begin{equation*}
\bucons(\alpha) = 
\begin{cases}
		1  & \text{if for every $+$-node $t$ in $\alpha$ splitting on } \\
		   & \text{$\intvVar \in \intvVars$, $\alpha$ chooses the same branch } \\
		   & \text{from $t$ as the UB algorithm}\\
		0  & \text{otherwise} 
\end{cases}
\end{equation*}
We say that $\alpha$ is \underline{UB-consistent} if $\bucons{\bm{f}}(\alpha) = 1$,  and \underline{UB-inconsistent} otherwise.
\end{definition}

Second, note that the UB algorithm sets parameters for intervenable variables $\intvVar \in \intvVars$ to 1. Thus, for any complete subcircuit $\alpha$ (corresponding to a network polynomial term), in a slight abuse of notation, we can define the weight $\coeff(\alpha)$ in the same way as for network polynomial terms. We can also define the weight for partial subcircuits in the obvious way.

\begin{definition}
The \underline{weight} $\coeff(\alpha)$ of a complete or partial subcircuit $\alpha$ is obtained by evaluating $term(\alpha)$ after:
\begin{itemize}
    \item Assigning $1$ to indicators if they are consistent with $\evidence$, and $0$ otherwise;
    \item Setting all parameters $\param_{f(\parentsvalue_i)|\parentsvalue_i}$ to 1 (other parameters are unchanged)
\end{itemize}  
\end{definition}

\begin{definition}
The \underline{UB-value} $\buval(\alpha)$ of a complete or partial subcircuit $\alpha$ is defined as:
\begin{equation*}
\buval(\alpha) = \coeff(\alpha) \times \bucons(\alpha) 
\end{equation*}

For any set of complete or partial subcircuits $S$, we also define:
\begin{equation*}
\buval(S) = \sum_{\alpha \in S} \buval(\alpha)
\end{equation*}
\end{definition}

Note that the only distinction from $\bm{f}$-value is that we use UB-consistency instead of $\bm{f}$-consistency, i.e. we set different complete subcircuits/terms to 0.

\begin{proposition}
Let $S()$ be the set of all complete subcircuits in the AC. Then:
\begin{equation*}
   UB(\mathcal{AC}, \bm{e}, \bm{W}) = \buval(S())
\end{equation*}
\end{proposition}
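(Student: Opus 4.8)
The plan is to prove the identity by structural induction over the nodes of $\ac$, following exactly the bottom-up (children-before-parents) order in which Algorithm~\ref{alg:bottom_up} processes them. The right object to induct on is not the global return value but a local invariant. For each node $c$, let $S(c)$ denote the set of \emph{$c$-rooted subcircuits}, obtained by traversing $\ac$ downward from $c$, choosing one child at every $+$-node and all children at every $\times$-node; these are partial subcircuits in the sense defined above, so $\coeff$, $\bucons$ and $\buval$ all apply to them, and $S(c_{root}) = S()$. The claim I would establish is the strengthened invariant
\[
p[c] = \buval(S(c)) = \sum_{\beta \in S(c)} \coeff(\beta)\,\bucons(\beta)
\]
for every node $c$, where $p[c]$ is the value the algorithm stores at $c$. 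Evaluating this at the root, together with the fact that Algorithm~\ref{alg:bottom_up} returns $p[c_{root}]$, yields the proposition at once.

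For the base cases, if $c$ is a leaf labelled by an indicator $\ind_{\varvalue}$ or a parameter, then $S(c)$ is the singleton $\{c\}$, its $\bucons$ is trivially $1$, and $\coeff(\{c\})$ is exactly the value the algorithm assigns in the Indicator and Parameter cases (recalling that the UB weight sets every parameter of an intervenable variable to $1$); so the invariant holds at leaves by inspection. For the inductive step I treat the internal node types separately. At a $\times$-node, a $c$-rooted subcircuit is a choice of one $d$-rooted subcircuit for each child $d$, and decomposability guarantees that the leaf sets of distinct children are disjoint; hence the term, weight and UB-consistency all factor as products over children, $S(c)$ is a Cartesian product, and $\buval(S(c)) = \prod_d \buval(S(d)) = \prod_d p[d] = p[c]$ by the induction hypothesis, matching the $\times$-case of the algorithm. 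At a $+$-node that does \emph{not} split on a variable in $\intvVars$, every $c$-rooted subcircuit selects exactly one child $d$ and inherits its weight and UB-consistency unchanged (the node $c$ imposes no new constraint), so $S(c)$ is the disjoint union over children and $\buval(S(c)) = \sum_d p[d] = p[c]$, matching the summation case.

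The step I expect to be the main obstacle is the $+$-node splitting on some $\intvVar \in \intvVars$, where the algorithm takes a maximum rather than a sum, because there I must reconcile the \emph{definition} of $\bucons$ — which refers to ``the branch chosen by the UB algorithm'' — with the argmax actually computed. The key observation is that the algorithm's bottom-up order makes this well-defined: by the time $c$ is processed every child value $p[d]$ is already fixed, so the branch the UB algorithm chooses at $c$ is unambiguously a selected argmax child $d^*$ (fixing any tie-breaking rule in advance). Any $c$-rooted subcircuit selecting $d \neq d^*$ then has $\bucons = 0$ and contributes nothing, whereas those selecting $d^*$ inherit exactly the weight and UB-consistency of their $d^*$-rooted suffix; hence $\buval(S(c)) = \buval(S(d^*)) = p[d^*] = \max_d p[d] = p[c]$ by the induction hypothesis and the definition of $d^*$. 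I would stress explicitly that no circularity arises, precisely because $\bucons$ is evaluated against the already-computed child values, so the induction and the algorithm proceed in lockstep. Applying the invariant at $c_{root}$, where $S(c_{root}) = S()$, gives $UB(\ac, \evidence, \intvVars) = p[c_{root}] = \buval(S())$, completing the argument.
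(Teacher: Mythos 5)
Your proof is correct and takes essentially the same approach as the paper, which states this proposition without an explicit proof, remarking only that it ``simply states'' that the algorithm assigns weight $0$ to unselected subcircuits and weight $\coeff(\alpha)$ to selected ones; your bottom-up induction on the invariant that each node's stored value $p[c]$ equals the UB-value of its rooted partial subcircuits is precisely the rigorous form of that remark. Your explicit fixing of a tie-breaking rule at maximizing $+$-nodes, which makes the branch referenced in the definition of UB-consistency well-defined, addresses a detail the paper glosses over.
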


This proposition simply states that assigning value $0$ to subcircuits unselected by the UB-algorithm and value $\coeff(\alpha)$ to selected subcircuits produces the output of the UB-algorithm.

Recall that our goal is to show that the UB-algorithm produces an upper bound to $\max_{\bn' \in \intvSet_{\bn{\augbn}}} p_{\bn'}(\evidence) = \max_{\bm{f}} p_{\bn'_{\bm{f}}}(\bm{e})$. By the two Propositions, this is equivalent to showing:
\begin{equation} \label{eq:val_relation}
    \buval(S()) \geq \max_{\bm{f}} \fval{\bm{f}}(S())
\end{equation}

\subsection{Inductive Lemma}

In order to prove Equation \ref{eq:val_relation}, we will need to prove a Lemma, which relies on some additional definitions.

\begin{definition}[Subcircuit sets]
    We will use the following notation to denote various sets of complete and partial subcircuits. Here, $t$ denotes a $+$-node in the AC, and $\bm{t} = \{t_1, ... t_j\}$ denotes a set of $+$-nodes such that \textbf{no node is a descendant of another}.
    \begin{itemize}
        \item $S(\bm{t})$ denotes all complete subcircuits which cross all $+$-nodes in $\bm{t}$;
        \item $S_P(\bm{t})$ denotes all "prefix" partial subcircuits, obtained by selecting all visited nodes which are non-descendants of $t$ from complete subcircuits in $S(\bm{t})$;
        \item $S_S(t)$ denotes all "suffix" partial subcircuits, obtained by traversing the AC top-down starting at $t$, choosing one child of every $+$-node and every child of every $\times$-node.
        \item $S(\bm{t}, \alpha_0)$ denotes a set of complete subcircuits, defined for any $\alpha_0 \in S_P(\bm{t}):$
        $S(\bm{t}, \alpha_0) = \{(\alpha_0, \alpha_1, ..., \alpha_j): \alpha_1 \in S_S(t_1), ..., \alpha_j \in S_S(t_j)\}$
    \end{itemize}
\end{definition}

We note a few important facts regarding these sets. Firstly, $S(\bm{t})$ can be empty, if $\bm{t}$ includes two nodes which never appear in the same complete subcircuit. Second, $S(\bm{t})$ can be represented as the product set $S_P(\bm{t}) \times S_S(t_1) \times ... \times S_S(t_j)$, and $\{S(\textbf{t}, \alpha_0): \alpha_0 \in S_P(\textbf{t})\}$ forms a partition of $S(\bm{t})$. Finally, both the UB-consistency and weight of a complete subcircuit $\alpha = (\alpha_0, \alpha_1, ... \alpha_j) \in S(\bm{t})$ can be decomposed. $\alpha$ is consistent iff its prefix and suffixes are all consistent, that is:
$$\bucons(\alpha) = \bucons(\alpha_0) \prod_{i = 1} ^ {j} \bucons(\alpha_i)$$
In addition, the weight of $\alpha$ is given by:
$$\coeff(\alpha) = \coeff(\alpha_0) \prod_{i = 1} ^ {j} \coeff(\alpha_i)$$ 

We now define a modified version of UB-value for a complete subcircuit $\alpha$ which crosses nodes $\bm{t}$. Informally, this assumes the prefix $\alpha_0$ to be consistent for the purposes of computing value.

\begin{definition}
Given a set of $+$-nodes $\bm{t}$, and a complete subcircuit $\alpha$ crossing $\bm{t}$ which consists of prefix subcircuit $\alpha_0 \in S_P(\bm{t})$ and suffix subcircuits $\alpha_i \in S_S(t_i)$, the \underline{prefix-consistent BU-value} $\prefval{\bm{t}}(\alpha)$ is defined as:
\begin{equation*}
    \prefval{\bm{t}}(\alpha) = \coeff(\alpha) \times \prod_{i = 1}^{j} \bucons(\alpha_i)
\end{equation*}

For any set of such complete subcircuits $S$, 
\begin{equation*}
    \prefval{\bm{t}}(S) = \sum_{\alpha \in S} \prefval{\bm{t}}(\alpha)
\end{equation*}
\end{definition}

Now, we can state our main result:
\begin{lemma} \label{lem:thm_redux}
Let $\bm{t}$ be a set of $+$-nodes, such that no $t \in \bm{t}$ is a descendant of another. Further, let $\alpha_0 \in S_P(\bm{t})$. Then:
\begin{equation} \label{eq:lem_ineq}
\prefval{\bm{t}}(S(\bm{t}, \alpha_0)) \geq \max_{\bm{f}} \fval{\bm{f}}(S(\bm{t}, \alpha_0))
\end{equation}
\end{lemma}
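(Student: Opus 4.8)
The plan is to exploit the product structure of $S(\bm{t}, \alpha_0)$ and reduce the claimed inequality to a single per-node comparison between UB-value and $\bm{f}$-value, which is then proved by induction on the circuit. Since $S(\bm{t}, \alpha_0) = \{\alpha_0\} \times S_S(t_1) \times \cdots \times S_S(t_j)$, and both the weight and the two notions of consistency decompose multiplicatively across prefix and suffixes, I would first rewrite each side in factored form:
\begin{align*}
\prefval{\bm{t}}(S(\bm{t}, \alpha_0)) &= \coeff(\alpha_0) \prod_{i=1}^{j} \buval(S_S(t_i)), \\
\fval{\bm{f}}(S(\bm{t}, \alpha_0)) &= \coeff(\alpha_0)\, \fcons{\bm{f}}(\alpha_0) \prod_{i=1}^{j} \fval{\bm{f}}(S_S(t_i)).
\end{align*}
Because $\fcons{\bm{f}}(\alpha_0) \le 1$ and every weight is non-negative, it then suffices to establish, for each $i$ and each $\bm{f}$, the per-node inequality $\buval(S_S(t_i)) \ge \fval{\bm{f}}(S_S(t_i))$; taking products over $i$, multiplying by $\coeff(\alpha_0)$, and then maximizing over $\bm{f}$ recovers Equation~\eqref{eq:lem_ineq}.

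Next I would prove the per-node statement $\buval(S_S(c)) \ge \fval{\bm{f}}(S_S(c))$ for every AC node $c$ and every intervention $\bm{f}$, by structural induction (children before parents); note that $\buval(S_S(c))$ is exactly the value $p[c]$ computed by Algorithm~\ref{alg:bottom_up}. The leaf and non-maximizing cases are routine: for indicator leaves and non-intervenable parameter leaves both sides coincide; for an intervenable parameter leaf $\theta_{\intvVarvalue|\parentsvalue}$ the UB side is $1$ while $\fval{\bm{f}}$ contributes $\fcons{\bm{f}} \in \{0,1\}$, so the inequality holds; at $\times$-nodes and at $+$-nodes splitting on non-intervenable variables both $\buval$ and $\fval{\bm{f}}$ obey the same product/sum recursion, so the bound propagates from the inductive hypothesis using non-negativity.

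The crux is the $+$-node $c$ splitting on an intervenable variable $\intvVar \in \intvVars$, where UB takes a maximum, $\buval(S_S(c)) = \max_{d} \buval(S_S(d))$, while the $\bm{f}$-value still sums, $\fval{\bm{f}}(S_S(c)) = \sum_d \fval{\bm{f}}(S_S(d))$. The key fact I would use is that the ordering constraints force every parent of $\intvVar$ to be split strictly above $c$, so the parent context $\parentsvalue$ of $\intvVar$ is already fixed on every complete subcircuit reaching $c$. A fixed intervention $\bm{f}$ then selects the single value $f(\parentsvalue)$, and only the child branch carrying the parameter $\theta_{f(\parentsvalue)|\parentsvalue}$ can be $\bm{f}$-consistent; all other children have $\fval{\bm{f}}(S_S(d)) = 0$. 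Hence the sum collapses to one term and $\fval{\bm{f}}(S_S(c)) = \fval{\bm{f}}(S_S(d^{\bm{f}})) \le \buval(S_S(d^{\bm{f}})) \le \max_d \buval(S_S(d)) = \buval(S_S(c))$ by the inductive hypothesis on the consistent child. This is precisely why replacing $+$ with $\max$ at intervenable splits yields an upper bound.

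The main obstacle is making rigorous the claim that at such a node \emph{at most one} child branch is $\bm{f}$-consistent, i.e.\ that the parent context is genuinely pinned at $c$. This is where the explicit prefix $\alpha_0$ and the frontier bookkeeping earn their keep: carrying $\alpha_0$ through the induction records the already-decided parent values, so that whenever a maximizing split is reached the relevant context is determined, side-stepping the fact that a node of the shared DAG could a priori be reached through several contexts. Establishing this cleanly requires combining the ordering constraint (a parent split never descends below a split on its child) with the structure of the BN-compiled circuit (the subcircuit below a split on $\intvVar$ contains the corresponding CPT parameter, whose context cannot be re-decided below $c$ without re-splitting a parent). I would also discharge two routine checks the argument silently relies on: that $\coeff$, $\bucons$, and $\fcons{\bm{f}}$ really do factor over prefix and suffixes (decomposability and determinism of the AC), and that all weights are non-negative so that the product inequalities are valid.
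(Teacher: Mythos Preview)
Your factorization is correct and the reduction to a per-node inequality
\(\buval(S_S(c)) \ge \fval{\bm{f}}(S_S(c))\) is a clean idea.  The gap is in the
crux case, and it is precisely the point you yourself flag as ``the main
obstacle.''

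First, the ordering constraint does \emph{not} say that every parent of
\(\intvVar\) is split strictly above \(c\); it only says that no \(+\)-node
splitting on a parent is a \emph{descendant} of \(c\) (Definition~``satisfies
the ordering constraint'').  Parent splits may sit in sibling branches of some
\(\times\)-node above \(c\), so ``the parent context is already fixed on every
complete subcircuit reaching \(c\)'' is not a consequence of the ordering
alone.  Second, and more importantly, your per-node inequality is about
\emph{suffix} consistency: \(\fcons{\bm{f}}\) of a suffix checks only the
intervenable parameters that actually lie in that suffix.  For the
``only one child is \(\bm{f}\)-consistent'' step to go through, you need that
every suffix in the \(W{=}w\) branch of \(c\) contains the parameter
\(\theta_{w|\parentsvalue}\) for one and the same \(\parentsvalue\).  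Nothing in
your write-up establishes this; if a node \(c\) were reachable under two
different parent contexts \(\parentsvalue,\parentsvalue'\), then with
\(f_{\intvVar}(\parentsvalue)=0\) and \(f_{\intvVar}(\parentsvalue')=1\) both
children of \(c\) could carry \(\bm{f}\)-consistent suffixes and the sum could
exceed the max.  Your proposed remedy, ``carrying \(\alpha_0\) through the
induction,'' abandons the per-node statement you set out to prove and is, in
effect, the route the paper takes.

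The paper's proof is organized differently.  It inducts over frontiers
\(\bm{t}\) together with a fixed prefix \(\alpha_0\), so the parent context is
recorded by the full prefix \(\alpha_{-j}\) (Lemma~\ref{lem:branches} uses this
to get ``same branch'' \emph{for a fixed prefix}).  The genuinely new
ingredient, which your sketch lacks, is Lemma~\ref{lem:greater_intv}: for any
\(\bm{f}\) one constructs \(\bm{f}_{adj}\) of at least the same value whose
non-zero-weight consistent subcircuits all take a single branch of the MAX
node.  This ``upgrade'' argument is what lets the paper compare the max over
branches against the sum without ever asserting that only one branch is
suffix-consistent.  If you want to rescue the factorized approach, you would
need to prove an explicit structural lemma---using both the ordering constraint
and the fact that every complete subcircuit is a valid term of
\(l_{\bn_{\augbn}}\)---that each \(W\)-split node is reached under a unique
parent context (equivalently, that all suffixes in a given branch carry the
same \(\theta_{w|\parentsvalue}\)); until that is established, the per-node
inequality is not proved.
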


As previously stated, for many sets $\bm{t}$, $S(\bm{t}, \alpha_0)$ may be empty; in such cases, the Lemma is trivially true.

Before proving this Lemma, we demonstrate how this can be used to derive the inequality in Equation \ref{eq:val_relation}, and thus, the Theorem. Take $\bm{t}$ to be the singleton set $\{r\}$, where $r$ is the root $+$/$MAX$-node of the AC. Then the (only) prefix $\alpha_0$ of $\bm{t}$ is empty, so that $\prefval{\bm{t}}(S(\bm{t}, \alpha_0)) = \buval(S(\bm{t}, \alpha_0))$. Further, $S(\bm{t}, \alpha_0) = S()$ consists of all subcircuits in the AC. Thus, in this case, the Lemma reduces to Equation \ref{eq:val_relation}.

\begin{proof}[Proof of Lemma \ref{lem:thm_redux}]
We will prove the Lemma by induction. Let $\bm{\pi} = \{\pi_1, ..., \pi_n\}$ be a reverse topological ordering of the $n$ $+$-nodes in the AC (that is, descendants come before ancestors), and we write $t < t'$ to indicate a node $t$ comes before $t'$ in the ordering. In particular, our inductive hypothesis at step $k$ will be that the lemma holds for all subsets $\bm{t}$ containing only nodes in $\bm{\pi}_{\leq k} = \{\pi_1, ... \pi_k\}$. 

At step $k$ (for $1 \leq k \leq n$), we need to show the inequality in Equation \ref{eq:lem_ineq} for any set of $+$-nodes $\bm{t}' = \{t_1', ..., t_{j - 1}', t_j'\}$ s.t. $t_j' = \pi_k$, $0 \leq j \leq k$, and $t_i' \in \bm{\pi}_{\leq k} \;\; \forall 1 \leq i \leq j - 1$ (so that $t_j' := \pi_k$), and for any $\alpha_0 \in S_P(\bm{t}')$. The proof for step $k$ follows one of 2 templates, depending on the node $\pi_k$, in particular, whether it splits on a variable in $\bm{W}$.

\subsubsection{Does not split on \texorpdfstring{$\bm{W}$}{W}}
    
    Suppose that $\pi_k$ has $m$ branches. For each branch $1 \leq b \leq m$, starting from the prefix $\alpha_0$, add $\pi_k$, and then traverse down branch $b$, adding all children of each $\times$-node, until we reach a leaf node (included) or a $+$-node (excluded). Denote this extended prefix subcircuit $\alpha^{(b)}_0 = (\alpha_0, \alpha^{(b)})$ (where $\alpha^{(b)}$ is defined to be the partial subcircuit added), and the $+$-nodes reached by $\bm{t}^{(b)} = \{t^{(b)}_1, ... t^{(b)}_l\}$ (possibly empty). Then define $\bm{t'_{-j}} = \{t'_1, ... t'_{j - 1}\}$, that is, $\bm{t'}$ with $\pi_k$ removed, and define $\bm{t}'^{(b)} = (\bm{t}'_{-j}, \bm{t}^{(b)})$. This fulfils the condition that no two nodes are descendants of each other, since $\bm{t}^{(b)}$ are immediate descendants of $\pi_k$. 
    
    Every complete subcircuit $\alpha \in S(\bm{t}'^{(b)})$ is then of the form $(\alpha_0^{(b)}, \alpha_1, ... \alpha_{j - 1}, \alpha^{(b)}_1, ... \alpha^{(b)}_l)$, with prefix $\alpha_0^{(b)} \in S_P(\bm{t}'^{(b)})$, and suffixes $\alpha_i \in S_S(t'_i)$ ($1 \leq i \leq j - 1$) and $\alpha^{(b)}_a \in S_S(t^{(b)}_a)$ ($1 \leq a \leq l$). 
    
    Recall that $S(\bm{t}'^{(b)}, \alpha_0^{(b)})$ denotes the set of all complete subcircuits with prefix $\alpha_0^{(b)}$ and any combination of suffix subcircuits. The prefix-consistent BU-value for these subcircuits is greater than the $\bm{f}$-value for any $\bm{f}$:
    \begin{align*}
        &\prefval{\bm{t}'^{(b)}}(S(\bm{t}'^{(b)}, \alpha_0^{(b)})) \\
        &\geq \max_{\bm{f}} \fval{\bm{f}}(S(\bm{t}'^{(b)}, \alpha_0^{(b)}))
    \end{align*}
    This follows directly from the inductive hypothesis for step $k-1$. This is because $\bm{t}'_{-j}$ only contains nodes from $\bm{\pi}_{\leq k - 1}$ by definition, and $\bm{t}^{(b)}$ are descendants of $\pi_k$, so by the fact $\pi$ is a reverse topological order, they too are taken from $\bm{\pi}_{\leq k - 1}$.
    
    Further, $\{S(\bm{t}'^{(b)}, \alpha_0^{(b)}): 1 \leq b \leq m\}$ forms a partition of $S(\bm{t}', \alpha_0)$ (each set consists of the set of subcircuits following a particular branch from $\pi_k$). 
    
    Now consider the UB-consistency of the suffix subcircuit from $t_j'$, $\alpha_j = (\alpha^{(b)}, \alpha^{(b)}_1, ... ,\alpha^{(b)}_l)$. We have $\bucons(\alpha_j) = \bucons(\alpha^{(b)}) \times \prod_{a = 1}^{l} \bucons(\alpha^{(b)}_a) = \prod_{a = 1}^{l} \bucons(\alpha^{(b)}_a)$. The final equality follows from the fact that $t_j' = \pi_k$ does not split on $\intvVar \in \intvVars$ and is the only $+$-node in $\alpha^{(b)}$, so $\alpha^{(b)}$ contains no $+$-nodes splitting on $\intvVar \in \intvVars$, and it follows that $\bucons(\alpha^{(b)}) = 1$.
     
     We use this to combine the results for each branch:
    {
    \begin{align*}
        &\prefval{\bm{t}'}(S(\bm{t}', \alpha_0)) \\
        &= \sum_{\alpha \in S(\bm{t}', \alpha_0)} \coeff(\alpha) \times \prod_{i = 1}^{j} \bucons(\alpha_i) \\
        &= \sum_{b = 1}^{m} \sum_{\alpha \in S(\bm{t}'^{(b)}, \alpha_0^{(b)})} \coeff(\alpha) \times \prod_{i = 1}^{j} \bucons(\alpha_i) \\
        &= \sum_{b = 1}^{m} \sum_{\alpha \in S(\bm{t}'^{(b)}, \alpha_0^{(b)})} \coeff(\alpha) \times \prod_{i = 1}^{j - 1} \bucons(\alpha_i) \\
        & \hspace{2.7cm} \times \prod_{a = 1}^{l} \bucons(\alpha^{(b)}_a)\\
        &= \sum_{b = 1}^{m} \prefval{\bm{t}'^{(b)}}(S(\bm{t}'^{(b)}, \alpha_0^{(b)})) \\
        &\geq \sum_{b = 1}^{m} \max_{\bm{f}} \fval{\bm{f}}(S(\bm{t}'^{(b)}, \alpha_0^{(b)})) \\
        &\geq \max_{\bm{f}} \sum_{b = 1}^{m} \fval{\bm{f}}(S(\bm{t}'^{(b)}, \alpha_0^{(b)})) \\
        &= \max_{\bm{f}} \fval{\bm{f}}(S(\bm{t}', \alpha_0))
    \end{align*}
    }%
    
    The second equality uses the partition, the third is by the facts about consistency shown above, the fourth equality is by definition, the fifth inequality is by inductive hypothesis, the sixth is a standard sum/max swap, and the final equality is again by the partition.
    
\subsubsection{Does split on \texorpdfstring{$\bm{W}$}{W}}
    
    In the case where $\pi_k$ splits on some $W \in \bm{W}$, i.e. is a $MAX$-node, the above template doesn't work because the BU algorithm now chooses one of the branches, rather than adding all branches together. 

    For a given intervention $\bm{f}$, denote the subset of $\bm{f}$-consistent complete subcircuits crossing $\bm{t}'$ by  $S_{\bm{f}}(\bm{t}', \alpha_0)$. In this context, we will consider such a complete subcircuit $\alpha$ to be split into a prefix consisting of $\alpha_{-j} = (\alpha_0, \alpha_1, ... \alpha_{j - 1}$ with $\alpha_0 \in S_P(\bm{t}')$, $\alpha_i \in S_S(t_i')$, and a single suffix $\alpha_j \in S_S(t_j')$. The following two defintions are useful:
    \begin{itemize}
        \item First, we define the set of "$\bm{f}$-consistent prefixes":
        \begin{equation}
        S_{-j, \bm{f}}(\bm{t}') = \{\alpha_{-j}: \exists \alpha_j \text{ s.t. } \fcons{\bm{f}}(\alpha) = 1\}
        \end{equation}
        \item Second, for a given prefix $\alpha_{-j}$, define the set of complete subcircuits $S_{\bm{f}}(\pi_k, \alpha_{-j})$ to be the subset of $S(\pi_k, \alpha_{-j})$ which is $\bm{f}$-consistent. This is empty if the prefix $\alpha_{-j}$ is not consistent.
    \end{itemize}
    
    Our strategy will be to show that we can modify $\bm{f}$ so that all $\bm{f}$-consistent subcircuits in $S(\bm{t}', \alpha_0)$ include the same branch of $\pi_k$, without reducing the $\bm{f}$-value. We can then use our inductive hypothesis to bound this value. First, we show that given $\alpha_{-j}$, any resulting complete subcircuit must go down the same branch:
    
    \begin{lemma} \label{lem:branches}
    For a given $\bm{f}$ and $\alpha_{-j}$, subcircuits $S_{\bm{f}}(\pi_k, \alpha_{-j})$ all choose the same branch at $\pi_k$.
    \end{lemma}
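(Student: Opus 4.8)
The plan is to exploit the ordering constraints to show that, once the prefix $\alpha_{-j}$ is fixed, the instantiation of $W$'s parents is already pinned down before $\pi_k$ is reached, so that $\bm{f}$ admits only one value of $W$ and hence only one branch. Throughout, write $W = \var_i \in \intvVars$ for the variable with $split(\pi_k) = W$, let $\parents_i$ be its parent set in the (possibly restructured) graph, and let $\parentsvalue_i$ denote an instantiation of $\parents_i$.

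First I would argue that the parent instantiation $\parentsvalue_i$ is common to every complete subcircuit in $S(\pi_k, \alpha_{-j})$, i.e. it is determined entirely by $\alpha_{-j}$. Fix a parent $\var_j \in \parents_i$. The ordering constraint $(\var_j, W)$ associated with the intervention set guarantees that every $+$-node splitting on $\var_j$ is not a descendant of $\pi_k$. Hence the unique $\var_j$-splitting node selected by any complete subcircuit crossing $\pi_k$ is a non-descendant of $\pi_k$, and so it lies in $\alpha_0$ or in one of the suffixes $\alpha_1, \dots, \alpha_{j-1}$ — all of which belong to $\alpha_{-j}$ — and cannot lie in the suffix $\alpha_j$ rooted at $\pi_k$. (If $\var_j$ is not split anywhere along the subcircuit, its value is fixed globally by the circuit and the conclusion is immediate.) Therefore the value of every parent of $W$, and thus $\parentsvalue_i$, is fixed by $\alpha_{-j}$ and is identical across all completions.

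Next I would use $\bm{f}$-consistency to collapse the branches. Because $\parentsvalue_i$ is fixed, the intervention $\bm{f}$ prescribes a single value $w^\ast := f_i(\parentsvalue_i)$, setting $\theta_{w^\ast|\parentsvalue_i}$ to $1$ and $\theta_{w|\parentsvalue_i} = 0$ for every $w \neq w^\ast$. Any $\alpha \in S(\pi_k, \alpha_{-j})$ selects a branch $b$ at $\pi_k$, and since each branch of a node splitting on $W$ corresponds to a distinct value of $W$, the branch determines the value $w_b$ taken by $W$ in the associated term $\varsvalue$. That term then contains the parameter $\theta_{w_b|\parentsvalue_i}$; for $\alpha$ to be $\bm{f}$-consistent this parameter must be assigned $1$ by $\bm{f}$, forcing $w_b = w^\ast$. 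As distinct branches give distinct values of $W$, only the branch with $W = w^\ast$ can yield $\bm{f}$-consistent subcircuits, so all elements of $S_{\bm{f}}(\pi_k, \alpha_{-j})$ choose that single branch.

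The main obstacle is the first step: justifying that $\parentsvalue_i$ is fixed by $\alpha_{-j}$ rather than by the suffix $\alpha_j$. This is exactly where the ordering constraints are indispensable — without them a parent of $W$ could be split below $\pi_k$, allowing two $\bm{f}$-consistent completions to disagree on $W$'s parents and thus legitimately take different branches. I would therefore make the descendant/non-descendant bookkeeping explicit, using that no node of $\bm{t}'$ is a descendant of another so that non-descendants of $\pi_k$ necessarily fall within $\alpha_{-j}$, and separately dispatch the degenerate case where a parent is never split along the chosen subcircuit.
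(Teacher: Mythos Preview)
Your proposal is correct and follows essentially the same argument as the paper: use the ordering constraint to conclude that no descendant of $\pi_k$ splits on any parent of $W$, hence the parent instantiation $\parentsvalue_i$ is fixed by $\alpha_{-j}$, and then $\bm{f}$-consistency forces the unique branch with $W = f_i(\parentsvalue_i)$. The paper's proof is terser (it simply observes that subcircuits in $S(\pi_k,\alpha_{-j})$ differ only on the suffix from $\pi_k$ and therefore share the same parent indicators), while you spell out more explicitly where the parent-splitting nodes must lie and handle the degenerate no-split case; but the substance is identical.
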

    
    \begin{proof}
    Here we use our crucial assumption of the AC ordering constraints, namely, that no node that is a descendant of $\pi_k$ splits on any variable in $\bm{U} = pa(W)$. Since subcircuits in $S(\pi_k, \alpha_{-j})$ differ only on their suffix from $\pi_k$, they all share the same indicators for variables in $pa(W)$, corresponding to an instantiation $\bm{u}$. Suppose $f_W(\bm{u}) = w$. Then, any parameter $\theta_{W = w'|\bm{U} = \bm{u}}$ for $w' \neq w$ is intervened to 0. Since $\pi_k$ splits on $W$, all branches from $\pi_k$ have different indicators for $W$. Thus, only the branch with indicator corresponding to $W = w$ can have $\bm{f}$-consistent subcircuits.
    \end{proof}
    
    This does not preclude, however, that there may be $\bm{f}$-consistent subcircuits with different prefixes $\alpha_{-j}, \beta_{-j}$ which choose different branches. The following Lemma shows that we can find another intervention, $\bm{f}_{adj}$, that does always (effectively) choose the same branch for any prefix, and additionally has greater or equal value.
    
    \begin{lemma} \label{lem:greater_intv}
    For any intervention $\bm{f}$, there exists another intervention $\bm{f}_{adj}$ such that:
    \begin{itemize}
        \item $\fval{\bm{f}_{adj}}(S(\bm{t}', \alpha_0)) \geq \fval{\bm{f}}(S(\bm{t}', \alpha_0))$
        \item All subcircuits $\alpha \in S_{\bm{f}_{adj}}(\bm{t}', \alpha_0)$ with non-zero weight $\coeff(\alpha)$ choose the same branch at $\pi_k$ (call it $b_{adj}$)
    \end{itemize}
    \end{lemma}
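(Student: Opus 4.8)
The plan is to exhibit $\bm{f}_{adj}$ explicitly: leave every coordinate $f_{W'}$ with $W' \neq W$ untouched and replace $f_W$ by the constant function sending \emph{every} instantiation of $pa(W)$ to a single value $w_{b_{adj}}$ of $W$, where the branch $b_{adj}$ of $\pi_k$ is chosen greedily. To set this up I would first decompose each complete subcircuit $\alpha \in S(\bm{t}', \alpha_0)$ into its prefix $\alpha_{-j}$ (the part through $t_1', \dots, t_{j-1}'$, together with $\alpha_0$) and the suffix $\alpha_j \in S_S(\pi_k)$, and use the multiplicativity of both weight and consistency, $\coeff(\alpha) = \coeff(\alpha_{-j})\coeff(\alpha_j)$ and $\fcons{\bm{f}}(\alpha) = \fcons{\bm{f}}(\alpha_{-j})\fcons{\bm{f}}(\alpha_j)$, to separate the split at $\pi_k$ from the rest of the circuit.

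For each branch $b$ of $\pi_k$ (corresponding to $W = w_b$) I would define $h_b$ as the sum, over all branch-$b$ suffixes $\alpha_j$, of $\coeff(\alpha_j)$ times the consistency of the intervenable parameters for variables $W' \neq W$ in $\alpha_j$ --- that is, peeling the single factor $\theta_{W = w_b \mid \bm{u}}$ out of $\fcons{\bm{f}}(\alpha_j)$. The key structural claim is that $h_b$ is \emph{independent of the prefix}: by Lemma \ref{lem:branches} the parent instantiation $\bm{u}$ of $pa(W)$ is fixed by the prefix, no descendant of $\pi_k$ splits on $pa(W)$, and the descendants of $W$ depend on it only through its value $w_b$; moreover $\theta_{W = w_b \mid \bm{u}}$ is set to $1$ in the weight (as $W \in \intvVars$), so it contributes nothing to $h_b$. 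Granting this, the peeled factor becomes exactly $\mathbbm{1}[f_W(\bm{u}(\alpha_{-j})) = w_b]$, and I can rewrite $\fval{\bm{f}}(S(\bm{t}', \alpha_0)) = \sum_b h_b\,C_b$, where $C_b := \sum_{\alpha_{-j}\,:\, f_W(\bm{u}(\alpha_{-j})) = w_b} \coeff(\alpha_{-j})\,\fcons{\bm{f}}(\alpha_{-j})$ is the prefix mass routed to branch $b$.

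To conclude, set $b_{adj} := \argmax_b h_b$ and take $\bm{f}_{adj}$ as above. Because the prefix contains no parameter $\theta_{W \mid \cdot}$ (along any complete subcircuit $W$ is split only at $\pi_k$, by determinism), replacing $f_W$ leaves every prefix weight and consistency unchanged, so the total routed mass $C := \sum_b C_b$ is preserved while every prefix is now sent to $b_{adj}$. Hence $\fval{\bm{f}_{adj}}(S(\bm{t}', \alpha_0)) = h_{b_{adj}}\,C = (\max_b h_b)\sum_b C_b \geq \sum_b h_b C_b = \fval{\bm{f}}(S(\bm{t}', \alpha_0))$, which is the first bullet. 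The second bullet is then immediate: any branch $b' \neq b_{adj}$ carries the factor $\theta_{W = w_{b'} \mid \bm{u}}$, which $\bm{f}_{adj}$ sets to $0$, so every $\bm{f}_{adj}$-consistent subcircuit of non-zero weight must traverse $b_{adj}$.

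The main obstacle is the prefix-independence of $h_b$, and this is exactly where the ordering constraints are essential. Without the guarantee that $pa(W)$ is resolved before $\pi_k$ --- so that $\bm{u}$ is a function of the prefix alone and no node below $\pi_k$ re-splits a parent of $W$ --- the downstream value could depend on the prefix, and the averaging inequality $\sum_b h_b C_b \leq (\max_b h_b)\sum_b C_b$ would no longer bound the adversary, who could route different prefixes to different, individually optimal branches and strictly beat any single-branch strategy. I would therefore take care to spell out why fixing intervenable parameters to $1$ in the weight decouples $h_b$ from $\bm{u}$, and why determinism forbids a second split on $W$ inside the prefix, since both facts are load-bearing for the argument above.
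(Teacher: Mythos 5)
Your construction changes only the coordinate $f_W$ of $\bm{f}$ (making it constant) and leaves every $f_{W'}$, $W' \neq W$, untouched, and its validity rests entirely on the claim that the branch value $h_b$ is prefix-independent; but your defence of that claim only concerns $\Pa(W)$ itself. What it misses is that \emph{other} intervenable variables $W'$ split \emph{inside} the suffix of $\pi_k$ may have context variables that are split in the \emph{varying} part of the prefix, namely inside the suffixes $S_S(t_i')$ of the other, incomparable nodes $t_i' \in \bm{t}'$. The ordering constraints do not exclude this: they only forbid splits of $\Pa_{\graph'}(W')$ from being \emph{descendants} of splits of $W'$; incomparable placement is allowed. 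In that situation the consistency of a $\pi_k$-suffix depends on the prefix through $f_{W'}$, so $h_b$ is prefix-dependent and the averaging inequality $(\max_b h_b)\sum_b C_b \geq \sum_b h_b C_b$ no longer bounds the adversary. Concretely: take three binary variables $A, W, W'$, mutually independent in $\bn$ (the prediction node $\prediction$ plays no role), the structural intervention set with $\intvVars = \{W, W'\}$, $\context(W) = \{A\}$, $\context(W') = \{A, W\}$, and the AC whose root is a $\times$-node with two children: a $+$-node $t_1'$ splitting $A$, and a $+$-node $\pi_k$ splitting $W$ whose two branches share a common $+$-node splitting $W'$. This AC computes exactly the network polynomial and satisfies every ordering constraint of this intervention set ($t_1'$ is incomparable to $\pi_k$; $\pi_k$ is an ancestor of the $W'$-split). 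Take $\bm{t}' = \{t_1', \pi_k\}$, the unique root-only prefix $\alpha_0$, evidence $\evidence = \{W' = 1\}$, $\theta_{A=0}=\theta_{A=1}=1/2$, and the intervention $f_W(a) = a$, $f_{W'}(a, w) = \mathbbm{1}[a = w]$. Under $\bm{f}$, the subcircuit through $A=a$, $W=a$, $W'=1$ is $\bm{f}$-consistent with weight $\theta_{A=a}$, so $\fval{\bm{f}}(S(\bm{t}', \alpha_0)) = \theta_{A=0}+\theta_{A=1} = 1$. If you keep $f_{W'}$ and set $f_W \equiv c$, a consistent subcircuit with non-zero weight needs $W'=1$ and $f_{W'}(A, c)=1$, i.e.\ $A=c$, so $\fval{\bm{f}_{adj}}(S(\bm{t}', \alpha_0)) = \theta_{A=c} = 1/2 < 1$ for \emph{either} constant $c$: your $\bm{f}_{adj}$ strictly decreases the value, and the first bullet fails.

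This is a gap in the approach, not in the write-up: in the example above, any valid $\bm{f}_{adj}$ must also modify $f_{W'}$ (e.g.\ set $f_{W'}(\cdot, c) \equiv 1$), because forcing a single branch at $\pi_k$ requires jointly re-optimizing the downstream coordinates of the intervention, not just $f_W$. That is precisely what the paper's proof does and why it is more involved: the $\texttt{upgrade}$ operation selects an optimal \emph{suffix set} $S_j^*$, which encodes the joint behaviour of \emph{all} intervenable variables below $\pi_k$ rather than merely the branch taken by $W$; it pairs $S_j^*$ with every $\bm{f}$-consistent prefix, proves by a case analysis on conflicting indicators that this combined family of subcircuits is realizable by a single intervention $\bm{f}_{up}$, and iterates this monotone operation to a fixed point $\bm{f}_{adj}$. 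Your argument is essentially sound for purely \emph{parametric} intervention sets, where every intervenable variable split inside the suffix carries its own parameter leaves $\theta_{w''|\parentsvalue''}$ whose indices pin its consistency locally, so that $h_b$ really is prefix-independent (indeed, there the polynomial identity even forces all prefixes reaching $\pi_k$ to share the same parent instantiation). But the lemma is invoked for structural intervention sets as well, and that is exactly where its content --- and the failure of your construction --- lies.
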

    
    \begin{proof}
    
    We will define a non-constructive operation $\texttt{upgrade}$, which takes as input an intervention $\bm{f}$, and outputs an intervention $\bm{f}_{up}$.
    
    Consider all subcircuits $\alpha = (\alpha_0, \alpha_1, ..., \alpha_{j - 1}, \alpha_j) \in S_{\bm{f}}(\pi_k, \alpha_{-j})$. Define $S_{j, \bm{f}}(\bm{t}', \alpha_{-j}) \subseteq S_S(t'_j)$ as follows:
    \begin{equation*}
        S_{j, \bm{f}}(\bm{t}', \alpha_{-j}) = \{\alpha_j \in S_S(t_j'): \alpha \in S_{\bm{f}}(\pi_k, \alpha_{-j}) \}
    \end{equation*}
    That, is for a given prefix $\alpha_{-j}$ and $\bm{f}$, this is the set of suffix partial subcircuits from $\pi_k$ which are $\bm{f}$-consistent. 
    
    The "optimal set" of suffixes over $\bm{f}$ and $\alpha_j$ is defined to be that which attains the greatest combined weight, that is, $\max_{\bm{f}} \max_{\alpha_{-j}} \coeff(S_{j, \bm{f}}(\bm{t}', \alpha_{-j}))$. In general the optimal set will not be unique; we use $\bm{S}_j^{*}$ to denote the set of optimal suffix sets. We pick (arbitrarily) some $S_j^{*} \in \bm{S}_j^{*}$, and denote the intervention $\bm{f}$ which attains this $\bm{f}^{*}$, and the prefix which attains this $\alpha_{-j}^{*}$.
    
     Now, we will define a set of complete subcircuits $T_{\bm{f}}(\bm{t}', \alpha_0)$ by taking all $\bm{f}$-consistent prefixes $\alpha_{-j}$, and taking the product set with the set of suffixes $S_j^{*}$. Formally:
    \begin{equation}
        T_{\bm{f}}(\bm{t}', \alpha_0) = \{(\alpha_{-j}, \alpha_j): \alpha_{-j} \in S_{-j, \bm{f}}(\bm{t}'), \alpha_j \in S_j^{*}\}
    \end{equation}
    
    Intuitvely, we are "increasing the weight" of the $\bm{f}$-consistent subcircuits of $S(\bm{t}', \alpha_0)$, while also ensuring consistent subcircuits all choose the same branch (that chosen by $S_j^{*}$). However, we cannot be sure that $T_{\bm{f}}(\bm{t}', \alpha_0)$ actually corresponds to the $\bm{f}_{up}$-consistent subcircuits of any allowed intervention $\bm{f}_{up}$. We will now show that there exists a $\bm{f}_{up}$ such that all of these subcircuits are $\bm{f}_{up}$-consistent. That is, $T_{\bm{f}}(\bm{t}', \alpha_0) \subseteq  S_{\bm{f}_{up}}(\bm{t}', \alpha_0)$.
    
   Suppose for contradiction this was not the case. Then there must exist two subcircuits $\alpha, \beta \in T_{\bm{f}}(\bm{t}', \alpha_0)$ which are contradictory, that is, for some $W \in \bm{W}$ and its parents $\bm{U} = pa(W)$, $\alpha, \beta$ must contain indicators for the same value of $\bm{U} = \bm{u}$, but indicators $\lambda_{W = w_\alpha}, \lambda_{W = w_\beta}$ for different values of $W$ respectively (that is, $w_\alpha \neq w_\beta$). 
    
    We can write $\alpha = (\alpha_{-j}, \alpha_j)$ and $\beta = (\beta_{-j}, \beta_j)$. Now, we consider three cases according to where the contradictory indicators reside:
    \begin{itemize}
        \item \textbf{If $\lambda_{W = w_\alpha} \in \alpha_{-j}, \lambda_{W = w_\beta} \in \beta_{-j}$}
        
        Recall that by definition of $T_{\bm{f}}(\bm{t}', \alpha_0)$, $\alpha_{-j}, \beta_{-j} \in S_{-j, \bm{f}}(\bm{t}')$, that is, the set of $\bm{f}$-consistent prefixes. Then there must exist $\alpha_j', \beta_j' \in S(t_j')$ such that $\alpha' = (\alpha_{-j}, \alpha_j')$ and $\beta' = (\beta_{-j}, \beta_j')$ are both $\bm{f}$-consistent. But this is a contradiction as two complete subcircuits with conflicting indicators $\lambda_{W = w_\alpha}, \lambda_{W = w_\beta}$ cannot simultaneously be consistent for any $\bm{f}$.
        \item \textbf{If $\lambda_{W = w_\alpha} \in \alpha_{j}, \lambda_{W = w_\beta} \in \beta_{j}$}
        
        Recall that by definition of $T_{\bm{f}}(\bm{t}', \alpha_0)$, $\alpha_j, \beta_j \in S_j^{*}$, that is, they both belong to $S_{j, \bm{f}^{*}}(\bm{t}', \alpha_{-j}^{*})$. In other words, $\alpha^{*} = (\alpha_{-j}^*, \alpha_j)$ and $\beta^{*} = (\alpha_{-j}^{*}, \beta_j)$ are both $\bm{f}^{*}$-consistent. But again this is a contradiction, as they contain conflicting indicators $\lambda_{W = w_\alpha}, \lambda_{W = w_\beta}$.
        
        \item \textbf{Otherwise}
        
        If neither of the above cases hold, then either $\lambda_{W = w_\alpha} \in \alpha_{-j}, \lambda_{W = w_\beta} \in \beta_{j}$ or $\lambda_{W = w_\alpha} \in \alpha_{j}, \lambda_{W = w_\beta} \in \beta_{-j}$. Without loss of generality, let us assume the former. Then consider the subcircuit $\alpha' = (\alpha_{-j}, \beta_j)$. This single complete subcircuit contains both indicators $\lambda_{W = w_\alpha}, \lambda_{W = w_\beta}$, which is a contradiction as no term/complete subcircuit can do so.
    \end{itemize}
    
    Thus, there exists some $\bm{f}_{up}$ such that $T_{\bm{f}}(\bm{t}', \alpha_0) \subseteq  S_{\bm{f}_{up}}(\bm{t}', \alpha_0)$. Our operation $\texttt{upgrade}$ outputs any such $\bm{f}_{up}$.
    $\bm{f}_{up}$ has three important properties:
    \begin{enumerate}
        \item \textbf{Does not remove consistent prefixes} 
        
        $S_{-j, \bm{f}_{up}}(\bm{t'}) \supseteq S_{-j, \bm{f}}(\bm{t'})$
        
        That is, the set of $\bm{f}_{up}$-consistent prefixes subsumes the set of $\bm{f}$-consistent prefixes. This follows from the fact that $T_{\bm{f}}(\bm{t}', \alpha_0) \subseteq  S_{\bm{f}_{up}}(\bm{t}', \alpha_0)$: $T_{\bm{f}}(\bm{t}', \alpha_0)$ contains subcircuits with all prefixes in $S_{-j, \bm{f}}(\bm{t'})$.
        \item \textbf{All suffixes for previous consistent prefixes are in $S_j^{*}$ or have zero weight}
        
         $\forall \alpha_{-j} \in S_{-j, \bm{f}}(\bm{t}'), \forall \alpha_j \in S_{j, \bm{f}_{up}}(\bm{t}', \alpha_{-j})$, either $\coeff(\alpha_j) = 0$ or $\alpha_j \in \bm{S}_j^*$.

         That is, for all $\bm{f}$-consistent prefixes $\alpha_{-j}$, the corresponding $\bm{f}_{up}$-consistent subcircuits $S_{j, \bm{f}_{up}}(\bm{t}', \alpha_{-j})$ are either contained in the optimal suffix set $\bm{S}_j^*$ or have weight 0.
         
         This is shown as follows. From $T_{\bm{f}}(\bm{t}', \alpha_0) \subseteq  S_{\bm{f}_{up}}(\bm{t}', \alpha_0)$, we know that $S_j^* \subseteq S_{j, \bm{f}_{up}}(\bm{t}', \alpha_{-j})$ for any $\alpha_{-j} \in S_{-j, \bm{f}}(\bm{t}')$ (i.e. $\bm{f}$-consistent prefixes). If $S_{j, \bm{f}_{up}}(\bm{t}', \alpha_{-j})$ contains some $\alpha_j \notin S_j^*$, then it must have weight $0$, otherwise $\coeff(S_{j, \bm{f}_{up}}(\bm{t}', \alpha_{-j})) \geq \coeff(S_j^*)$ which is a contradiction as $S_j^*$ is an optimal suffix set.
        \item \textbf{Does not decrease value}
        
        $\fval{\bm{f}_{up}}(S(\bm{t}, \alpha_0)) \geq \fval{\bm{f}}(S(\bm{t}, \alpha_0))$
        
        This is proven as follows:
    \begin{align*}
        &\fval{\bm{f}_{up}}(S(\bm{t}, \alpha_0)) \\
        &= \sum_{\alpha \in S(\bm{t}, \alpha_0)} \coeff(\alpha) \times \fcons{\bm{f}_{up}}(\alpha) \\
        &= \sum_{\alpha \in S_{\bm{f}_{up}}(\bm{t}, \alpha_0)} \coeff(\alpha) \\
        &\geq \sum_{\alpha \in T_{\bm{f}}(\bm{t}, \alpha_0)} \coeff(\alpha) \\
        &= \sum_{\alpha_{-j} \in S_{-j, \bm{f}}(\bm{t}')} \sum_{\alpha_j \in S_j^{*}} \coeff(\alpha) \\
        &= \sum_{\alpha_{-j} \in S_{-j, \bm{f}}(\bm{t}')} \sum_{\alpha_j \in S_j^{*}} \coeff(\alpha_{-j}) \times \coeff(\alpha_j) \\
        &\geq \sum_{\alpha_{-j} \in S_{-j, \bm{f}}(\bm{t}')} \sum_{\alpha \in S_{\bm{f}}(\pi_k, \alpha_{-j})} \coeff(\alpha_{-j}) \times \coeff(\alpha_j) \\
        &= \sum_{\alpha \in S_{\bm{f}}(\bm{t}', \alpha_0)} \coeff(\alpha_{-j}) \times \coeff(\alpha_j) \\
        &= \sum_{\alpha \in S_{\bm{f}}(\bm{t}', \alpha_0)} \coeff(\alpha) \\
        &= \sum_{\alpha \in S(\bm{t}', \alpha_0)} \coeff(\alpha) \times \fcons{\bm{f}}(\alpha) \\
        &= \fval{\bm{f}}(S(\bm{t}, \alpha_0))
    \end{align*}
    \end{enumerate}
    
    Consider applying this operation iteratively, producing a sequence of interventions $\bm{f}_{up}^{(1)}, \bm{f}_{up}^{(3)}, ...$. For each intervention $\bm{f}_{up}^{(i)}$, we can associate two quantities: $m_1^{(i)}$, the number of $\bm{f}_{up}$-consistent prefixes, and among these prefixes, the number $m_2^{(i)}$ ($< m_1^{(i)}$) of prefixes which have corresponding suffixes all in $S_j^{*}$ or having zero weight. Property 1 then tells us that $m_1^{(i + 1)} \geq m_1^{(i)}$, and Property 2 tells us that $m_2^{(i + 1)} \geq m_1^{(i)}$. Since there are a finite number of possible prefixes, we must (in finite time) obtain some $\bm{f}_{up}^{i}$ with $m_1^{(i)} = m_2^{(i)}$, which we will call $\bm{f}_{adj}$. Since all $\bm{f}_{adj}$-consistent prefixes have corresponding suffixes in $S_j^*$ or with weight zero, by Lemma \ref{lem:branches} this intervention indeed has the property that all subcircuits take the same branch or have weight zero, and further by Property 3 the $\bm{f}_{adj}$-value of $S(\bm{t}, \alpha_0)$ is at least $\bm{f}$-value of $S(\bm{t}, \alpha_0)$.
    \end{proof}
    
    The significance of this Lemma is that we can now make use of the inductive hypothesis. As in the case where $\pi_k$ doesn't split on $\intvVar \in \intvVars$, we have $\prefval{\bm{t}'^{(b)}}(S(\bm{t}'^{(b)}, \alpha_0^{(b)})) \geq \fval{\bm{f}_{adj}}(S(\bm{t}'^{(b)}, \alpha_0^{(b)}))$ for any branch $b$. However, since all non-zero weight $\bm{f}_{adj}$-consistent subcircuits follow the same branch,  $\prefval{\bm{t}'^{(b_{adj})}}(S(\bm{t}'^{(b_{adj})}, \alpha_0^{(b_{adj})}))$ is in fact an upper bound on the $\bm{f}_{adj}$-value summed over subcircuits from all branches: $\fval{\bm{f}}(S(\bm{t}', \alpha_0))$.
    
    Now we use this to prove the inductive hypothesis of Lemma \ref{lem:thm_redux} for step $k$. Consider, for each branch $b$ of $\pi_k$, the set of partial subcircuits given by $S_S^{(b)}(\pi_k) := \{(\alpha^{(b)}, \alpha_1^{(b)}, ... \alpha_l^{(b)}): \alpha_a^{(b)} \in S_S(t_a^{(b)})\}$. The UB algorithm chooses the branch with the greatest UB-value \textbf{ignoring consistency of $\alpha^{(b)}$} (i.e. which branch is actually chosen), that is, 
    \begin{equation*}
        v_b := \coeff(\alpha^{(b)}) \times \sum_{\alpha^{(b)}_1, ..., \alpha^{(b)}_l} \prod_{a = 1}^{l} \coeff(\alpha^{(b)}_a) \times \bucons(\alpha^{(b)}_a)
    \end{equation*}  
    Suppose that the UB algorithm chooses branch $b_{UB}$ at $\pi_k$, so that $\bucons(\alpha^{(b_{UB})}) = 1$ and $\bucons(\alpha^{(b)}) = 0$ for all $b \neq b_{UB}$. Then, for a given $\bm{f}$,  we have that:

    \begin{align*}
        &\prefval{\bm{t}'}(S(\bm{t}', \alpha_0)) \\
        &= \sum_{\alpha \in S(\bm{t}', \alpha_0)} \coeff(\alpha) \times \prod_{i = 1}^{j} \bucons(\alpha_i) \\
        &= \sum_{b = 1}^{m} \sum_{\alpha \in S(\bm{t}'^{(b)}, \alpha_0^{(b)})} \coeff(\alpha) \times \prod_{i = 1}^{j} \bucons(\alpha_i) \\
        &= \sum_{b = 1}^{m} \sum_{\alpha \in S(\bm{t}'^{(b)}, \alpha_0^{(b)})} \coeff(\alpha) \times \prod_{i = 1}^{j - 1} \bucons(\alpha_i) \\
        & \hspace{1cm} \times \bucons(\alpha^{(b)}) \times \prod_{a = 1}^{l} \bucons(\alpha^{(b)}_a)\\
        &= \sum_{b = 1}^{m} \sum_{\alpha \in S(\bm{t}'^{(b)}, \alpha_0^{(b)})} \coeff(\alpha_{-j}) \times \prod_{i = 1}^{j - 1} \bucons(\alpha_i) \\
        & \hspace{1cm} \times \bucons(\alpha^{(b)}) \times \coeff(\alpha^{(b)}) \\
        & \hspace{1cm} \times \prod_{a = 1}^{l} \coeff(\alpha^{(b)}_a) \times \bucons(\alpha^{(b)}_a)\\
        &= \sum_{\alpha_1, ..., \alpha_{j - 1}} \coeff(\alpha_{-j}) \times \prod_{i = 1}^{j - 1} \bucons(\alpha_i) \\
        & \hspace{1cm} \times \sum_{b = 1}^{m} \biggl(\bucons(\alpha^{(b)}) \times \coeff(\alpha^{(b)}) \\
        & \hspace{1cm} \times \sum_{\alpha^{(b)}_1, ..., \alpha^{(b)}_l} \prod_{a = 1}^{l} \coeff(\alpha^{(b)}_a) \times \bucons(\alpha^{(b)}_a)\biggr)\\
        &= \sum_{\alpha_1, ..., \alpha_{j - 1}} \coeff(\alpha_{-j}) \times \prod_{i = 1}^{j - 1} \bucons(\alpha_i) \\
        & \hspace{1cm} \times \sum_{b = 1}^{m} v_b \times \bucons(\alpha^{(b)}) \\
        &= \sum_{\alpha_1, ..., \alpha_{j - 1}} \coeff(\alpha_{-j}) \times \prod_{i = 1}^{j - 1} \bucons(\alpha_i) \\
        & \hspace{1cm} \times v_{b_{BU}} \\
        &\geq \sum_{\alpha_1, ..., \alpha_{j - 1}} \coeff(\alpha_{-j}) \times \prod_{i = 1}^{j - 1} \bucons(\alpha_i) \\
        & \hspace{1cm} \times v_{b_{adj}} \\
        &= \sum_{\alpha_1, ..., \alpha_{j - 1}} \coeff(\alpha_{-j}) \times \prod_{i = 1}^{j - 1} \bucons(\alpha_i) \\
        & \hspace{1cm} \times \coeff(\alpha^{(b_{adj})}) \times \\
        & \hspace{1cm} \sum_{\alpha^{(b_{adj})}_1, ..., \alpha^{(b_{adj})}_l} \prod_{a = 1}^{l} \coeff(\alpha^{(b_{adj})}_a) \times \bucons(\alpha^{(b_{adj})}_a)\\
        &= \sum_{\alpha \in S(\bm{t}'^{(b_{adj})}, \alpha_0^{(b_{adj})})} \coeff(\alpha) \times \prod_{i = 1}^{j - 1} \bucons(\alpha_i) \\
        & \hspace{1cm} \times \prod_{a = 1}^{l} \bucons(\alpha^{(b_{adj})}_a)\\
        &= \prefval{\bm{t}'^{(b_{adj})}}(S(\bm{t}'^{(b_{adj})}, \alpha_0^{(b_{adj})}))\\
        &\geq \fval{\bm{f}_{adj}}(S(\bm{t}'^{(b_{adj})}, \alpha_0^{(b_{adj})})) \\
        &= \sum_{\alpha \in S(\bm{t}'^{(b_{adj})}, \alpha_0^{(b_{adj})})} \coeff(\alpha) \times \fcons{\bm{f}_{adj}}(\alpha)\\
        &=\sum_{\alpha \in S(\bm{t}', \alpha_0)} \coeff(\alpha) \times \fcons{\bm{f}_{adj}}(\alpha)\\
        &= \fval{\bm{f}_{adj}}(S(\bm{t}', \alpha_0)) \\
        &\geq \fval{\bm{f}}(S(\bm{t}', \alpha_0)) \\
    \end{align*}

Thus, in both cases (whether $\pi_k$ splits on $W \in \bm{W}$), we have shown that the inductive hypothesis holds at step $k$. Thus Lemma \ref{lem:thm_redux} is proved.
\end{proof}
As noted previously, this Lemma immediately implies Theorem \ref{thm:ub_correctness}; thus we are done.

\section{Proof of Proposition \ref{thm:lb-correctness}}
\thmLB*
\begin{proof}
Because any intervention $I_{P}[\Theta](\mathcal{N})$ gives a lower bound on the maximum probability of the evidence $e$, we know that the algorithm will provide a lower bound on the true value. Assuming the algorithm terminates, it will output a locally maximal intervention; this follows immediately from the termination criterion. It remains to show that the algorithm does indeed terminate. We show this by demonstrating that the algorithm corresponds to best-response dynamics on a potential game, for which it is known that best response dynamics converge to a Nash Equilibrium in potential games \citep{roughgarden2010algorithmic}. 

We construct the mapping as follows: for each node $V \in \intvVars$ and each instantiation $\parentsvalue$ of its parents $\parents$, we construct players $p_{V|\parentsvalue}$, yielding a finite set $\mathcal{P}$. The action set of each player $p_{V|\parentsvalue}$ is the finite set of deterministic distributions over the support of $V$. 
Recall that it suffices to consider only \textit{deterministic} interventions to solve the interventional robustness problem, and so this restriction to a finite action set does not prevent the algorithm from obtaining a globally optimal value.

The payoff to all players under strategy set $\allParam'_{\intvVars} = \{\cpt'_{V|\parentsvalue}| p_{V|\parentsvalue} \in \mathcal{P}\}$ is identical and is equal to $P_{\mathcal{N}[\allParam'_{\intvVars}]}(\evidence)$.
Because the players have identical payoffs, we can define a potential function for this game as this joint payoff function $P_{\mathcal{N}[\allParam'_{\intvVars}]}(\evidence)$, and observe that if a player follows best-response dynamics, it will necessarily increase the value of $p(e)$. We therefore obtain that the procedure will eventually converge to a local optimum. 

The problem of finding a Nash equilibrium lies in the complexity class PPAD \citep{roughgarden2010algorithmic}. In the worst-case, best-response dynamics can take time proportional to the size of the joint strategy space, which in this case will be exponential in the size of the intervention variable set $\intvVars$. While anecdotally we do not observe running times anywhere near the worst case, we further note that because the $\introb$ problem concerns itself with finding a maximum over a set, any element of that set, i.e. any intervention, serves as a witness to give a lower bound on the maximum intervention probability. Thus, the value $v$ of the current intervention is always a lower bound on the $\introb$ value, and so the algorithm can be prematurely halted after any period of time and still yield a valid lower bound.
\end{proof}

\section{Worked example} \label{apx:worked_example}

We will use the simplified car insurance network from Figure~\ref{fig:insurance-repeated} to illustrate the interventional robustness problem, and highlight how existing methods cannot provide accurate guarantees for this problem.

\begin{figure}
    \centering
    \includegraphics[width=\linewidth]{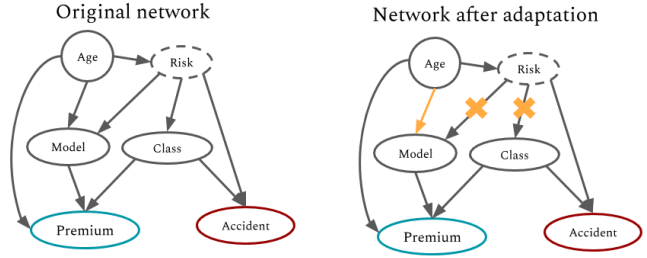}
    \caption{Causal model used by a fictional car insurance company in the example in Section~\ref{apx:worked_example}.}
    \label{fig:insurance-repeated}
\end{figure}
\textbf{Model:} we assume the probabilistic model of Figure \ref{fig:insurance-repeated} with CPTs described in Table~\ref{tab:insurance}. In particular, we assume that the insurance company decides on premiums based on some combination of the driver's age, car model, and history of taking driving courses. We will focus on the following decision rule.
\begin{align*}
    \texttt{low if (class}=1) \land &\texttt{[(age=<25} \land \texttt{model=budget)} \\
    &\lor \texttt{(age=>25 $\land$ model=lux)]}.
\end{align*}

\begin{table*}
\centering
\begin{tabular}{|c|c|c|}
    \hline
    P(\texttt{accident}=1) & P(\texttt{accident}=0) & $\parentsvalue$ \\
    \hline
      0.4 &0.6 &  Model=luxury  Class=1  risky=1\\ 
      	   0.01 & 0.99 &  Model=luxury  Class=1  risky=0\\ 
      	  0.6 & 0.4 &  Model=luxury  Class=0  risky=1\\ 
      	   0.1 & 0.9 &  Model=luxury  Class=0  risky=0\\ 
      	 0.3 & 0.7 &  Model=budget  Class=1  risky=1\\ 
      	   0.05 & 0.95 &  Model=budget  Class=1  risky=0\\ 
      	  0.5& 0.5 &  Model=budget  Class=0  risky=1\\ 
      	   0.3 & 0.7 &   Model=budget  Class=0  risky=0 \\
      	   \hline
\end{tabular}
\begin{tabular}{|c|c|c|}
\hline
$P(C|\parentsvalue)$ & $P(\bar{C}|\parentsvalue)$ & $\parentsvalue$ \\
\hline
 0.2 & 0.8 &risky=1\\
 0.8 & 0.2 & risky=0 \\
 \hline
\end{tabular}
\\
\begin{tabular}{|c|c|c|}
\hline
$P(\texttt{model}=\text{budget}|\parentsvalue)$ & $P(\texttt{model}=\text{luxury}|\parentsvalue)$ & $\parentsvalue$ \\
\hline
0.3& 0.7 & age$=>25$  risky=1\\ 
0.7& 0.3 & age=$>25$  risky=0\\ 
0.8& 0.2 & age=$<25$  risky=1\\ 
0.2& 0.8  & age=$<25$  risky=0\\
\hline
\end{tabular}
\begin{tabular}{|c|c|c|}
\hline
$P($risky=0) & $P($risky=1) & $\parentsvalue$ \\
\hline
 0.3 & 0.7  &    age=$>25$\\
 0.3 & 0.7 & age=$<25$ \\
 \hline
\end{tabular}
\caption{CPTs for the insurance network from Figure~\ref{fig:insurance-intervention}.}
\label{tab:insurance}
\end{table*}
The target probability in this setting will be $P(\evidence)$ with $\evidence = (\texttt{Accident}=1, \texttt{C1}=\texttt{low})$, and the intervention set will be $\intvVars = \{\texttt{model}, \texttt{class}\}$.
The intervention which maximizes $P(\evidence)$ is that in which $\cpt_{\text{model}|\text{age}=>25}' = \mathbbm{1}[\texttt{model}=\texttt{luxury}]$ and $\cpt_{\text{model}|\text{age}=<25}' = \mathbbm{1}[ \texttt{model}=\texttt{budget}]$, and $\cpt_{C|\parentsvalue}' = \mathbbm{1}[C=1]$. We will write these CPTs by $\cpt^*_{w|\parentsvalue}$. Under this intervention, the probability $P_{\bn[\allParam ']}(\evidence) = 0.126$. We set our threshold $\epsilon=0.1$, and so the classifier $\texttt{C1}$ is \textit{not} robust to this intervention set.

There are two critical components to answering the intervention robustness problem: the first is accurate estimation of the probability $P_{\bn[\allParam']}(\evidence)$, and the second is effective search over the intervention space. In what follows, we will show how two approximation methods based on existing methodology fail at each of these components, and therefore fail to effectively answer the intervention robustness problem. This motivates our proposed algorithms.

\textbf{Sensitivity analysis}, when performed exactly, will provide the correct answer to parametric $\introb$. However, existing approaches are computationally intractable for large intervention sets. For example, the method of \citet{ChanDarwiche04} requires differentiating the network polynomial with respect to every combination of parameters from the intervenable CPTs, which scales very poorly with respect to the size of each CPT and number of intervenable variables $|\intvVars|$. As a workaround, we consider a first-order approximation which consists of summing the partial derivatives of each CPT belonging to an intervenable variable $W \in \intvVars$.  This yields the following estimate of $\max_{\allParam '} P_{\bn[\allParam ']}(\evidence)$.
\begin{align*}
   \max_{\allParam '} &P_{\bn[\allParam']}(\evidence)  \approx \sum_{W \in \intvVars} \sum_{\parentsvalue(W)} \max_{w} \partial_{\cpt_{w|\parentsvalue}}P(\evidence) 
   \intertext{I.e. the approximation estimates the effect of parameter changes by computing the effect of single-CPT changes to $P(\evidence)$, and summing these estimates over all CPTs of interest.}
   &= \max_{\texttt{model}} \partial_{\cpt_{\texttt{model}|\texttt{age}>25, \cdot}}P(\evidence) + \max_{\texttt{model}} \partial_{\cpt_{\texttt{model}|\texttt{age}=<25, \cdot}}P(\evidence) \\ &+  \max_{\texttt{class}} \partial_{\cpt_{\texttt{class}|\cdot}}P(\evidence) \\
   &= P_{\bn[\cpt_{\texttt{class}|\parentsvalue}=\mathbbm{1}_{\texttt{class}=1}]}(\evidence) + P_{\bn[\cpt_{\text{model}|\parentsvalue}= \cpt^*_{\text{model}|\parentsvalue} ]}(\evidence) \\
   &= 0.048 + 0.038\\
   &=0.086 < \epsilon = 0.1
\end{align*}

Thus, under the single-parameter SA approximation, we would incorrectly estimate the classifier to be robust to interventions on model and class. 

We could also consider restricting our search over parametric interventions to one over \textbf{do-interventions only}. A do-intervention fixes the value of a variable, in contrast to a parametric intervention which changes its conditional distribution. The appeal of do-interventions is that they induce a much smaller search space; whereas the set of all parametric interventions can be of size $2^{2^{|\parents|}}$ (in a binary BN), the set of possible do-interventions is only of size $2^{|\intvVars|}$. The differential semantics of do-interventions are discussed by \citet{qin2015differential}, who show that it is possible to efficiently compute the effect of a do-intervention on a single variable in time linear in the size of the AC. 

In the CBN in Figure~\ref{fig:insurance-intervention}, $P_{\bn}(\evidence)$ is maximized when $\cpt_{\text{model}=\text{lux}|\text{age}=>25, \cdot }=1$ and $\cpt_{\text{model}=\text{budget}|\text{age}=<25, \cdot} = 1$ (i.e. when young people buy budget cars and old people buy luxury cars, independent of their risk appetite), and when all drivers take driving classes. While the latter change in the distribution can be expressed as a do-intervention, the former cannot. The value of $\texttt{model}$ must be set to either 0 or 1 for \textit{all} drivers. As a result, the worst-case do-intervention sets \texttt{class} to True and \texttt{model} to luxury. Under this intervention $\allParam'$, we see
\begin{equation*}
    P_{\bn[\allParam ']}(\evidence) = 0.0635 < 0.1 = \epsilon
\end{equation*}
and so considering only do-interventions is again not sufficient to answer the intervention robustness problem.

It should be noted that while both of these naive approaches are inspired by recent work which performs efficient inference using arithmetic circuits, it is not fair to call either a competing inference approach to our method, as the works cited developed their tools for different objectives. Rather, the purpose of this example is to highlight that many plausible approximation methods to verify the robustness of a classifier can fail on simple examples.

\section{Additional Circuit Evaluations}
\label{apx:additional-evals}
\subsection{Size of compiled circuits} \label{apx:additional-size}
\begin{table}[htb] 
\begin{tabular}{lllrrr}
\toprule
\textbf{Net} & \textbf{CSize} & \textbf{Ord} & \textbf{TW} & \textbf{AC size} & \makecell{\textbf{Time} \\ \textbf{(s)} }  \\ \midrule
insurance         & 0 (0)                            & N       &     29     & 362983          & 1.2     \\
         & 3 (41)                            & N       &    24      & 167121          & 0.5                         \\ 
         & 3 (41)                            & T       &    31     & 794267         & 4  \\
         & 3 (41)                            & S       &    33     & 1270075         & 8  \\ \midrule
child         & 0 (0)                            & N           &  15 & 4935          & 0.007   \\    
         & 8 (326)                            & N           &    38  & 234914          & 0.7                        \\ 
         & 8 (326)                            & T           &    38 & 1004786         & 1                        \\ \midrule
win95pts         & 0 (0)                            & N        &    18    & 17682          & 0.04  \\
         & 16 (799)                            & N        &    51    & 1210072          & 3                         \\ 
         & 16 (799)                            & T        &    58     & 52266950          & 77                         \\ \midrule
hepar2         & 12 (946)                            & N        &    53    & 8096874          & 49                         \\ 
        & 12 (946)                            & T        &    51     & 123108407          & 73                         \\
        & 12 (946)                            & S        &    51     & 123164181          & 75                         \\ \midrule
andes        & 12 (95)                            & N        &    41     & 24787127          & 272                         \\
        & 12 (95)                            & PT        &    43     & 60865146          & 778                         \\ 
\bottomrule
\end{tabular}
\caption{AC sizes and times (s) for the joint compilations used in the \ubn and \lbn algorithms. Shown are the number of input features $d$ and the sizes of the Boolean circuits representing the classifier (0 indicates no classifier), ordering constraints (none, partial topological, topological, or \struc topological), treewidth of the combined CNF encoding, and size and compilation time. We note that the large increase in network treewidth when adding a classifier is due to the treewidth of the classifier.}
\label{tab:ac_size}
\end{table}
In Table \ref{tab:ac_size}, we show details from joint compilations on five Bayesian networks and decision rules on those networks (further details in Appendix \ref{apx:exp_details}). First, we notice that the actual size of the compiled ACs is much smaller than that given by the worst case bound (which is exponential in treewidth), due to optimizations in the C2D compiler. Second, notice also that the size of the AC does not increase by $2^d$ when adding a decision rule to the classification: for instance, in win95pts, the size increases by a factor of $\sim 400$, while the decision rule has $2^{16} = 65536$. Interestingly, for the insurance network, the AC size actually \textit{decreases} when compiling with a small decision rule; this is likely due to good fortune with the min-fill heuristic. Finally, when we enforce a topological ordering, the size of the compilation increases, but not by more than $\sim 100$. Remarkably, this allows us to upper bound a marginal probability against parametric intervention sets involving any number of intervenable nodes. Our results provide evidence that our methods can scale to fairly large networks and classifiers, including with topological and \struc topological orderings.

The ordering types are defined as follows. Recall that, for the correctness of the upper bounding algorithm, we require $\var_j < \var_i$ for all $\var_j \in \Pa(\var_i)$, and for all $\var_i \in \intvVars$. Partial topological orderings impose these constraints $\var_j < \var_i$. Topological orderings impose the constraints for all $\var_i \in \vars$, rather than just $\var_i \in \intvVars$. This is generally preferred if computationally feasible, as it allows us to compute upper bounds for any parameter intervention set, and further also tends to produce better bounds. However, for the \texttt{andes} network, we found that this was too computationally demanding. \Struc ordering constraints contain topological constraints, and further for every $\var_i \in \intvVars$, we add $V_j < V_i$ for $V_j \in \Pa_{\graph'}(V_i)$, so that we can compute upper bounds on \struc intervention sets. 

\subsection{Tightness of Lower and Upper Bounds}
\label{apx:tightness}
We provide additional results on the tightness of our upper and lower bounds on an expanded set of evaluations, including both false positives and false negatives as evidence, and considering a broader range of networks and intervention sets. Results are detailed in Table~\ref{tab:tightness-expanded}.

\begin{table*}[h]
\centering
\begin{tabular}{llllllllll}
\toprule
\multirow{2}{*}{Network} & \multirow{2}{*}{IntSet} & \multicolumn{4}{c}{False Negatives} & \multicolumn{4}{c}{False Positives} \\ \cmidrule(lr){3-6} \cmidrule(lr){7-10}
&  & \multicolumn{1}{c}{BeforeIntv} & \multicolumn{1}{c}{LBound} & \multicolumn{1}{c}{RBound} & $\Delta$ & \multicolumn{1}{c}{BeforeIntv} & \multicolumn{1}{c}{LBound} & \multicolumn{1}{c}{UBound} & $\Delta $\\
 \midrule
child            & P1                &0.06922   &0.07098 & 0.07098 & \textbf{0} &0.1629              &       0.1947            & 0.1947  &  \textbf{0}      \\ 
            & P2                &0.06922  &0.07325    &0.07329 & \textbf{0.00004} & 0.1629              &       0.2762            & 0.3069    & \textbf{0.0307}         \\ 
           & P3                 &0.06922  &0.06978   &0.07127 & \textbf{0.00149} & 0.1629              &       0.1717            & 0.2009     & \textbf{0.0292}    \\ \midrule
insurance       & P1                & 0.02453             &       0.1181            & 0.1276 & \textbf{0.0095}  & 0.1981  &0.4157 &0.4161  & \textbf{0.0004}     \\ 
        & P2                & 0.02453             &       0.3275            & 0.3433  & \textbf{0.0158} & 0.1981     & 0.9123 & 0.9130 & \textbf{0.0007}   \\
       & P3                & 0.02453             &       0.02453            & 0.02453 & \textbf{0}   & 0.1981     & 0.1981   & 0.1981  &\textbf{0}  \\ 
        & S1                & 0.02453             &       0.1181            & 0.1297 & \textbf{0.0116}  &   0.1981 &  0.4157 &  0.4168 & \textbf{0.0011}  \\ \midrule
win95pts         & P1                & 0.2106              &       0.2111          & 0.2111  & \textbf{0}    & 0.005170    & 0.005416  &  0.005445  & \textbf{0.000029}  \\ 
         & P2                & 0.2106          &       0.2163            & 0.2191  & \textbf{0.0028}       & 0.005170 & 0.007200 &  0.008665 & \textbf{0.001465}   \\

        & P3                & 0.2106              &       0.2972           & 0.2985 & \textbf{0.0013} & 0.005170  &0.01430     &0.01445   & \textbf{0.00015}      \\

       & P4                & 0.2106              &       0.2109            & 0.2117 & \textbf{0.0008}  & 0.005170  & 0.05494     & 0.05674    & \textbf{0.00180}        \\
\midrule
hepar2         & P1                & 0.03673   & 0.09445     & 0.09445 & \textbf{0} & 0.2360              &       0.2408           & 0.2408   & \textbf{0}        \\

        & P2                & 0.03673 & 0.09585     & 0.09585 & \textbf{0} & 0.2360              &       0.9041           & 0.9041   & \textbf{0}         \\

        & P3               & 0.03673   & 0.1029    & 0.1029 & \textbf{0} & 0.2360              &       0.43758          & 0.43773    & \textbf{0.00015}        \\

        &  S1             & 0.03673  &  0.1029    & 0.1029 & \textbf{0} & 0.2360              &       0.43758          & 0.43793 & \textbf{0.00035}         \\
\midrule
andes         &  P1                & 0.001400             &       0.001400       & 0.002540  & \textbf{0.001140}   & 0&  0     & 0    & \textbf{0}   \\

         &  P2                & 0.001400             &       0.001400          & 0.002656  &\textbf{0.001256}   &0 &  0     & 0   & \textbf{0}    \\
\bottomrule

\end{tabular}
\caption{Analysis of the tightness of bounds (on probability of false negatives/false positives) produced by the \ubn and \lbn algorithms. For each network, we have different intervention sets. We show the probability of false negatives/false positives in the original Bayesian network (BeforeIntv), along with lower and upper bounds under each intervention set.
}
\label{tab:tightness-expanded}
\end{table*}

\section{Experimental Details} \label{apx:exp_details}

In our experiments, we use two types of classifiers $\df$. For the \texttt{child, win95pts, andes} networks, we use Bayesian network classifiers (BNC) trained on the respective networks, in order to predict a root node of the BN. For the \texttt{insurance, hepar2} network, since the chosen prediction targets are not root nodes of the networks, we used a Na\"ive Bayes classifier where the conditional probs have been extracted from BN. For \texttt{insurance, hepar2}, since the prediction targets $Y$ are not root nodes, we can test interventions which change the distribution on $Y$. Decision functions are obtained from the classifiers by applying a threshold.

To obtain Boolean circuits for the decision functions, we use a BNC-to-ODD compiler \citep{ShihEtAl19}, and then convert the ODD into a Boolean circuit (NNF). 

\subsubsection{Classifiers}

\begin{enumerate}
    \item \texttt{child}
    \begin{itemize}
        \item Target: \texttt{BirthAsphyxia}
        \item Features: \texttt{LVHreport, GruntingReport, XrayReport, LowerBodyO2, CardiacMixing, Age, RUQO2, CO2Report}
    \end{itemize}
    \item \texttt{insurance}
    \begin{itemize}
        \item Target: \texttt{MedCost}
        \item Features: \texttt{Age, MakeModel, DrivHist}
    \end{itemize}
    \item \texttt{win95pts}
    \begin{itemize}
        \item Target: \texttt{PTROFFLINE}
        \item Features: \texttt{Problem3, Problem2, PrtStatMem, PrtStatToner, Problem6, PrtFile, PrtStatOff, PrtIcon, Problem1, REPEAT, HrglssDrtnAftrPrnt, TstpsTxt, PSERRMEM, Problem5, Problem4, PrtStatPaper}
    \end{itemize}
    \item \texttt{hepar2}
    \begin{itemize}
        \item Target: \texttt{Steatosis}
        \item Features: \texttt{alt, triglycerides, ggtp, jaundice, alcohol, pain\_ruq, cholesterol, ESR, hepatalgia, ast, nausea, fat}
    \end{itemize}
    \item \texttt{Andes}
    \begin{itemize}
        \item Target: \texttt{TRY12}
        \item Features: \texttt{TRY15, SNode\_14, SNode\_19, TRY13, TRY14, GOAL\_99, SNode\_46, SNode\_31, SNode\_155, SNode\_123, SNode\_40, TRY26} 
    \end{itemize}
\end{enumerate}

\subsubsection{Intervention Sets}

\begin{enumerate}
    \item \texttt{child}
    \begin{itemize}
        \item P1: $\intvVars = \{\texttt{GruntingReport}\}$
        \item P2: $\intvVars = \{\texttt{ChestXray, Sick, Grunting}\}$
        \item P3: $\intvVars = \{\texttt{LowerBodyO2, RUQO2, CO2Report}\}$
    \end{itemize} 
    \item \texttt{insurance}
    \begin{itemize}
        \item P1: $\intvVars = \{\texttt{MakeModel, Cushioning}\}$
        \item P2: $\intvVars = \{$\texttt{SocioEcon, RiskAversion, Theft, Mileage, MakeModel, Cushioning}$\}$
        \item P3: $\intvVars = \{$\texttt{ThisCarDam, AntiTheft, OtherCarCost}$\}$
        \item C1: $\intvVars = \{\texttt{MakeModel, Cushioning}\}$, $C(\texttt{MakeModel}) = \{$\texttt{Age, AntiTheft, DrivHist, DrivingSkill, GoodStudent, HomeBase, Mileage, OtherCar, RiskAversion, SeniorTrain, SocioEcon, VehicleYear}, $\}$, 
        
        $C(\texttt{MakeModel}) = \{$\texttt{Age, Airbag, AntiTheft, Antilock, CarValue, DrivHist, DrivQuality, DrivingSkill, GoodStudent, HomeBase, MakeModel, Mileage, OtherCar, RiskAversion, RuggedAuto, SeniorTrain, SocioEcon, Theft, VehicleYear}, $\}$
    \end{itemize} 
    \item \texttt{win95pts}
    \begin{itemize}
        \item P1: $\intvVars = \{\texttt{NetOK, NetPrint}\}$
        \item P2: $\intvVars = \{$\texttt{AvlblVrtlMmry, DSApplctn, DskLocal, HrglssDrtnAftrPrnt, NtSpd, DeskPrntSpd, EPSGrphc, PSGRAPHIC, FllCrrptdBffr} $\}$
        \item P3: $\intvVars = \{\texttt{REPEAT}\}$
        \item P4: $\intvVars = \{$\texttt{GDIIN, PC2PRT, PSGRAPHIC, DS\_LCLOK, PSERRMEM, EMFOK, DS\_NTOK}$\}$
    \end{itemize} 
    \item \texttt{hepar2}
    \begin{itemize}
        \item P1: $\intvVars = \{\texttt{alcoholism}\}$
        \item P2: $\intvVars = \{\texttt{alcoholism}\}$
        \item P3: $\intvVars = \{$\texttt{alcoholism, hepatomegaly, alcohol, itching, fatigue, consciousness, hospital}$\}$
        \item C1: $\intvVars = \{$\texttt{alcoholism, hepatomegaly, alcohol, itching, fatigue, consciousness, hospital}$\}$
        
        The context function $C$ specifies additional parents on top of those present in the original BN. All $\intvVars$ have \texttt{age, sex, alcoholism} as additional parents. Further, \texttt{fatigue, consiciousness, hospital} have \texttt{anorexia} as additional parent. 
    \end{itemize}
    \item \texttt{andes}
    \begin{itemize}
        \item P1: $\intvVars = \{$\texttt{GOAL\_49, GOAL\_61, SNode\_26, SNode\_37}$\}$
        \item P2: $\intvVars = \{$\texttt{GOAL\_49, GOAL\_61, SNode\_26, SNode\_37, GOAL\_57, GOAL\_149, GOAL\_153, SNode\_74} $\}$
    \end{itemize} 
\end{enumerate}

\end{document}